\documentclass{article}
\pdfoutput=1
% if you need to pass options to natbib, use, e.g.:
%     \PassOptionsToPackage{numbers, compress}{natbib}
% before loading neurips_2023

% ready for submission
%\usepackage{neurips_2023}

% to compile a preprint version, e.g., for submission to arXiv, add add the
% [preprint] option:
%     \usepackage[preprint]{neurips_2023}

% to compile a camera-ready version, add the [final] option, e.g.:
%     \usepackage[final]{neurips_2023}

% to avoid loading the natbib package, add option nonatbib:
\usepackage[final,nonatbib]{neurips_2023}

\usepackage[utf8]{inputenc} % allow utf-8 input
\usepackage[T1]{fontenc}    % use 8-bit T1 fonts
\usepackage{url}            % simple URL typesetting
\usepackage{hyperref} 
\usepackage{booktabs}       % professional-quality tables
\usepackage{amsfonts}       % blackboard math symbols
\usepackage{nicefrac}       % compact symbols for 1/2, etc.
\usepackage{microtype}      % microtypography
\usepackage{xcolor}         % colors
\usepackage{fancyhdr}
\usepackage{wrapfig}

% latex math commands
% Vladimir Feinberg
% \vx notation for vectors from Goodfellow
% https://github.com/goodfeli/dlbook_exercises

% alphabet templates
% abcdefghijklmnopqrstuvwxyz
% ABCDEFGHIJKLMNOPQRSTUVWXYZ

% fonts, math, and layout 
\usepackage{bbm}
\usepackage{enumerate}
\usepackage{amsmath}
\usepackage{amsthm}
\usepackage{makecell}
\usepackage{amssymb}
\usepackage{amsfonts}
\usepackage{mathrsfs}
\usepackage{mathtools}
\usepackage[all]{xy}

% include graphics with \includegraphics
\usepackage{graphicx}
\usepackage{caption}

% \nicefrac{x}{y} gives a diagonal fraction bar x/y
\usepackage{nicefrac}

% brackets, norms, cardinalities
\newcommand{\pa}[1]{ \left({#1}\right) }
\newcommand{\ha}[1]{ \left[{#1}\right] }
\newcommand{\ca}[1]{ \left\{{#1}\right\} }
\newcommand{\inner}[1]{\left\langle #1 \right\rangle}

\newcommand{\norm}[1]{\left\lVert #1 \right\rVert}

% math vectors

% math function arrows, misc binary math ops

% common math sets

\newcommand{\R}{\mathbb{R}}

% limits

% mathcal

\newcommand{\mcH}{\mathcal{H}}

\newcommand{\mcK}{\mathcal{K}}

% measure theory, distributions

\renewcommand{\d}[1]{\mathop{\mathrm{d} #1 }}
\DeclarePairedDelimiterX{\infdivx}[2]{(}{)}{ #1\;\delimsize\|\;#2 }

% distributions
\makeatletter
\newcommand{\distas}[1]{\mathbin{\overset{#1}{\kern\z@\sim}}}%
\makeatother

 % need to overwrite stupid paragraph symbol
\DeclareMathOperator\mathExp{\mathbb{E}}

\newcommand{\E}{\mathExp}

% \set{from set}{condition} with set-builder notation
% conditional expectation is analogous

% linear-\algebra related

 % convex hull

\DeclareMathOperator{\diag}{diag}
\DeclareMathOperator{\tr}{tr}
\DeclareMathOperator{\rank}{rank}
\DeclareMathOperator*{\argmin}{argmin}

% complex analysis

% typical numerical operators

% graphs

% constants
\renewcommand{\d}[1]{\mathop{\mathrm{d} #1 }}

% \bigtimes: large indexed cross product
\makeatletter
\DeclareFontFamily{U}  {MnSymbolF}{}
\DeclareSymbolFont{symbolsMN}{U}{MnSymbolF}{m}{n}
\SetSymbolFont{symbolsMN}{bold}{U}{MnSymbolF}{b}{n}
\DeclareFontShape{U}{MnSymbolF}{m}{n}{
    <-6>  MnSymbolF5
   <6-7>  MnSymbolF6
   <7-8>  MnSymbolF7
   <8-9>  MnSymbolF8
   <9-10> MnSymbolF9
  <10-12> MnSymbolF10
  <12->   MnSymbolF12}{}
\DeclareFontShape{U}{MnSymbolF}{b}{n}{
    <-6>  MnSymbolF-Bold5
   <6-7>  MnSymbolF-Bold6
   <7-8>  MnSymbolF-Bold7
   <8-9>  MnSymbolF-Bold8
   <9-10> MnSymbolF-Bold9
  <10-12> MnSymbolF-Bold10
  <12->   MnSymbolF-Bold12}{}
\DeclareMathSymbol{\tbigtimes}{\mathop}{symbolsMN}{2}
\newcommand*{\bigtimes}{%
  \DOTSB
  \tbigtimes
  \slimits@ 
}
\makeatother

% category theory arguments
% See https://tex.stackexchange.com/questions/356873

% restriction of a function to a domain
\newcommand\restr[2]{{% we make the whole thing an ordinary symbol
  \left.\kern-\nulldelimiterspace % automatically resize the bar with \right
  #1 % the function
  \vphantom{\big|} % pretend it's a little taller at normal size
  \right|_{#2} % this is the delimiter
  }}
% packages
\usepackage{amsmath,amsfonts,amssymb}
\usepackage{mathtools}
\usepackage{amsthm} % better to load after ams math
\usepackage{latexsym}
\usepackage{relsize}

\usepackage[capitalise]{cleveref}
\usepackage{xcolor}
\usepackage{dsfont}

% VLADNOTE:
%\newcommand{\F}{\mathbb{F}}

% VLADNOTE:
%\DeclareMathOperator{\tr}{tr}
%\newcommand{\tr}{\text{Tr}}

% VLADNOTE:
% \DeclareMathOperator{\diag}{diag}
%\newcommand{\diag}{\mbox{diag}}

\newcommand{\defeq}{\stackrel{\text{def}}{=}}

% VLADNOTE -- \mathbb{R} instead
%\def\R{{\mathcal R}}

% notation here

% notation

%\newcommand{\R}{\mathcal{R}}
%\newcommand{\C}{\mathcal{C}}
% VLADNOTE:
% \newcommand{\Q}{\mathbb{Q}}
%\newcommand{\Q}{\mathcal{Q}}

\newcommand{\A}{\mathcal{A}}

%VLADNOTE:
% \DeclareMathOperator{\rank}{rank}
%\newcommand{\rank}{\mathrm{rank}}

\newcommand{\K}{\ensuremath{\mathcal K}}

\def\regret{\mbox{{Regret}}}

% comments, etc
\newcommand{\ignore}[1]{}

\newcommand{\xc}[1]{\noindent{\textcolor{blue}{\{{\bf XC:} {\em #1}\}}}}
\newcommand{\vlad}[1]{\noindent{\textcolor{purple}{\{{\bf VF:} {\em #1} \}}}}

\usepackage{wasysym}

% for emails

%% JMLR theorems adjustments
%\setlength{\topsep}{\medskipamount}
%
%\newtheoremstyle{jmlrthm} % name
%	{\topsep}		% Space above
%	{\topsep}		% Space below
%	{\itshape}	% Body font
%	{}						% Indent amount
%	{\bfseries}	% Theorem head font
%	{}						% Punctuation after theorem head
%	{0.33em}		% Space after theorem head
%	{}  					% Theorem head spec (can be left empty, meaning ?normal?)
%
%\newtheoremstyle{jmlrdef} % name
%	{\topsep}		% Space above
%	{\topsep}		% Space below
%	{}						% Body font
%	{}						% Indent amount
%	{\bfseries}	% Theorem head font
%	{}						% Punctuation after theorem head
%	{.33em}			% Space after theorem head
%	{}  					% Theorem head spec (can be left empty, meaning ?normal?)
%
%\renewcommand\qedsymbol{\BlackBox}
%%\renewenvironment{proof}[1][Proof]{\par\noindent{\bfseries\upshape #1\ }}{\hfill\BlackBox\\[\medskipamount]}
%\renewenvironment{proof}[1][Proof]{\par\noindent{\bfseries\upshape #1\ }}{\hfill\BlackBox\\}

% theorems
\theoremstyle{plain}
\newtheorem{theorem}{Theorem}
\newtheorem{lemma}[theorem]{Lemma}
\newtheorem{corollary}[theorem]{Corollary}

\newtheorem{observation}[theorem]{Observation}
\newtheorem{assumption}{Assumption}

\newtheorem*{theorem*}{Theorem}
\newtheorem*{lemma*}{Lemma}
\newtheorem*{corollary*}{Corollary}
\newtheorem*{proposition*}{Proposition}
\newtheorem*{claim*}{Claim}
\newtheorem*{fact*}{Fact}
\newtheorem*{observation*}{Observation}
\newtheorem*{assumption*}{Assumption}

\theoremstyle{definition}
\newtheorem{definition}[theorem]{Definition}
\newtheorem{remark}[theorem]{Remark}

\newtheorem*{definition*}{Definition}
\newtheorem*{remark*}{Remark}
\newtheorem*{example*}{Example}

% repeated theorems
 \theoremstyle{plain}
\newtheorem*{theoremaux}{\theoremauxref}
\gdef\theoremauxref{1}

%\newtheoremstyle{theoremrep}
%	{\topsep}{\topsep}              %%% space between body and thm
%	{\itshape}                      %%% Thm body font
%	{}                              %%% Indent amount (empty = no indent)
%	{\bfseries}                     %%% Thm head font
%	{}                             %%% Punctuation after thm head
%	{0.33em}                             %%% Space after thm head
%	{\cref{#3}}
%
%\theoremstyle{theoremrep}
%\newtheorem{repthm}{dontcare}

% % old refs - use \cref{} instead!
% \newcommand{\secref}[1]{Section~\ref{#1}}
%\newcommand{\secref}[1]{Sec.~\ref{#1}}
%\newcommand{\figref}[1]{Figure~\ref{#1}}
%\renewcommand{\eqref}[1]{Eq.~(\ref{#1})}
%\newcommand{\lemref}[1]{Lemma~\ref{#1}}

% \newcommand{\thmref}[1]{Theorem~\ref{#1}}
% \newcommand{\appref}[1]{Appendix~\ref{#1}}
% \newcommand{\algref}[1]{Algorithm~\ref{#1}}

% style defs
%\renewcommand{\le}{\leqslant}
%\renewcommand{\ge}{\geqslant}
%\renewcommand{\leq}{\le}
%\renewcommand{\geq}{\ge}

% bold serif
\DeclareMathAlphabet{\mathbfsf}{\encodingdefault}{\sfdefault}{bx}{n}

% operators

% VLADNOTE
%\DeclareMathOperator*{\argmin}{arg\,min}
% VLADNOTE
%\DeclareMathOperator*{\argmax}{arg\,max}
% VLADNOTE
%\DeclareMathOperator*{\conv}{conv}
% VLADNOTE
% \DeclareMathOperator*{\supp}{supp}
%\DeclareMathOperator*{\poly}{poly}
\let\Pr\relax
\DeclareMathOperator{\Pr}{\mathbb{P}}

% cases

% macros

%\newcommand{\set}[1]{\{#1\}}
% VLADNOTE
%\newcommand{\abs}[1]{|#1|}
% VLADNOTE
%\newcommand{\norm}[1]{\|#1\|}

%\renewcommand{\O}{\mathcal{O}}

% VLADNOTE
%\newcommand{\E}{\mathbb{E}}

% VLADNOTE
%\newcommand{\var}{\mathrm{Var}}

\newcommand{\trace}{\mathrm{tr}}

\newcommand{\reals}{\mathbb{R}}
\newcommand{\eps}{\varepsilon}
\newcommand{\ep}{\varepsilon}

\renewcommand{\leq}{~\le~}

\let\oldtfrac\tfrac
\renewcommand{\tfrac}[2]{\smash{\oldtfrac{#1}{#2}}}

\let\nablaold\nabla
\renewcommand{\nabla}{\nablaold\mkern-2.5mu}
%\let\cdotold\cdot
%\renewcommand{\cdot}{\mkern-2.5mu\cdotold\mkern-2.5mu}
%\newcommand{\param}{\cdotold}

%\newcommand{\diag}{\mathrm{diag}}

% VLADNOTE
%\newcommand{\mat}[1]{\bm{#1}}

% VLADNOTE
%\newcommand{\frob}[1]{\|#1\|_2}

%\newcommand{\one}[1]{\mathds{1}[#1]}

%\newcommand{\rc}[1]{R\left(#1\right)}

\usepackage{amsthm}
\usepackage{etoolbox}
\usepackage{algorithm, amsmath}
\usepackage{tablefootnote}
\usepackage{graphicx}
\usepackage{placeins}
\usepackage{subfigure}
\usepackage{hyperref} 
\usepackage[numbers,square]{natbib}

% Attempt to make hyperref and algorithmic work together better:

% https://tex.stackexchange.com/a/542360
% save external OUTPUT / OR (which algorithmic crudely overwrites)

\let\classOUTPUT\OUTPUT
\let\OUTPUT\relax
\usepackage{algorithmic}
% save the new meaning 

\let\algoOUTPUT\OUTPUT

\let\OUTPUT\classOUTPUT
\AtBeginEnvironment{algorithmic}{}
\newcommand{\bvec}[1]{ \overline{\text{vec}}\left({#1}\right) }

\title{Sketchy: Memory-efficient Adaptive Regularization with Frequent Directions}

% The \author macro works with any number of authors. There are two commands
% used to separate the names and addresses of multiple authors: \And and \AND.
%
% Using \And between authors leaves it to LaTeX to determine where to break the
% lines. Using \AND forces a line break at that point. So, if LaTeX puts 3 of 4
% authors names on the first line, and the last on the second line, try using
% \AND instead of \And before the third author name.

\author{
  Vladimir Feinberg\\
  Google DeepMind\\
\href{mailto:vladf@google.com}{vladf@google.com}
  \And
   Xinyi Chen\\
  Princeton University \\
  Google DeepMind
  \And
   Y. Jennifer Sun\\
  Princeton University\\
  Google DeepMind
  \And
  Rohan Anil\\
  Google DeepMind
  \And
  Elad Hazan \\
  Princeton University \\
  Google DeepMind
}

\begin{document}

\maketitle

\begin{abstract}
 Adaptive regularization methods that exploit more than the diagonal entries exhibit state of the art performance for many tasks, but can be prohibitive in terms of memory and running time. We find the spectra of the Kronecker-factored gradient covariance matrix in deep learning (DL) training tasks are concentrated on a small leading eigenspace that changes throughout training, motivating a low-rank sketching approach. We describe a generic method for reducing memory and compute requirements of maintaining a matrix preconditioner using the Frequent Directions (FD) sketch. While previous approaches have explored applying FD for second-order optimization, we present a novel analysis which allows efficient interpolation between resource requirements and the degradation in regret guarantees with rank $k$: in the online convex optimization (OCO) setting over dimension $d$, we match full-matrix $d^2$ memory regret using only $dk$ memory up to additive error in the bottom $d-k$ eigenvalues of the gradient covariance. Further, we show extensions of our work to Shampoo, resulting in a method competitive in quality with Shampoo and Adam, yet requiring only sub-linear memory for tracking second moments.
\end{abstract}

\section{Introduction}

DL optimization commonly relies on adaptive gradient methods, namely the Adam optimizer \cite{kingma2015adam}. It differs from stochastic gradient descent in that the learning rate is a structured diagonal matrix built from previous gradients rather than a scalar. In full matrix AdaGrad \citep{duchi2011adaptive}, the inverse matrix square root of the sum of outer products of previous gradients is the learning rate.

Full matrix preconditioning is impractical for modern deep learning architectures: for instance, the ResNet-50 architecture \cite{he2016deep} has over 23 million parameters, requiring more than 2 petabytes to represent its gradient covariance. Thus, diagonal preconditioning methods remain popular. However, previous work has demonstrated state-of-the-art results in some settings, such as large-batch data parallel training, for nondiagonal forms of preconditioning \cite{martens2015optimizing,gupta2018shampoo,ggt,chen2019extreme,anil2019memory, anil2020scalable}. In particular, Shampoo \citep{gupta2018shampoo, anil2020scalable} introduces a factorization of full matrix preconditioning method with adoption in large-scale industrial applications such as training Google's ads click-through-rate model \cite{anil2022factory}. Furthermore, as hardware evolves, memory efficiency becomes an increasing concern, as ``logic improves much faster than wires and SRAM, so logic is relatively free'' \citep{jouppi2021ten}: from TPUv2 to TPUv3, per-chip \texttt{bfloat16} operations per second improved $2.67\times$ but memory bandwidth only improved $1.29\times$. GPUs exhibit a similar pattern for compute and memory increase, at $5\times$ and $2.2\times$, for V100 to A100 \citep{dally2021evolution}.

Investigation into the Kronecker-factored gradient covariance matrix reveals a concentrated, but changing, spectrum (Fig.~\ref{fig:intrinsic}), suggesting the majority of the spectral mass can be represented by a low-rank matrix, albeit rotating over time. The Frequent Directions (FD) sketch provides a mechanism to track the top eigenvectors without materializing the full covariance matrix, as proposed in  \cite{ghashami2016frequent}. Is a large portion of the spectral mass sufficient to retain the performance of adaptive regularization in theory and practice? In this work, we investigate this hypothesis.
\begin{itemize}
    \item In the setting of online convex optimization, by applying a dynamic diagonal regularization to the FD sketch, we can \textbf{recover full-matrix AdaGrad regret up to additive spectral terms under a memory constraint}, providing a novel guarantee without curvature assumptions (Sec.~\ref{sec:fd-AdaGrad}). Rigorously composing our approach with Shampoo (Sec.~\ref{sec:fd-shampoo}) unlocks a \textbf{second-order algorithm which requires sub-linear memory} for its accumulators.
    \item By modifying FD for exponential moving averages (Sec.~\ref{sec:exp-weighted-fd}), we demonstrate a practical algorithm competitive with at-least-linear memory Shampoo and Adam in three modern DL settings (Sec.~\ref{sec:generalization-benchmarks}). While previous work \citep{anil2019memory, shazeer2018adafactor} shows rank-1 preconditioners are effective for trading off quality for memory, these results \textbf{demonstrate a Pareto improvement by using higher-rank approximations}.
    \item We explain the competitive performance of Sketchy with observations of fast spectral decay in the moving average of Kronecker-factored gradient covariance in DL settings (Sec.~\ref{sec:spectral-analysis}).

\end{itemize}

\section{Setting and Definitions}\label{sec:setting}

\paragraph{Regret and Optimization.}

The optimization problem of training a deep neural network has a non-convex objective loss function $f$. Since finding the global optimum is computationally intractable in general, theoretical guarantees focus on convergence to an $\eps$-approximate first-order optimum: a point $x$ such that $\|\nabla f(x)\| \le \eps$. A smooth non-convex problem can be reduced to solving a series of offline convex problems \citep{ggt}. The convex sub-problems have form $f_t(x) = f(x) + c\|x - x_t\|^2$, where $c$ is a constant and $x_t$ is an iterate in the optimization process. Using online-to-batch conversion, we can translate the regret bound of an online convex optimization (OCO) \citep{hazan2016introduction} algorithm to convergence guarantees for offline optimization. For more details of this reduction, see Appendix~\ref{sec:reduction}. Therefore, non-convex optimization guarantees can be obtained from regret bounds, and we focus on the latter in this paper.

In this setting, an OCO algorithm chooses a point $x_t \in \K$ iteratively, where  $\K \subseteq \reals^d$ is a convex decision set (take $\K=\R^d$ if unconstrained). After the decision is made, the adversary reveals a convex loss function $f_t$, to which the algorithm suffers cost $f_t(x_t)$. Upon receiving the cost, the algorithm updates its decision for the next iteration. The regret for an online algorithm is given by
\begin{align*}
\regret_T=\sum_{t=1}^T f_t(x_t)-\min_{x\in\K} \sum_{t=1}^T f_t(x)\,\,.
\end{align*}

\paragraph{Sketching and the Frequent Directions Method.}
Given a stream of vectors $g_{t}\in\R^d$, $t\in[T]$, we utilize the FD sketch \citep{ghashami2016frequent} given in Alg.~\ref{alg:dfd} which maintains a low-rank approximation of the true running covariance $G_t=\sum_{s\le t}g_sg_s^\top$. At each time $t$, it maintains a matrix $B_t$ of size $d\times \ell$ whose last column is $0$ and whose square is the sketch %$B_tB_t^\top = \tilde{C}_t$ 
$B_tB_t^\top=\bar{G}_{t}$. %\xc{$\bar{G}_t$?}. 
After seeing $g_t$ from the stream, we update the previous matrix using Alg.~\ref{alg:dfd}, which updates %outputs \js{updates, Alg.1 does not output $B_{t+1}$} 
$B_{t+1}$ of size $d\times \ell$ whose last column remains $0$; take $B_0=0$. $B_{t+1}$ is obtained by decomposing the sum of $\bar{G}_t$ and the newly observed matrix, keeping only the top eigendirections, and reducing the eigenvalues uniformly by $\ell$-th eigenvalue. At every iteration $t$, denote $\rho_t:=\lambda_{\ell}^{(t)}$ be the removed eigenvalue from the covariance update in Alg.~\ref{alg:dfd}.

For convenience, let $\rho_{1:t}\defeq \sum_{s=1}^t \rho_s$ be the cumulative escaped mass. For a matrix $X$, we denote its  $i$-th leading eigenvalue by $\lambda_i(X)$. Let $\| \cdot \|_F$ denote the Frobenius norm of a matrix. 

\begin{algorithm}
\caption{Frequent Directions Update (\texttt{FD-update})}
\label{alg:dfd}
\begin{algorithmic}[1]
\REQUIRE Invariant that last column of $B_{t-1}$ is 0.
\ENSURE The last column of $B_t$ is 0.
\STATE Input: Previous state $\bar{G}_{t-1}=B_{t-1}B_{t-1}^\top\in\R^{d\times d}$
\STATE Input: New symmetric PSD matrix $M_t\in\R^{d\times d}$.
\STATE Eigendecompose $\bar{U}_t\diag \lambda^{(t)} \bar{U}_t^\top = \bar{G}_{t-1}+M_t$ where $\lambda^{(t)}$ contains descending eigenvalues. % note by assumption $U_t\in\R^{d\times \ell}$ as $\rank \tilde{C}_{t-1}+g_tg_t^\top\le \rank \tilde{C}_{t-1} + 1=\ell$.
\STATE Define $U_t$ as the matrix whose columns are the first $\ell$ columns of $\bar{U}_t$, and $\lambda^{(t)}_{[1:\ell]}$ be its eigenvalues.
\STATE Update $B_t = U_t \diag\pa{\lambda^{(t)}_{[1:\ell]}-\lambda^{(t)}_{\ell}}^{1/2}$.
\algoOUTPUT $\lambda^{(t)}_\ell$, $B_tB_t^\top$.
\end{algorithmic}
\end{algorithm}

The fundamental property of FD is that applying Alg.~\ref{alg:dfd} over a stream of vectors $g_t$, with $B_0=0$, the sum of escaped mass $\rho_t=\lambda^{(t)}_\ell$ %can be limited by spectra of $G_T$.
can be bounded by the bottom eigenvalues of $G_T$, formally given by the following lemma:
\begin{lemma}[\citet{liberty2022even}]
\label{lem:rho}
The cumulative escaped mass $\rho_{1:T}$ can be upper bounded as\[\rho_{1:T} \le \min_{k=0, \ldots, \ell-1}\frac{\sum_{i=k+1}^d \lambda_i(G_T)}{\ell-k}\le\sum_{i=\ell}^d \lambda_i(G_T)\defeq \lambda_{\ell:d}\,\, .\]
\end{lemma}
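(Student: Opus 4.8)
The plan is to reprise the telescoping-plus-projection analysis that is standard for Frequent Directions, adapted to the generic form in which Alg.~\ref{alg:dfd} consumes arbitrary symmetric PSD increments $M_t$ with $G_T=\sum_{t=1}^T M_t$ (so $M_t=g_tg_t^\top$ recovers the vector stream).

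\textbf{Step 1: the per-step error is a scaled rank-$\ell$ projection, and it telescopes.} Set $\Delta_t := \bar G_{t-1}+M_t-\bar G_t$. Work in the eigenbasis $\bar U_t$ of $\bar G_{t-1}+M_t$ from Alg.~\ref{alg:dfd}. In eigendirections $1,\dots,\ell$ the matrix $\bar G_t$ has eigenvalue $\lambda^{(t)}_i-\rho_t$, so there $\Delta_t$ has eigenvalue $\lambda^{(t)}_i-(\lambda^{(t)}_i-\rho_t)=\rho_t$; in directions $\ell+1,\dots,d$ the matrix $\bar G_t$ vanishes, so there $\Delta_t$ has eigenvalue $\lambda^{(t)}_i\in[0,\rho_t]$ (using $\bar G_{t-1}+M_t\succeq 0$ and that $\rho_t=\lambda^{(t)}_\ell$ with eigenvalues in nonincreasing order). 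Hence $\rho_t\,\Pi_t \preceq \Delta_t \preceq \rho_t I$, where $\Pi_t$ is the rank-$\ell$ orthogonal projection onto the span of the first $\ell$ columns of $\bar U_t$; in particular $\Delta_t\succeq 0$, so the sketch never overshoots. Summing $\Delta_t=\bar G_{t-1}+M_t-\bar G_t$ over $t\in[T]$ with $\bar G_0=0$ telescopes to $\sum_{t=1}^T\Delta_t = G_T-\bar G_T$.

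\textbf{Step 2: pair the telescoped identity against the top-$k$ eigenspace of $G_T$.} Fix $k\in\{0,\dots,\ell-1\}$ and let $P$ be the orthogonal projection onto a top-$k$ eigenspace of $G_T$ (take $P=0$ if $k=0$). Since $I-P\succeq 0$ and $\bar G_T\succeq 0$, linearity of the trace and Step~1 give the upper bound
\[\sum_{t=1}^T\tr\big((I-P)\Delta_t\big) \eq \tr\big((I-P)G_T\big)-\tr\big((I-P)\bar G_T\big) \,\le\, \tr\big((I-P)G_T\big) \eq \sum_{i=k+1}^d\lambda_i(G_T).\]
For the matching lower bound, $\Delta_t-\rho_t\Pi_t\succeq 0$ gives $\tr((I-P)\Delta_t)\ge\rho_t\,\tr((I-P)\Pi_t)$, and since $P$ and $\Pi_t$ are orthogonal projections with $\tr(P\Pi_t)\le\min(\rank P,\rank\Pi_t)=k$,
\[\tr\big((I-P)\Delta_t\big) \,\ge\, \rho_t\,\tr\big((I-P)\Pi_t\big) \eq \rho_t\big(\ell-\tr(P\Pi_t)\big) \,\ge\, \rho_t(\ell-k).\]
Summing over $t$ and chaining with the displayed upper bound yields $(\ell-k)\,\rho_{1:T}\le\sum_{i=k+1}^d\lambda_i(G_T)$. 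Dividing by $\ell-k\ge 1$ and minimizing over $k\in\{0,\dots,\ell-1\}$ proves the first inequality; the choice $k=\ell-1$ proves the second, since then $\ell-k=1$ and $\sum_{i=\ell}^d\lambda_i(G_T)=\lambda_{\ell:d}$.

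\textbf{Where the work is.} The argument is short once two points are in place. First, one must track the \emph{scalar} $\sum_t\tr((I-P)\Delta_t)$ rather than an operator relation, so that the PSD slack $\tr((I-P)\bar G_T)$ can simply be discarded on the upper side. Second, $P$ must be aligned with $G_T$ and not with any sketch $\bar G_t$: then the upper side collapses exactly to $\sum_{i>k}\lambda_i(G_T)$, while the lower side needs only the dimension count $\tr(P\Pi_t)\le k$, which holds for \emph{every} $t$ no matter how the sketch's leading $\ell$-subspace $\Pi_t$ rotates between steps. The one genuinely computational point is the eigenvalue bookkeeping in Step~1 that establishes $\rho_t\Pi_t\preceq\Delta_t$ for the update in Alg.~\ref{alg:dfd}.
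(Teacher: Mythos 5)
Your proof is correct. The paper's own proof is essentially a one-line citation to Theorem~1.1 of \citet{ghashami2016frequent} in the row-stream formulation: it stacks the gradients into $H_T\in\R^{T\times d}$ with $H_T^\top H_T = G_T$, invokes the bound $\rho_{1:T}\le\min_k\|H_T-H_{T,k}\|_F^2/(\ell-k)$ from that reference, and translates the Frobenius residual $\|H_T-H_{T,k}\|_F^2$ into $\sum_{i>k}\lambda_i(G_T)$. You instead reconstruct the argument from scratch directly at the level of the PSD increments $M_t$ that Alg.~\ref{alg:dfd} consumes, which makes the proof self-contained and slightly more general: your sandwich $\rho_t\Pi_t\preceq\Delta_t\preceq\rho_t I$ does not require $M_t$ to be a rank-one outer product $g_tg_t^\top$, whereas the paper's companion identity $\bar G_t+\rho_t I=\bar G_{t-1}+M_t+\rho_t N_t$ (Observation~\ref{observation:onestep}), which would give $\Delta_t=\rho_t\Pi_t$ exactly, holds only under the rank-one hypothesis. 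Pairing the telescoped error $G_T-\bar G_T$ against a fixed top-$k$ eigenspace of $G_T$ so that the dimension bound $\tr(P\Pi_t)\le k$ applies uniformly in $t$ is exactly the step the paper outsources to the citation; spelling it out is useful precisely because the projections $\Pi_t$ rotate between steps, which is the subtlety the citation hides.
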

\begin{proof}
See Sec.~\ref{sec:rho-proof}.
\end{proof}

\section{Related Work}

\subsection{Spectral Analysis of DL Training}

Denote the loss function of the $i$-th example for weights $x$ as $f_i(x)$. The spectrum of the Hessian matrix $\sum_i\nabla^2f_i$ has been the subject of intensive investigation in DL \citep{sagun2016eigenvalues, sagun2017empirical, ghorbani2019investigation, sankar2021deeper} and its properties have been used to devise training methods \citep{martens2015optimizing,agarwal2017finding}.

Recent papers \citep{gur2018gradient, bakker2018understanding,xie2022rethinking} inspect the covariance matrix, $\sum_i(\nabla f_i)(\nabla f_i)^\top$. In small models, where its computation is feasible, these works identify fast spectral decay.

\citet{ggt} take advantage of this observation by using a low-rank approximation of the whole covariance matrix, based on a limited history of the gradients, $\sum_{i=t-r}^r(\nabla f_i)(\nabla f_i)^\top$. This approach still requires $r$ copies of the model gradients in memory, where typically $r$ should scale with $\beta_2^{-1}$, with $\beta_2$ the exponential moving average for second order statistics (the authors set $r=200$). Fundamentally, approximating the whole covariance matrix constrains \citet{ggt} application to small models.

In our work, we validate the decay hypothesis holds across the per-layer factored covariance matrices in several modern neural networks. For a layer's gradient matrix $G_i$ at the $i$-th example and a second moment decay term $\beta_2$, our work inspects spectral decay for $L_t=\sum_i\beta_2^{t-i}G_iG_i^\top$ and $R_t=\sum_i\beta_2^{t-i}G_i^\top G_i$; the spectral structure for these outer products is not well-documented. Furthermore, as described in Sec.~\ref{sec:shampoo}, approximating the factored covariance $L_t\otimes R_t$ requires less memory than the full covariance and explains why our method can scale to large modern architectures whereas \citet{ggt} cannot.

\subsection{Sublinear Memory Methods}

Extreme Tensoring \citep{chen2019extreme}, AdaFactor \citep{shazeer2018adafactor}, and SM3 \citep{anil2019memory} are methods that require sublinear memory relative to the number of parameters, at the other end of the memory-quality tradeoff beyond methods that rely on the diagonal of the gradient covariance such as Adam. Owing to different structural assumptions on the set of feasible preconditioners, comparison with these methods is out of scope. However, these methods may compose with our approach. One may apply Extreme Tensoring first, then sketch the resulting reshaped tensor covariances with our method to further reduce memory consumption. Similarly, an SM3-like approach which reduces the indices for each dimension to be preconditioned can be applied before Sketchy is applied to remaining indices.

Crucially, Adam, which uses linear memory for second moment representations, compares favorably in terms of quality to all of these sublinear-memory methods. \textbf{By increasing rank in factored covariance representation, Sketchy is competitive with Adam, despite sublinear memory for second moment representation.} Thus, for simplicity, we compare to only Adam, which dominates the alternative sublinear approaches in terms of quality.

\subsection{Sketching-based Approaches}

\begin{table*}
\caption{Memory-efficient adaptive gradient methods, in the OCO setting with dimension $d$ (Sec.~\ref{sec:setting}). We describe the worst-case regret bounds without exp-concavity assumptions, asymptotically, hiding logarithmic factors, treating the decision set diameter as a constant, and assume optimally-tuned hyperparameters. $\ell$ refers to the controllable preconditioner rank. Note $\tr G_T^{1/2}=\sqrt{\min_{H\in\mcH}\sum_t\norm{\nabla_t}_H^2}$ is the optimal preconditioner's regret among the class of positive semi-definite, unit-trace matrices, $\mcH$, and $G_T$ is the sum of gradient outer products. We let eigenvalues $\lambda_i = \lambda_i(G_T)$ with $\lambda_{i:j}=\sum_{m=i}^j\lambda_m$. 
}
\label{table:memory-oco}
\begin{center}
\begin{tabular}{ccc}
		\hline
		  {Reference}        & Regret (general convex)   & Memory  
\\
\hline
Full Matrix AdaGrad \citep{duchi2011adaptive} & $\tr G_T^{1/2}$ & $d^2$\\
Ada-LR \citep{krummenacher2016scalable} & $\tr G_T^{1/2}  +  \lambda_{\ell+1}^{1/2}\ell^{3/4}d^{1/4}$ & $d^2$\\ 
%\hline
Ada-FD \citep{wan2021efficient}& $\Omega\pa{T^{3/4}}$\tablefootnote{The regret of Ada-FD is expressed in terms of dynamic run-time quantities which do not admit a universal bound in terms of $G_T$; we display its regret for the specific case of Observation~\ref{observation:counterexample} instead (a detailed look at its regret is given in Appendix~\ref{sec:adafd-cex}).} & $d\ell$\\
%\hline
SON \citep{luo2016efficient} & $\sqrt{Td}$ & $d^2$\\
%\hline 
 FD-SON \citep{luo2016efficient} & $\sqrt{\ell\lambda_{\ell:d} T}$ & $d\ell$\\ %
%\hline 
This paper & $\tr(G_T^{1/2})+ \sqrt{d(d-\ell)\lambda_{\ell:d}}$ & $ d \ell $  \\
\hline
\end{tabular}
\end{center}
%\vskip -0.3in
\end{table*}

Several works have explored sketching-like approximations to the gradient covariance matrix, but none provide an adaptive bound exploiting fast spectral decay in gradient covariance without additional assumptions (Tbl.~\ref{table:memory-oco}). In this section, we consider the OCO setting over dimension $d$ (Sec.~\ref{sec:setting}).

Random projection (Ada-LR) is most spiritually similar to our work \citep{krummenacher2016scalable}. Although it does not reduce memory usage, it relies on random projections to lower dimension $\ell\le d$ to reduce inverse matrix computation costs. An alternative without formal guarantees, RadaGrad, reduces memory consumption to $O(d\ell)$; however, as with all Johnson-Lindenstraus projection methods, it suffers a probabilistic failure rate scaling as $O(\ell^{-1})$ (in comparison, our method inherits FD's determinism).

Frequent Directions (FD) \citep{ghashami2016frequent, liberty2022even}, provides an alternative matrix sketching approach from the data streaming literature. As an adaptive sketch, it dominates random projection in terms of matrix recovery, and lower bounds show its memory usage is optimal in the sense that any equally-accurate approximation to an adversarially-chosen true covariance $G_T$ in operator norm constructed from the corresponding gradients must use $O(d\ell)$ bits.

In the context of exp-concave cost functions, \citet{luo2016efficient} provide an FD sketched version of Online Newton Step (ONS), FD-SON. In this setting, their approach nearly recovers classical ONS regret, up to logarithmic error in $\sum_{i=1}^{\ell-1}\lambda_i(G_T)$ and additive error in $\sum_{i=\ell}^d\lambda_i(G_T)$. However, without the exp-concave assumption, FD-SON falls back to a gradient-descent-like default regret of $O\pa{\lambda_{\ell:d}\sqrt{T}}$, which can be $\Omega(T)$ without spectral decay. In the context of linear bandits, \citet{chen2021efficient} uses FD for memory reduction. The resulting algorithm, SCFD, is similar to Alg.~\ref{alg:AdaGrad}, but SCFD lacks a projection step and does not handle general domains $\mcK$. We emphasize that our contribution is in the novelty of our regret analysis for general OCO settings, in contrast to linear bandits.

The main prior work exploring FD for general online convex settings, \citet{wan2021efficient}, extends the FD-SON approach by adding a fixed diagonal perturbation $\delta I$ to an FD-based preconditioner, in Ada-FD. However, this approach does not achieve $\sqrt{T}$ regret even in a non-adversarial setting with stochastic linear cost functions (Observation~\ref{observation:counterexample}), where learning rate and $\delta$ are tuned. Dynamically changing diagonal regularization is essential for worst-case $O\pa{\sqrt{T}}$ performance.
\begin{observation}\label{observation:counterexample} 
Suppose we receive linear cost functions $f_t(x)=\inner{x, g_t}$, where $g_t \in \R^d$ is a random vector drawn iid from any distribution over $r\le d$ orthonormal vectors $W$. For any sketch size $\ell\le r$, the bound on the expected regret of Ada-FD is $\Omega(T^{3/4})$.
\end{observation}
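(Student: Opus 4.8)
The plan is to exhibit an explicit i.i.d.\ instance on which the run-time quantities appearing in Ada-FD's regret guarantee are all driven to their worst case, and then to optimize that guarantee over Ada-FD's two free parameters — the step size $\eta$ and the fixed regularization $\delta$ — and show the minimum is still $\Omega(T^{3/4})$ (this is the sense, cf.\ the footnote, in which ``the bound on the expected regret of Ada-FD is $\Omega(T^{3/4})$''). Concretely: take $W=\{e_1,\dots,e_r\}\subseteq\R^d$ (the first $r$ standard basis vectors), draw $g_t$ i.i.d.\ uniform on $W$ — a mildly skewed distribution on $W$ works as well and can sharpen constants — use linear losses $f_t(x)=\inner{x,g_t}$, and a fixed bounded $\K$, for which the unit $\ell_\infty$-ball is convenient since everything below is then diagonal and the problem splits coordinatewise. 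Writing $N_i$ for the number of draws of $e_i$, we have $G_T=\diag(N_1,\dots,N_r,0,\dots,0)$; the offline optimum $\min_{x\in\K}\inner{x,\sum_t g_t}$ is explicit, and a Chernoff bound localizes each $N_i$ near $T/r$.

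Next I would analyze the FD sketch on this stream. Every $g_tg_t^\top$ is diagonal, so the FD state $\bar G_t=B_tB_t^\top$ stays diagonal; since $\rank B_t\le \ell-1<r$, at every round at least $r-\ell+1\ge1$ coordinates have $(\bar G_t)_{ii}=0$, and on those coordinates Ada-FD's preconditioner is \emph{exactly} $\delta I$, i.e.\ it has no adaptivity there at all. Moreover the escaped mass is large: Lemma~\ref{lem:rho} gives $\rho_{1:T}\le\lambda_{\ell:d}(G_T)=\Theta(T)$ (the spectrum of $G_T$ is flat across $r\ge\ell$ directions — no decay in the tail), and the matching $\rho_{1:T}=\Omega(T)$ follows directly, since each time one of the $\ge r-\ell+1$ uncaptured coordinates is drawn the sketch must discard $\Omega(1)$ of mass.

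Then I would plug this into Ada-FD's guarantee and optimize. Specialized to this instance, its ``comparator'' term is of order $(\delta+\sqrt T)/\eta$, its adaptive-gradient term of order $\eta\sqrt T + \eta T/\delta$ — the $\eta T/\delta$ piece coming from the constant fraction of rounds whose active coordinate lies in the $\delta I$ block — and, crucially, there is an additional escaped-mass contribution of order $\rho_{1:T}/\eta=\Theta(T/\eta)$, present because the FD preconditioner is \emph{not} monotone across rounds (unlike full-matrix AdaGrad's running covariance), so the standard FTRL/mirror-descent telescoping leaks an amount proportional to what the sketch throws away each step. One tunes $\delta$ to balance $\delta/\eta$ against $\eta T/\delta$ (optimum near $\delta\asymp\eta\sqrt T$), but no choice of $\delta$ cancels the $\rho_{1:T}/\eta=\Theta(T/\eta)$ penalty — cancelling exactly that is the work a \emph{dynamically} rescaled regularizer does. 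The residual bound then trades $\Theta(T/\eta)$ against the genuinely $\eta$-increasing term of order $\eta\sqrt T$, whose balance sits at $\eta\asymp T^{1/4}$ with value $\Theta(T^{3/4})$; taking expectations over the stream via the $N_i$ concentration gives the stated $\Omega(T^{3/4})$.

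The main obstacle is the last step: identifying the precise shape of the escaped-mass term in Ada-FD's guarantee and verifying that no pairing $(\eta,\delta)$ escapes the $T^{3/4}$ floor — in particular that a frozen $\delta$ can neutralize the escaped-direction gradient cost but not the $\rho_{1:T}/\eta$ term. The technical heart is the lower bound $\rho_{1:T}=\Omega(T)$, which certifies that this penalty is genuinely of order $T$ on the instance; by contrast the randomness is routine (Chernoff on the $N_i$) and the coordinatewise decoupling handles the $\ell_\infty$-ball domain (a short anisotropic-projection estimate suffices if one prefers the Euclidean ball). If instead one wants a lower bound on Ada-FD's \emph{actual} regret rather than on its certifiable bound, the extra difficulty is to track the iterate trajectory, including the projection step, and show the non-monotone churn of the sketch really is incurred; the diagonal structure again reduces this to a family of one-dimensional mirror-descent trajectories with a fixed step on the escaped coordinates.
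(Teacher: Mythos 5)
Your proposal takes essentially the same route as the paper: restrict to a stream of orthonormal vectors so that $G_T$ has a flat spectrum of rank $r\ge\ell$, argue that the FD sketch is forced to continually shed $\Theta(1)$ of mass so $\rho_{1:T}=\Theta(T)$, substitute this into the regret guarantee of \citet{wan2021efficient} (Theorem 1), and observe that no pairing $(\eta,\delta)$ can avoid balancing $\Theta(T/\eta)$ against $\eta\sqrt{T}$, giving $\Omega(T^{3/4})$. The paper's write-up is more general (any distribution on $W$, not just uniform) but the optimization over $\eta$ at the end is identical.

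Two places where your sketch is looser than the paper, and which would need work to make rigorous.

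First, the lower bound $\rho_{1:T}=\Omega(T)$. You assert that ``each time one of the $\ge r-\ell+1$ uncaptured coordinates is drawn the sketch must discard $\Omega(1)$ of mass,'' but that is not literally true: the FD deflation fires only when $\bar G_{t-1}$ already has full rank $\ell-1$, and immediately after a deflation the sketch rank drops (indeed, in the uniform $W=\{e_1,\dots,e_r\}$ case it can drop to $0$), so some rounds with a ``new'' coordinate contribute $\rho_t=0$. One has to argue that the sketch refills quickly — the paper does this with a coupon-collector/Markov argument bounding $\Pr[\rank\bar G_{t-1}\le\ell-2]$, together with the observation that on this instance $\rho_t\in\{0,1\}$ exactly (which also gives $\sum_t\sqrt{\rho_t}=\rho_{1:T}$, the quantity actually appearing in the Ada-FD bound). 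A cleaner way to close this gap than either your informal claim or the paper's argument is the mass-conservation identity for FD: each update satisfies $\tr\bar G_t+\ell\rho_t=\tr\bar G_{t-1}+\|g_t\|^2$, hence $\ell\rho_{1:T}=T-\tr\bar G_T$; since $\bar G_T\preceq G_T$ and $\rank\bar G_T\le\ell-1$, $\tr\bar G_T\le\sum_{i<\ell}\lambda_i(G_T)$, giving deterministically $\rho_{1:T}\ge\tfrac{1}{\ell}\sum_{i\ge\ell}\lambda_i(G_T)=\Theta(T)$ after standard concentration of the counts $N_i$.

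Second, the shape of the Ada-FD bound. You model the escaped-mass contribution as $\rho_{1:T}/\eta$; the quoted bound from \citet{wan2021efficient} actually involves $\tfrac{D^2}{2\eta}\sum_t\sqrt{\rho_t}$ and a prefactor $\max\{1,(1+\sqrt{\rho_{1:T}})/\delta\}$ on $\eta\,\tr G_T^{1/2}$. On this instance $\rho_t\in\{0,1\}$ makes $\sum_t\sqrt{\rho_t}=\rho_{1:T}$ and the prefactor makes the $\eta$-increasing side at least $\eta\sqrt{T}$ for \emph{every} choice of $\delta$, so your conclusion and the balance point $\eta\asymp T^{1/4}$ are correct; but to present the argument you need to pin down these specific terms rather than reconstruct them heuristically. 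Your remark distinguishing ``bound on regret'' from ``actual regret'' is exactly the reading intended by the paper's footnote.
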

\begin{proof}
See Sec.~\ref{sec:adafd-cex}.
\end{proof}
\citet{wan2021efficient} remark Ada-FD has $\sqrt{T}$ regret when $G_T$ is low rank with rank below $k$, so $d-k$ of its eigenvalues are precisely zero. However, this setting does not require any sketching in the first place. By tracking the column space of observed gradients (e.g., with a reduced QR decomposition, rank-1-updated every step), the full matrix AdaGrad algorithm can be perfectly recovered without using more than $O(dk)$ memory.

In concrete convex examples, Sketchy compares favorably to these approaches (Appendix~\ref{sec:online-cvx-ex}).

\subsection{Shampoo}\label{sec:shampoo}

\begin{figure}[]
\begin{center}
\includegraphics[width=0.9\textwidth]{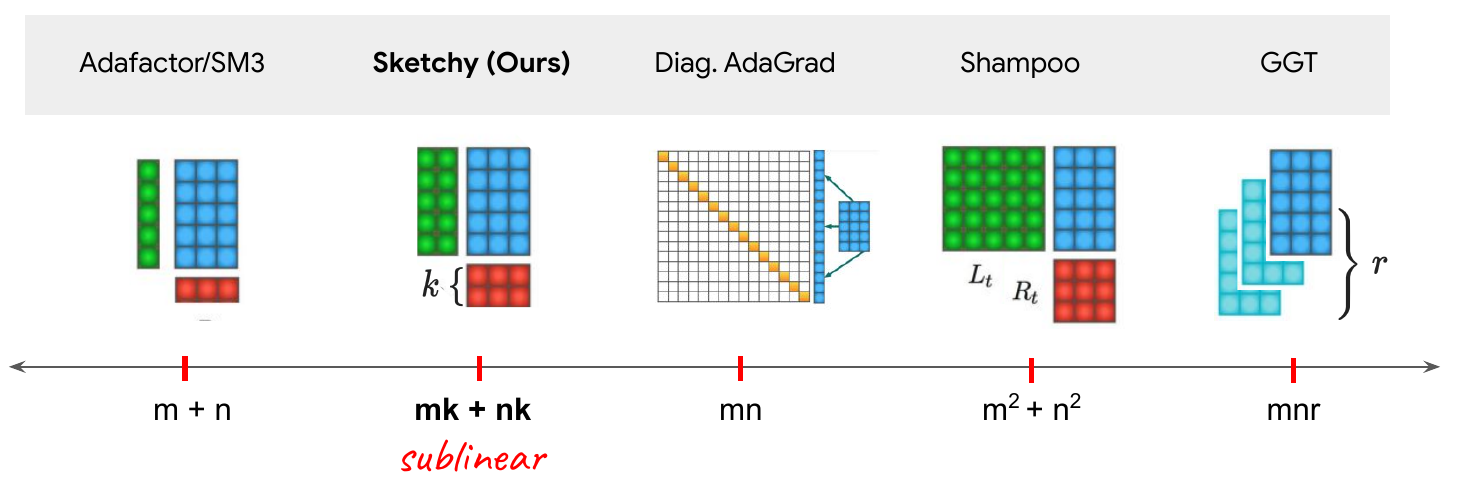}
% \begin{tabular}{|c|c|c|}
% 		\hline
% 		  {Reference}  & Memory  
% \\
% \hline
% GGT \citep{ggt} &   $ mnr$ \\
% %\hline
% Shampoo \citep{gupta2018shampoo} &  $m^2 + n^2$\\
% %\hline
% Blocked Shampoo \citep{anil2020scalable} &  $mn$\\
% %\hline
% This paper & $ mk + nk $  \\
% %\hline
% \hline
% \end{tabular}
\end{center}
\vskip -0.1in
\caption{Asymptotic memory consumption for representing gradient covariance in adaptive regularization approaches for a single matrix parameter of size $n\times m$. Here, $r$ refers to the GGT history buffer size and $k$ to the approximation rank of FD (both typically set to hundreds). Past sketching approaches like Ada-FD and Radagrad take memory similar to GGT, with $r$ being sketch size; these are all asymptotically superlinear. This figure demonstrates optimizer memory usage in the theoretical OCO setting; in practice for deep learning workloads there are additive $O(mn)$ factors for momentum, the parameters themselves, and grafting parameters for Shampoo and Sketchy.} 
\vskip -0.2in
\label{table:memory}
\end{figure}

Perhaps our most compelling application is reducing the memory of Shampoo \citep{gupta2018shampoo, anil2020scalable, anil2022factory}. Shampoo is an adaptive preconditioning method that takes into account the structure of the parameter space, and thus is more efficient than full matrix AdaGrad. For example, if the parameter is a weight matrix $W$ of size $m\times n$, AdaGrad treats the matrix-shaped parameters as a vector of size $mn$, and the preconditioner has size $m^2n^2$; Shampoo instead has left and right preconditioners $L, R$ of size $n\times n$ and $m\times m$, respectively, with the preconditioned update $L^{-1/4}WR^{-1/4}$. Write $\bvec{W}$ as the vectorized weights, then it is equivalent to $(L\otimes R)\bvec W=\bvec{LWR}$, where $\otimes$ denotes the Kronecker product. Figure \ref{table:memory} illustrates the updates of AdaGrad and Shampoo, where AdaGrad update uses the entire matrix instead of the diagonal, which is shown in the figure. In other words, Shampoo uses a Kronecker-factored preconditioner, and the factorization preserves the matrix structure of the parameters. Since in DL optimization parameters often have matrix structure, Shampoo has strong empirical performance, and has improved upon state-of-the-art results in large-scale tasks such as language modeling with BERT-Large and image classification on ImageNet in \citep{anil2020scalable}. 

% to realize practical 2nd-order memory usage for its optimizer; this composability is due to the natural adaptive regularization bound that Shampoo and our method have within the OCO framework \citep{gupta2017unified}. 
Figure~\ref{table:memory} elaborates why the composition of FD and Shampoo is essential to avoid memory consumption asymptotically greater than parameter count for approximate full matrix regularization.

However, Shampoo memory costs may still be prohibitive for rectangular weight matrices. In BERT-Large \citep{devlin2019bert}, most parameters are in the feed-forward network layers, which consist of $4096\times 1024$ dense kernels; other transformers follow similar narrow-to-wide patterns. For large models, occupying even $4\times$ memory for the left preconditioner can frequently result in OOM in memory-constrained settings; this was in fact one of the practical motivations for our proposed approach.

\citet{anil2020scalable} introduces two workarounds for the problem of rectangular matrices based on limiting covariance modelling. Furthermore, both approximations can be applied to our method, so we do not compare against them. First, the authors propose Blocked Shampoo, which views each weight matrix $W$ of shape $m\times n$ as $mn/b^2$ blocks of size $b\times b$ for some block size $b < \min(m,n)$ (in the limit $b=1$, this recovers diagonal AdaGrad). This approach is dependent on the ordering of neurons in hidden layers. Another approximation relies on only one-sided covariance upper bounds, $L_t\otimes I$ or $I\otimes R_t$. Note, however, that the one-sided approximation doesn't help with vector parameters, such as those that appear for the bias terms in dense layers or layer norms \citep{ba2016layer}. For 3D weights, such as those which appear in homogeneous Mixtures of Experts \citep{shazeer2017outrageously}, blocking increases memory consumption. These approaches do not take into account the fast decay of the preconditioner's spectrum, which is the focus of our work.

\section{Algorithms and Main Theorems}
In this section, we introduce the adaptation of Frequent Directions (FD) to AdaGrad (Sec.~\ref{sec:fd-AdaGrad}) and Shampoo (Sec.~\ref{sec:fd-shampoo}), the corresponding algorithms and regret guarantees. Additionally, in Sec.~\ref{sec:exp-weighted-fd}, we modify FD to support exponential moving averages. 

The main technical novelty in incorporating Alg.~\ref{alg:dfd} to AdaGrad (Alg.~\ref{alg:AdaGrad}) and Shampoo (Alg.~\ref{alg:shampoo}) is the construction of preconditioning matrices with FD-sketched matrices compensated by the cumulative escaped masses. The insight of such construction lies in the observation that while the FD sketch lower bounds the full preconditioning matrix, the FD sketch compensated with the cumulative escaped masses upper bounds the full preconditioning matrix, as demonstrated in Lemma~\ref{lem:fd-update} for AdaGrad and Lemma~\ref{lem:fd-update-shampoo} for Shampoo. The regret guarantee for AdaGrad (\citep{duchi2011adaptive}) and Shampoo (\citep{gupta2018shampoo}) directly depends on the trace of the preconditioning matrices, therefore obtaining upper and lower bounds on the preconditioning matrices allows explicit additive dependence on the cumulative escaped mass. We expect this approach to be reusable for alternative approximation schemes.

\subsection{FD for AdaGrad}
\label{sec:fd-AdaGrad}
Our main algorithm in this section is Alg.~\ref{alg:AdaGrad} run with FD (Alg.~\ref{alg:dfd}) as the sketching method. $\tilde{G}_t^{-1/2}$ in Alg.~\ref{alg:AdaGrad} denotes the Moore-Penrose pseudoinverse of the matrix $\tilde{G}_t^{1/2}$.
Our main algorithm, Sketchy AdaGrad, in this section exploits the FD approach outlined in Alg.~\ref{alg:dfd} as the sketching method in AdaGrad. In particular, at every time step, we pass the newly received subgradient $g_t$ into Alg.~\ref{alg:dfd}, which updates and maintains a low-rank sketch $\bar{G}_{t}$ of the AdaGrad preconditioning matrix $G_t$. We keep track of the cumulative escaped mass $\rho_{1:t}$, which we add back to the low-rank sketch to create the Sketchy preconditioner $\tilde{G}_{t}$, with which we perform the regular AdaGrad descent and projection.
\begin{algorithm}
\begin{algorithmic}[1]
\caption{Sketchy AdaGrad (\texttt{S-AdaGrad})}
\label{alg:AdaGrad}
\STATE Input: constraint set $\K$, step size $\eta$, time horizon $T$. %, covariance-sketching algorithm $\A$.
\STATE Initialize $x_1\in\K$, $\bar{G}_0=\tilde{G}_0=0$.
\FOR {$t=1,\ldots,T$}
\STATE Play $x_t$, receive $g_t\in\partial f_t(x_t)$, suffer cost $f_t(x_t)$.
\STATE Sketch $(\rho_t,\bar{G}_t)=\texttt{FD-update}(\bar{G}_{t-1},g_tg_t^\top)$.
\STATE Update $\tilde{G}_t=\bar{G}_t+\rho_{1:t}I$, $y_{t+1}=x_t-\eta \tilde{G}_t^{-1/2}g_t$, and $x_{t+1}=\underset{x\in\K}{\argmin} \ \|y_{t+1}-x\|_{\tilde{G}_t^{1/2}}^2$.
\ENDFOR
\end{algorithmic}
\end{algorithm}
\begin{theorem}\label{thm:add} %Suppose Alg.~\ref{alg:AdaGrad} is run with subroutine $\A$ taken to be \texttt{FD-update} (Algorithm \ref{alg:dfd}). 
Define $\Omega_\ell = \min_{k<\ell}(\ell-k)^{-1}\sum_{i=k+1}^d\lambda_i(G_T)$, then with $\eta=\frac{D}{\sqrt{2}}$, Alg.~\ref{alg:AdaGrad} guarantees the following additive regret bound:
\[
\regret_T(\texttt{S-AdaGrad})\le D\left(\sqrt{2}\tr G_T^{1/2}+d\sqrt{\frac{\Omega_\ell}{2}}\right)\,\, ,
\ignore{\frac{d\sqrt{\Omega_\ell}+\tr G_T^{1/2}}{2\eta} D^2+\eta \tr G_T^{1/2},}
\]
where $D$ is the diameter of the constraint set $\K$ if $\K$ is bounded and $\max_{t\in[T]}\|x_t-x^*\|_2$ otherwise. 
\end{theorem}
\begin{proof}
See Sec.~\ref{sec:proof-fd-ada}.
\end{proof}

Notably in Thm.~\ref{thm:add}, $\regret_T=O\pa{\sqrt{T}}$ and the lower eigenvalue dependence $\Omega_\ell$ is additive.

\begin{corollary}
\label{thm:improved}
We can improve Theorem~\ref{thm:add} slightly to 
\[
\regret_T(\texttt{S-AdaGrad})\le D\left(\sqrt{2}\tr G_T^{1/2}+\sqrt{\frac{d(d-\ell)\Omega_\ell} {2}}\right)\,\,.
\]
\end{corollary}
\begin{proof}
See Sec.~\ref{sec:proof-thm-improved}.
\end{proof}

The regret bound above holds under the optimal tuning of the learning rate, which depends on problem quantities that can be unknown a priori. %It is possible to design  a parameter-free variant of Alg.~\ref{alg:AdaGrad} by using $\|x\|_t^2 = x^\top (\tilde{G}_t + I)^{1/2}x$ as the norms $\|\cdot\|_t$ in \citet{param_free_AdaGrad}.
It is possible to design  a parameter-free variant of Alg.~\ref{alg:AdaGrad} by using the norm $\|x\|_t = (x^\top (\tilde{G}_t + I)^{1/2}x)^{1/2}$ in the projection step of Alg.~\ref{alg:AdaGrad}, as seen in \citep{param_free_AdaGrad}.

\subsection{FD for Shampoo}
In this section, we adapt \texttt{FD-update} to Shampoo \cite{gupta2018shampoo}. For simplicity, we optimize over $\R^{m\times n}$ in Alg.~\ref{alg:shampoo}; projection may be handled as in Alg.~\ref{alg:AdaGrad}.
Similar to Sketchy AdaGrad, Sketchy Shampoo uses the FD approach outlined in Alg.~\ref{alg:dfd} to sketch the left and right preconditioning matrices for Shampoo. In particular, we maintain two parallel sketching streams using Alg.~\ref{alg:dfd} to produce sketches $\bar{L}_{t}, \bar{R}_{t}$ for the left and right preconditioning matrices. We keep track of the cumulative escaped masses $\rho_{1:t}^L$ and $\rho_{1:t}^R$ from sketching the left and right preconditioning matrices, respectively, and compensate the cumulative escaped mass to create the left and right Sketchy preconditioning matrices $\tilde{L}_{t}, \tilde{R}_{t}$.

\label{sec:fd-shampoo}
\begin{algorithm}
\begin{algorithmic}[1]
\caption{Sketchy Shampoo (\texttt{S-Shampoo})}
\label{alg:shampoo}
\STATE Input: step size $\eta$, time horizon $T$. %sketching algorithm $\A$.
\STATE Initialize $X_0 = 0_{m\times n}$, $\tilde{L}_0=\eps I_m$, $\tilde{R}_0 = \eps I_n$, $\bar{L}_0=0_m$, $\bar{R}_0 =0_n$.
\FOR {$t=1,\ldots,T$}
\STATE Play $X_t$, suffer $f_t(X_t)$, receive $G_t\in \partial f_t(X_t)$.
\STATE Sketch $(\rho_t^L, \bar{L}_t)=\texttt{FD-update}(\bar{L}_{t-1},G_tG_t^\top)$, $(\rho_t^R, \bar{R}_t)=\texttt{FD-update}(\bar{R}_{t-1},G_t^\top G_t)$.
\STATE Update $\tilde{L}_t = \bar{L}_t + \rho^L_{1:t} I_m$, $\tilde{R}_t = \bar{R}_t + \rho^R_{1:t} I_n$ and $X_{t+1}=X_t-\eta \tilde{L}_t^{-1/4}G_t\tilde{R}_t^{-1/4}$. 
\ENDFOR
\end{algorithmic}
\end{algorithm}

Denote $
L_T\defeq \sum_{t=1}^TG_tG_t^\top + \eps I$ and $R_T\defeq \sum_{t=1}^TG_t^\top G_t + \eps I$.

\begin{theorem}\label{thm:add_shampoo} Suppose $G_1, \ldots G_T$ have rank at most $r$. Then Alg.~\ref{alg:shampoo} run with $\eta = D/\sqrt{2r}$ guarantees the following regret bound:
\begin{align*}
\regret_T(\texttt{S-Shampoo})\le \sqrt{2r} D\pa{\tr(L_T^{1/4}) + m\Omega_{L, \ell}^{1/4}}
\pa{\tr(R_T^{1/4}) + n\Omega_{R, \ell}^{1/4}}\,\,,
\end{align*}
where $D=\max_{t\in[T]} \|X_t-X^*\|_{F}$ and $\Omega_{L,\ell},\Omega_{R,\ell}$ are analogous bounds for $\rho_{1:T}^L, \rho_{1:T}^R$ from Lem.~\ref{lem:rho}.
\end{theorem}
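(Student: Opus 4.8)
The plan is to mirror the analysis behind the Sketchy AdaGrad theorem: first exploit the structure of \texttt{FD-update} to sandwich the compensated sketches $\tilde L_t,\tilde R_t$ between the true Kronecker factors and their escaped-mass inflations, and then feed the effective preconditioner $\tilde H_t\defeq\tilde R_t^{1/4}\otimes\tilde L_t^{1/4}$ into the Shampoo regret analysis of \citet{gupta2018shampoo}, so that the only price paid for sketching is an additive escaped-mass term inside a single trace.

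\textbf{Step 1: sandwich the Kronecker factors.} First I would record, from Lem.~\ref{lem:fd-update-shampoo} (the Shampoo analogue of Lem.~\ref{lem:fd-update}), that for every $t$
\[
L_t\preceq\tilde L_t\preceq L_t+\rho^L_{1:t}I_m,\qquad R_t\preceq\tilde R_t\preceq R_t+\rho^R_{1:t}I_n,
\]
together with $\tilde L_t\succeq\tilde L_{t-1}+G_tG_t^\top$ and $\tilde R_t\succeq\tilde R_{t-1}+G_t^\top G_t$: the left inequality holds because $\bar L_t+\rho^L_tI$ is exactly $\bar L_{t-1}+G_tG_t^\top$ with all eigenvalues below $\rho^L_t=\lambda_\ell^{(t)}$ raised up to $\rho^L_t$ (hence $\succeq\bar L_{t-1}+G_tG_t^\top$), while the right inequality is the standard FD lower bound $\bar L_t\preceq\sum_{s\le t}G_sG_s^\top$ plus the compensation. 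In particular $\tilde L_t,\tilde R_t$ are non-decreasing in the Loewner order. Applying Lem.~\ref{lem:rho} to each of the two sketching streams gives $\rho^L_{1:T}\le\Omega_{L,\ell}$ and $\rho^R_{1:T}\le\Omega_{R,\ell}$.

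\textbf{Step 2: regret through the Shampoo potential.} Vectorizing, Alg.~\ref{alg:shampoo} performs $\bvec{X_{t+1}}=\bvec{X_t}-\eta\,\tilde H_t^{-1}\bvec{G_t}$ with $\tilde H_t=\tilde R_t^{1/4}\otimes\tilde L_t^{1/4}$, which is non-decreasing in $t$ since $x\mapsto x^{1/4}$ is operator monotone and the Kronecker product preserves the Loewner order. The standard telescoping bound for preconditioned online gradient descent with non-decreasing preconditioners, together with $\norm{X_t-X^*}_F\le D$ and $\lambda_{\max}(\tilde H_t-\tilde H_{t-1})\le\tr(\tilde H_t-\tilde H_{t-1})$, yields
\[
\regret_T(\texttt{S-Shampoo})\le\frac{D^2}{2\eta}\tr(\tilde H_T)+\frac{\eta}{2}\sum_{t=1}^T\inner{G_t,\tilde L_t^{-1/4}G_t\tilde R_t^{-1/4}}.
\]
For the first term, $\tr(\tilde H_T)=\tr(\tilde L_T^{1/4})\tr(\tilde R_T^{1/4})$; since $X\mapsto\tr(X^{1/4})$ is subadditive on PSD matrices (concavity of $x^{1/4}$ vanishing at $0$), Step 1 gives $\tr(\tilde L_T^{1/4})\le\tr(L_T^{1/4})+m(\rho^L_{1:T})^{1/4}\le\tr(L_T^{1/4})+m\Omega_{L,\ell}^{1/4}$, and likewise for $\tilde R_T$. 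For the second term, $\tilde L_t\succeq L_t$ and $\tilde R_t\succeq R_t$ give $\tilde L_t^{-1/4}\preceq L_t^{-1/4}$ and $\tilde R_t^{-1/4}\preceq R_t^{-1/4}$, and the trace form $\inner{G_t,\,\cdot\,G_t\,\cdot\,}$ is monotone in each PSD argument, so $\inner{G_t,\tilde L_t^{-1/4}G_t\tilde R_t^{-1/4}}\le\inner{G_t,L_t^{-1/4}G_tR_t^{-1/4}}$; the Shampoo trace lemma \citep{gupta2018shampoo}, applicable since $\rank G_t\le r$, then bounds $\sum_t\inner{G_t,L_t^{-1/4}G_tR_t^{-1/4}}\le 2r\,\tr(L_T^{1/4})\tr(R_T^{1/4})$. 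Writing $P\defeq\pa{\tr(L_T^{1/4})+m\Omega_{L,\ell}^{1/4}}\pa{\tr(R_T^{1/4})+n\Omega_{R,\ell}^{1/4}}$, the two terms are at most $\tfrac{D^2}{2\eta}P$ and $\eta r P$, so $\regret_T\le(\tfrac{D^2}{2\eta}+\eta r)P$, and $\eta=D/\sqrt{2r}$ gives $\regret_T\le\sqrt{2r}\,D\,P$, which is the claim.

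\textbf{Expected main obstacle.} The delicate point is reusing the existing Shampoo potential bound verbatim rather than re-deriving a sketched variant. The trick that makes this painless is the monotone substitution $\tilde L_t\mapsto L_t$, $\tilde R_t\mapsto R_t$ inside the gradient-sum term, valid because inflating the preconditioner only shrinks $\inner{G_t,\tilde L_t^{-1/4}G_t\tilde R_t^{-1/4}}$; this confines every escaped-mass loss to the single trace $\tr(\tilde H_T)$, where subadditivity of $X\mapsto\tr(X^{1/4})$ peels it off as the desired $m\Omega_{L,\ell}^{1/4}$ and $n\Omega_{R,\ell}^{1/4}$ corrections. The remaining work is bookkeeping inside Step 1 — reading the Loewner sandwich and monotonicity of $\tilde L_t,\tilde R_t$ directly off the eigenvalue-truncation step of \texttt{FD-update}, and carrying the $\eps I$ regularization so that $L_t,R_t$ stay invertible and the trace lemma applies.
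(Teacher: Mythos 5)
Your proof is correct and follows essentially the same skeleton as the paper: the Loewner sandwich on the compensated sketches $\tilde L_t,\tilde R_t$ coming from \texttt{FD-update}, the standard OMD/Shampoo decomposition into a diameter term $\frac{D^2}{2\eta}\tr(\tilde H_T)$ and a gradient term, and subadditivity of $X\mapsto\tr(X^{1/4})$ to peel off the escaped mass as $m\Omega_{L,\ell}^{1/4}$ and $n\Omega_{R,\ell}^{1/4}$. The one place you diverge from the paper's proof is in the gradient term: you use operator anti-monotonicity of $x\mapsto x^{-1/4}$ together with $\tilde L_t\succeq L_t$, $\tilde R_t\succeq R_t$ to replace the sketched preconditioners by the exact $L_t,R_t$ \emph{before} invoking the Shampoo trace lemma, which then bounds $R_G$ directly by $\eta r\,\tr(L_T^{1/4})\tr(R_T^{1/4})$ (this is actually tighter than what the paper writes). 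The paper instead keeps $\tilde H_t$ throughout — passing through auxiliary matrices $M_t^L,M_t^R$ to establish $\hat H_t\preceq\sqrt{r}\tilde H_t$ via Lemma 8 of \citet{gupta2018shampoo} — and only applies trace subadditivity once, to the combined $\tr(\tilde H_T)$. Both routes land on the same bound; your early monotone substitution is a small simplification of the $R_G$ step and otherwise the argument is the same as the paper's.

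One small bookkeeping note you inherited from the paper itself: as written, Alg.~\ref{alg:shampoo} sets $\tilde L_t=\bar L_t+\rho_{1:t}^L I_m$ while also initializing $\tilde L_0=\eps I_m$, which are inconsistent at $t=0$; the intended update is $\tilde L_t=\bar L_t+\rho_{1:t}^L I_m+\eps I_m$ so that the sandwich $L_t\preceq\tilde L_t\preceq L_t+\rho_{1:t}^L I_m$ you state in Step 1 is exact (with $L_t=\eps I_m+\sum_{s\le t}G_sG_s^\top$) and so that $L_t\succ 0$ makes the inverse-fourth-root substitution well defined. This does not affect the validity of your argument.
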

\begin{proof}
See Sec.~\ref{sec:proof-shampoo}.
\end{proof}
Bounds may be improved analogous to Cor.~\ref{thm:improved} for Alg.~\ref{alg:shampoo}, but we omit the similar statement due to space.

\subsection{Exponentially Weighted FD}
\label{sec:exp-weighted-fd}
This section discusses the modification of Alg.~\ref{alg:dfd} to support exponential moving averages. Early in algorithm development, we noticed that attempting to approximate the unweighted sum of factored gradient covariances $\sum_t G_tG_t^\top$ and $\sum_t G_t^\top G_t$ with FD tended to an estimate of covariance that was roughly $0$, creating numerical instabilities. Note that FD guarantee (Lem.~\ref{lem:rho}) still holds---but the error term $\rho_{1:T}$ becomes greater than $\norm{G_T}$, resulting in a vacuous bound due to lack of spectral decay.

Indeed, Fig.~\ref{fig:intrinsic} motivating this work only confirmed that the exponential moving average $L_t(\beta_2)=\sum_t\beta_2^{T-t}G_tG_t^\top$ exhibits fast spectral decay (and analogously for $R_t$). Luckily, thanks to the recursion $L_{t+1}(\beta_2)=\beta_2 L_t+ G_{t+1}G_{t+1}^\top$, the FD sketch may easily be adopted for this setting.

\begin{observation}\label{observation:exp-fd} 
Given a stream $g_t$ of vectors for $t\in[T]$, sketch size $\ell$, updates $(\rho_t^{(\beta_2)},\bar{G}_t^{(\beta_2)})=\texttt{FD-update}(\beta_2\bar{G}_{t-1}^{(\beta_2)},g_tg_t^{\top})$, and $G_T^{(\beta_2)}=\sum_{t=1}^T\beta_2^{T-t}g_tg_t^\top$,  we have
\[
\norm{\bar{G}_T^{(\beta_2)}-G_T^{(\beta_2)}}\le \rho_{1:T}^{(\beta_2)}\le
\min_{k<\ell}\frac{\sum_{i=k+1}^d \lambda_i\pa{G_T^{(\beta_2)}}}{\ell-k}\,\,.
\]
\end{observation}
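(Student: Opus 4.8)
The plan is to reduce Observation~\ref{observation:exp-fd} to Lemma~\ref{lem:rho} by a change of variables that absorbs the exponential weights into rescaled input vectors, so that the exponentially weighted FD recursion becomes an ordinary FD recursion on a modified stream. First I would define, for a fixed horizon $T$, the rescaled vectors $h_t\defeq\beta_2^{(T-t)/2}g_t$ for $t\in[T]$, so that $\sum_{t=1}^T h_th_t^\top=\sum_{t=1}^T\beta_2^{T-t}g_tg_t^\top=G_T^{(\beta_2)}$. The key observation is that running \texttt{FD-update} on the stream $g_t$ with the $\beta_2$-discounted recursion $(\rho_t^{(\beta_2)},\bar G_t^{(\beta_2)})=\texttt{FD-update}(\beta_2\bar G_{t-1}^{(\beta_2)},g_tg_t^\top)$ produces, after rescaling the state by $\beta_2^{(T-t)/2}$, exactly the iterates of the plain FD recursion on the stream $h_t$. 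Concretely, I would show by induction on $t$ that $\beta_2^{(T-t)/2}$-scaled sketch states match: if we let $\hat G_t$ denote the plain-FD sketch state after processing $h_1,\dots,h_t$, then $\hat G_t=\beta_2^{t-T}\bar G_t^{(\beta_2)}$ is false as stated — I need to be more careful about which power of $\beta_2$ appears — so instead I would track a running rescaling. The cleanest route: define $\hat G_t$ via $\texttt{FD-update}$ on inputs $m_s\defeq\beta_2^{t-s}g_sg_s^\top$ "re-based" at time $t$; but since the rebasing changes with $t$, the honest statement is that $\bar G_t^{(\beta_2)}=\beta_2^{?}(\text{plain FD on }h)$ only at $t=T$.

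A more robust way to organize this, which I expect is what the authors do: observe that the map $M\mapsto\texttt{FD-update}(\beta_2 \bar G, M)$ enjoys a homogeneity property. If $(\rho,\bar G')=\texttt{FD-update}(\bar G, M)$ then $(\beta_2\rho,\beta_2\bar G')=\texttt{FD-update}(\beta_2\bar G,\beta_2 M)$ — this is immediate from Alg.~\ref{alg:dfd} since eigendecomposition, truncation, and uniform eigenvalue subtraction all commute with positive scaling. Using this, I would unroll the discounted recursion and show that $\bar G_T^{(\beta_2)}$ equals the output of plain FD applied to the stream $h_1,\dots,h_T$ (with $B_0=0$), and likewise $\rho_{1:T}^{(\beta_2)}=\sum_t\rho_t^h$ where $\rho_t^h$ are the escaped masses of that plain run. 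Then the operator-norm bound $\norm{\bar G_T^{(\beta_2)}-G_T^{(\beta_2)}}\le\rho_{1:T}^{(\beta_2)}$ is the standard FD sketch guarantee (each \texttt{FD-update} step loses at most $\rho_t I$ in PSD order, telescoped), and the bound $\rho_{1:T}^{(\beta_2)}\le\min_{k<\ell}\tfrac{1}{\ell-k}\sum_{i=k+1}^d\lambda_i(G_T^{(\beta_2)})$ is exactly Lemma~\ref{lem:rho} applied to the stream $h_t$, since $G_T=\sum h_th_t^\top=G_T^{(\beta_2)}$ in that lemma's notation.

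The main obstacle is bookkeeping the time-dependent rescaling correctly: the discount $\beta_2$ is applied to the accumulated state at every step, so a vector $g_s$ observed at time $s$ has been discounted $T-s$ times by the end, but the intermediate sketch states are discounted inconsistently relative to any single global rescaling. The homogeneity lemma is what makes this clean — it lets one "pull out" the accumulated discount factor from the \texttt{FD-update} nonlinear map — so the real work is stating and verifying that homogeneity property against the precise definition in Alg.~\ref{alg:dfd} (in particular checking the eigenvalue-truncation step is scale-equivariant and that the escaped mass scales linearly). I would also need to confirm the operator-norm sketch bound $\norm{\bar G_T-G_T}\le\rho_{1:T}$ for plain FD, which follows by induction: $\bar G_t\preceq \beta_2\bar G_{t-1}+g_tg_t^\top$ with the gap in Loewner order bounded by $\rho_t I$ — but since Lemma~\ref{lem:rho} is cited as given, the left inequality should likewise be quotable or a one-line telescoping argument. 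Everything after the change of variables is routine.
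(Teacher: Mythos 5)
The paper states Observation~\ref{observation:exp-fd} without proof, so there is no official argument to compare against; your rescaling reduction is surely the intended one and is essentially correct. The homogeneity property of \texttt{FD-update} under positive scaling is the right lever: setting $h_t=\beta_2^{(T-t)/2}g_t$ and writing $\hat G_t,\hat\rho_t$ for the plain-FD iterates on the stream $h_1,\dots,h_T$, the induction you sketch gives $\hat G_t=\beta_2^{T-t}\bar G_t^{(\beta_2)}$ and $\hat\rho_t=\beta_2^{T-t}\rho_t^{(\beta_2)}$; at $t=T$ the sketches coincide, and Lemma~\ref{lem:rho} applied to the $h$-stream (whose covariance is exactly $G_T^{(\beta_2)}$) gives the claimed upper bound on $\sum_t\hat\rho_t$.

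The one point you should make fully explicit, because it is easy to get wrong: your reduction produces $\sum_t\hat\rho_t=\sum_t\beta_2^{T-t}\rho_t^{(\beta_2)}$, i.e.\ the \emph{discounted} cumulative escaped mass, not the raw sum $\sum_t\rho_t^{(\beta_2)}$ that the notation $\rho_{1:t}\defeq\sum_s\rho_s$ from Section~\ref{sec:setting} would suggest. The observation is only true under the discounted reading. With the unweighted reading the right inequality can fail badly: take $d=\ell=1$, $g_t\equiv 1$, $\beta_2<1$ fixed, $T$ large; then every $\rho_t^{(\beta_2)}=1$, so $\sum_t\rho_t^{(\beta_2)}=T$, while the right-hand side is at most $G_T^{(\beta_2)}<1/(1-\beta_2)$, bounded independently of $T$. (The left inequality, by contrast, holds with either reading, just with slack in the unweighted one.) Your line ``$\rho_{1:T}^{(\beta_2)}=\sum_t\rho_t^h$'' is exactly the discounted interpretation, so you have the right statement; just flag the reinterpretation of the notation, and when you later reuse $\rho_{1:T}^{(\beta_2)}$ to build the compensated preconditioner (as in Alg.~\ref{alg:shampoo}) be sure to accumulate it in the discounted form, i.e.\ via the recursion $\rho_{1:t}^{(\beta_2)}=\beta_2\,\rho_{1:t-1}^{(\beta_2)}+\rho_t^{(\beta_2)}$, or the analogue of Lemma~\ref{lem:fd-update} will not go through.
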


\section{Experiments}

We investigate how much of Shampoo's quality our low-memory approach can recover (Sec.~\ref{sec:generalization-benchmarks}) and whether the factored covariance exhibits spectral decay amenable to sketching (Sec.~\ref{sec:spectral-analysis}). We relegate convex examples to Appendix~\ref{sec:online-cvx-ex} due to space constraints, which check that Sketchy compares favorably to related work in a setting directly related to the theory.

\subsection{Deep Neural Networks}\label{sec:generalization-benchmarks}

We evaluate the effectiveness of \texttt{S-Shampoo} as a practical second-order algorithm for training networks, including
\begin{itemize}
    \item ResNet-50 \citep{he2016deep} for ImageNet image classification task of ImageNet \citep{ILSVRC15} with random cropping and flipping augmentations.
    \item A 16-layer Conformer model \citep{gulati2020conformer} for the audio transcription task, Librispeech \citep{panayotov2015librispeech}.
    \item A GNN with 5 message-passing steps \citep{battaglia2018relational} on \texttt{ogbg-molpcba} \citep{DBLP:conf/nips/HuFZDRLCL20}, which classifies structural properties of graphically encoded molecule inputs.
\end{itemize}

Our FD variant of Shampoo introduces only one new hyperparameter, the rank $\ell$, which we do not tune, but set to $\ell = 256$, which translates to $4\times$ memory savings for Shampoo blocks of size $1024$ for the accumulators. Shampoo, Adam, and the underlying architectures introduce their own hyperparameters. \texttt{S-Shampoo} inherits those of Shampoo. We tune only common parameters between the three optimizers with the same budgets, selecting based on validation set accuracy. In the ImageNet case, we evaluate final test set performance using ImageNet v2 \citep{recht2019imagenet}, as the ImageNet test set is unavailable. Additinoal training information is available in Appendix~\ref{sec:appendix-training-general}.

\begin{figure}
\centering
\includegraphics[width=\columnwidth]{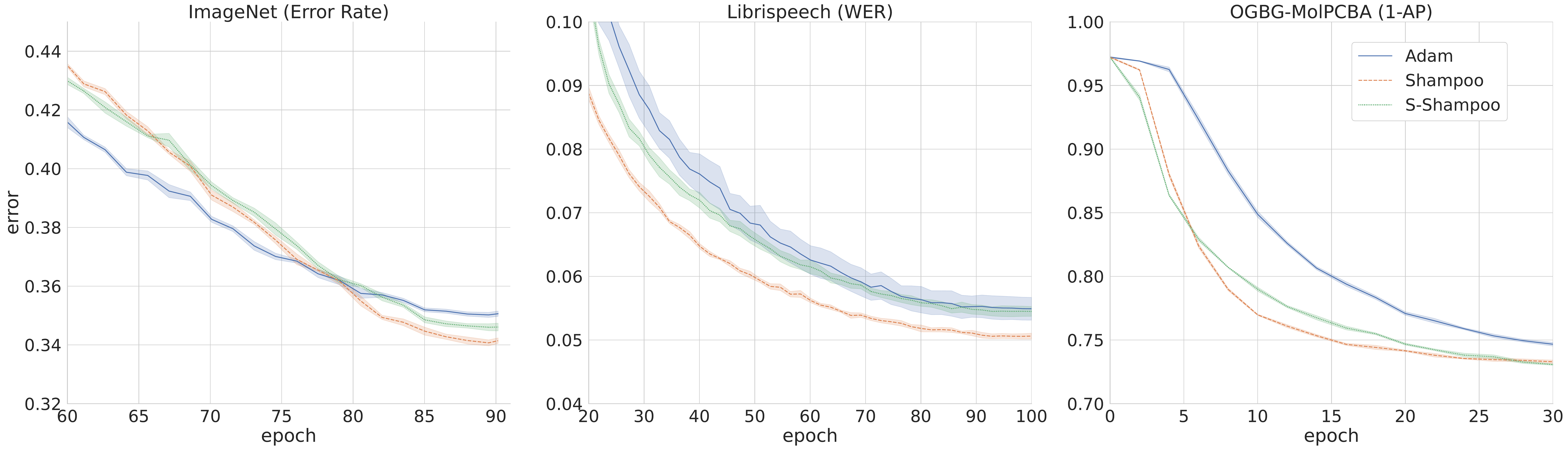}
\caption{Test metrics are classification error rate for top-1 accuracy for ImageNet v2, word error rate for Librispeech, and one minus average precision for OGBG-MolPCBA. We plot the mean of 5 random seeds, with 1.96 times the standard error as error bars. For readers familiar with ImageNet v1, final validation accuracy for Shampoo was 77.69\% (0.03\%), \texttt{S-Shampoo} having 77.18\% (0.04\%), and Adam having 76.76\% (0.03\%), but we emphasize that due to tuning, the test set performance pictured above should be of primary concern.}\label{fig:dnncurves}
\end{figure}

As Fig.~\ref{fig:dnncurves} demonstrates, the second-order information leveraged by Shampoo results in improvements over Adam, a first-order method. Our method performs at least as well as Adam in all cases, \textbf{despite using asympotically less memory to represent covariance} (as Fig. \ref{table:memory} shows, Adam uses $O(mn)$ for a rectangular weight matrix's diagonal accumulators, whereas \texttt{S-Shampoo} uses $O(mk+nk)$). In the GNN case (OBGB-MolPCBA), Shampoo does not perform as well; its training curves indicate overfitting, but we note that \texttt{S-Shampoo} was less susceptible to overfit like Adam.

\subsection{Spectral Analysis}\label{sec:spectral-analysis}

To explain Sketchy's strong performance in Sec.~\ref{sec:generalization-benchmarks}, we inspect the exponential moving average of Kronecker-factored gradient covariance for fast spectral decay. We find that this is indeed the case in practice, so Sketchy's low-rank plus diagonal covariance is representative of true training statistics.

For all our architectures, we tune Shampoo and extract the intermediate gradient covariances over the course of training. To make our curves comparable across architectures, we fix the parameter for the second moment, $\beta_2=0.999$ for these runs. Furthermore, ResNet-50 has a few parameters with dimension 2048, but the largest dimension for any parameter from the other two architectures is 1024, so we use the Blocked Shampoo variant discussed in Sec.~\ref{sec:shampoo} with block size 1024. In other words, weights containing a dimension 2048 are split into two. We tune other Shampoo parameters for each architecture, and plot statistics of Kronecker factors $L_t=\sum_i\beta_2^{t-i}G_tG_t^\top$ and $R_t=\sum_i\beta_2^{t-i}G_t^\top G_t$.

\begin{figure}
\begin{center}
\begin{subfigure}
\centering
\includegraphics[width=0.4\linewidth]{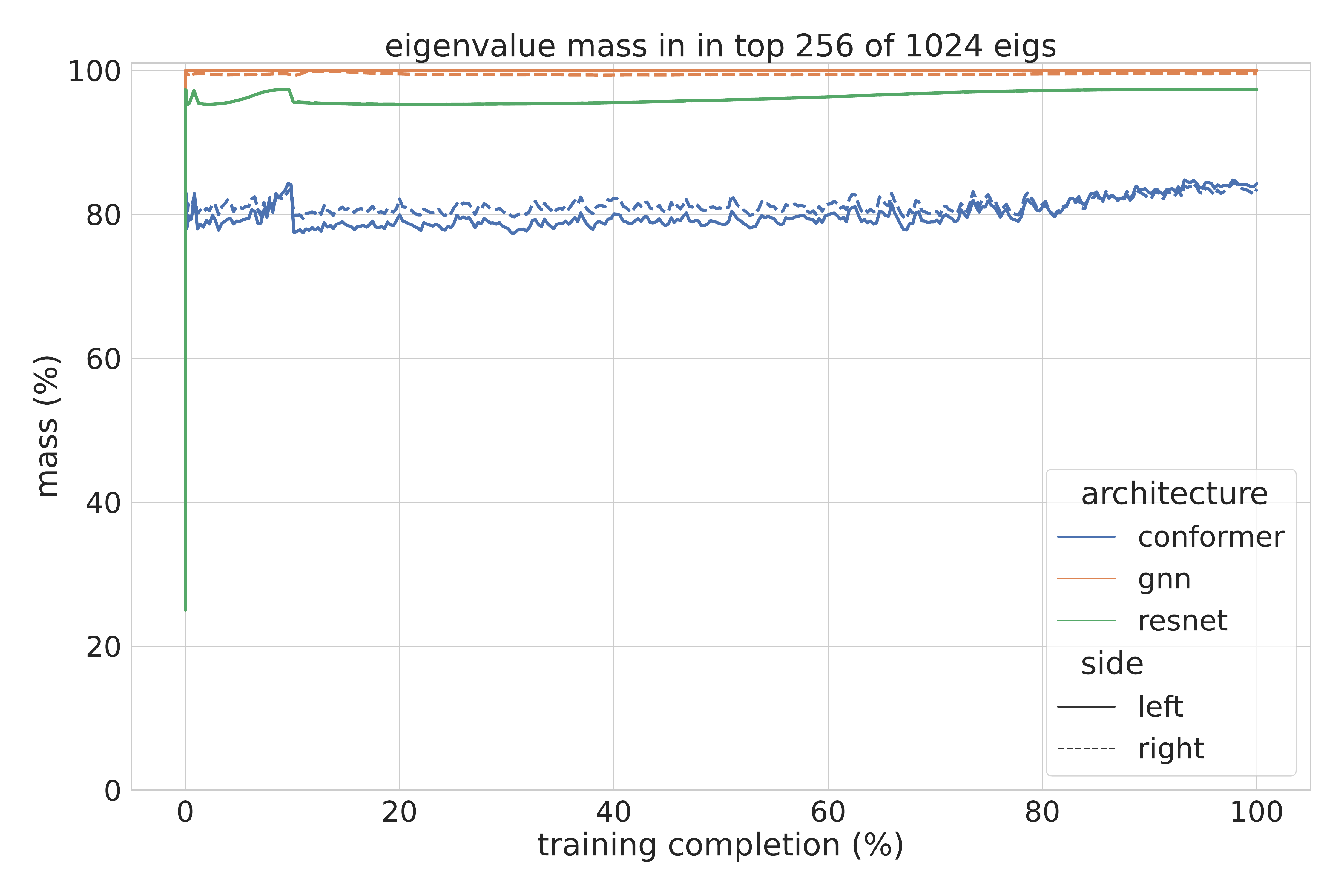}
%\caption{A subfigure}
%\label{fig:sub1}
\end{subfigure}%
\hspace{1cm}
\begin{subfigure}
\centering
\includegraphics[width=0.4\linewidth]{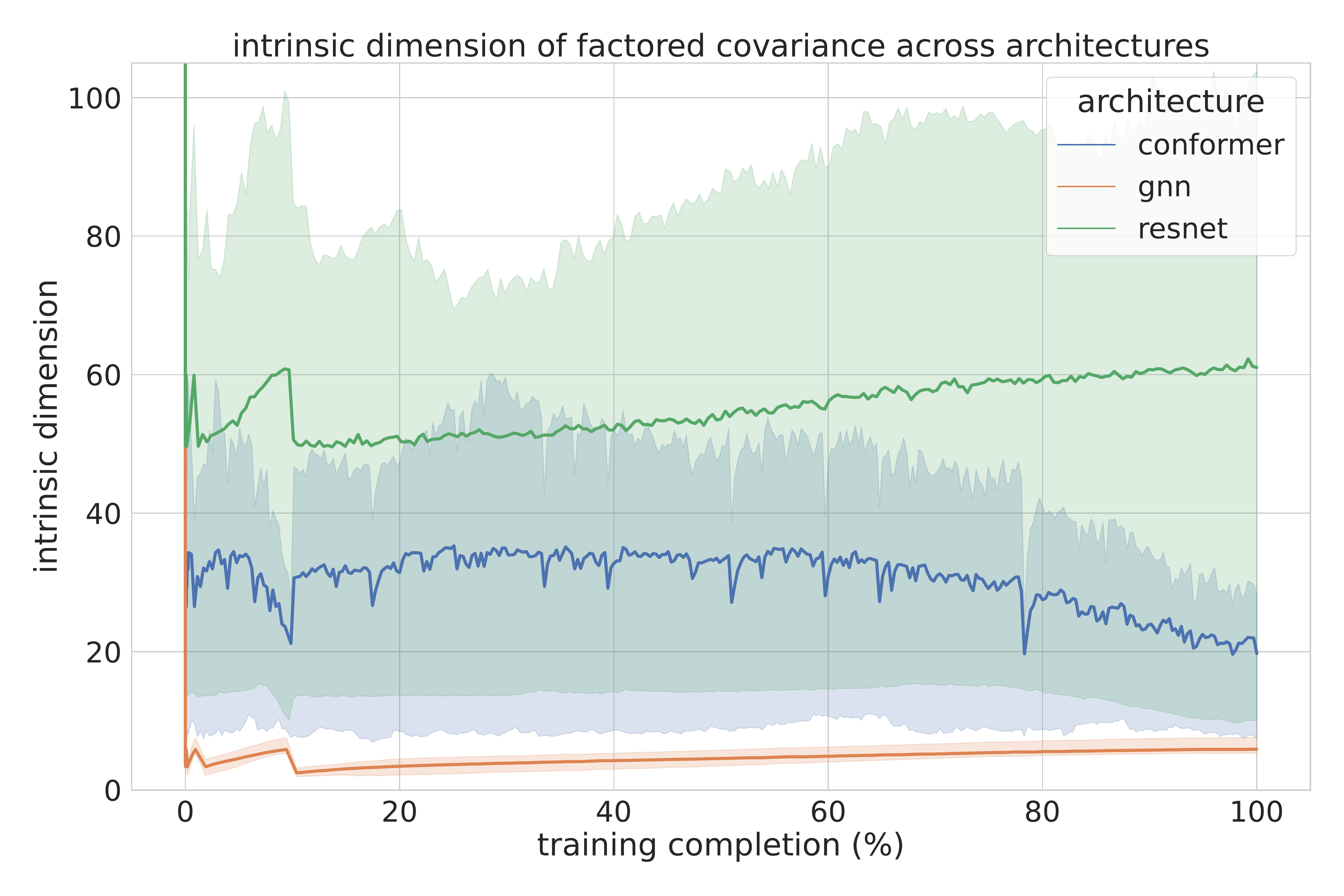}
%\caption{Another subfigure}
%\label{fig:sub2}
\end{subfigure}
\end{center}
\caption{Two measures of spectral decay. As described in Sec.~\ref{sec:spectral-analysis}, we demonstrate spectral decay of $L_t$ and $R_t$ covariance factors of shape $1024\times 1024$. On the left, we take the factors $C$ from the first layer of each network and plot the proportion of spectral mass captured by its top $256$ eigenvalues throughout training, i.e., $\sum_{i=1}^{256}\lambda_i(C)/\sum_{i=1}^{1024}\lambda_i(C)$. On the right, we plot a continuous measure of eigenvalue concentration, the intrinsic dimension $\tr C / \lambda_{\max}(C)$, typically $10\times$ smaller than nominal dimension. We plot the average across all covariances for each network's weight's dimensions, with the shaded regions capturing the interquartile range.}\label{fig:intrinsic}
\end{figure}

In Fig.~\ref{fig:intrinsic}, we plot the intrinsic dimension of Kronecker covariance factors over training for our three settings. The intrinsic dimension determines the rate at which empirical covariance estimates concentrate to their expectation, rather than a random vector's actual dimension, up to logarithmic factors (\citet{vershynin2018high}, Remark 5.6.3). Despite actual dimensionality being over 1024, intrinsic dimension across all architectures stays below 105. A conspicuous phase shift 10\% of the way through training may be the result of a change from linear learning rate warmup to a learning rate decay, starting at roughly 5\% of the way into training.

Given $\beta_2=0.999$, we emphasize that the behavior in Fig.~\ref{fig:intrinsic} is an emergent property of DL training. Though surely a lower $\beta_2$ would naturally result in lower intrinsic dimension (which can still be taken advantage of by Alg.~\ref{alg:AdaGrad} and \ref{alg:shampoo}), we would still expect higher intrinsic dimension if covariances were near-isometries. If we observe some large number $n=10000$ draws $x_i$ of $1024\times d$ matrices with iid $N(0, 1)$ entries, then numerical experiments show that the average intrinsic dimension of $\sum_{i=0}^{n-1}\beta_2^ix_ix_i^\top$ is $324.63$ ($0.52$) and $862.13$ ($0.25$) for $d=1, 64$, respectively, with parenthesized numbers denoting standard error across 20 trials. Values generated this way are larger than the average intrinsic dimension of roughly 10, 30, 50 observed in Fig.~\ref{fig:intrinsic}.

\section{Discussion}\label{sec:discussion}

Up to spectral error, Alg.~\ref{alg:AdaGrad} achieves full-matrix AdaGrad regret despite approximating the \emph{smallest} part of the spectrum of $G_t^{-1/2}$ at each step. Remarkably, these eigenvectors correspond to the \emph{most} easily discernible signals of the covariance for the stream $g_t$. This apparent (and fortuitous) coincidence is resolved by considering the covariance of $\tilde{G}_t^{-1/2}g_t$: whitening the gradient to facilitate optimization best reflects on regret; as a result, approximating top eigenvectors of $G_T$ helps more than the bottom ones.

Our initial implementation focused on correctness rather than physical speed or memory reduction. Engineering optimizers competitive with existing industrial-strength implementations of Adam and Shampoo was out of scope. In implementing FD, we performed updates via the factored SVD of $[\beta_2^{1/2} B_t; G_t]$ rather than the eigendecomposition depicted in Alg.~\ref{alg:dfd}; this avoids squaring, which is unavoidable in Shampoo. For speed, Shampoo subsamples gradients for its covariance estimation and updates its inverse matrix roots intermittently, every fixed number of steps. A tuning script provided by \citet{anil2020scalable} included gradients from every step, but updated roots every 10 steps. Since FD does not separate sampling from its computation of estimated covariance eigendecomposition, we took the more difficult setting for \texttt{S-Shampoo}, only allowing it to simultaneously observe every 10\textsuperscript{th} gradient and update its covariance inverse roots (see Appendix~\ref{sec:step-skipping} for a theoretical justification).

Though step-skipping makes Shampoo and \texttt{S-Shampoo} tractable, future work may explore further speedups: since FD only requires the top $\ell$ eigenvalues, iterative Lanczos-like routines which are accelerator-friendly, such as LOBPCG \citep{knyazev2001toward}, may allow incremental updates to $\tilde{G}_t^{-1/2}$ in factored form with only a few matrix multiplies, \texttt{S-Shampoo} may be able to update more frequently than its non-sketched counterpart, further improving quality.
\section{Conclusion}

In this work, we address a gap in the OCO literature for low-memory optimization with the novel Alg.~\ref{alg:AdaGrad} and demonstrate its relevance to practical non-convex problems such as neural net training (Sec.~\ref{sec:generalization-benchmarks}) by leveraging a new observation about gradient covariance (Sec.~\ref{sec:spectral-analysis}).

The growing disparity between compute capability and memory bandwidth \citep{jouppi2021ten} underscores the need for further research in this direction. Further, large-batch settings reduce the performance gap between first and Shampoo-based second order methods, since the batch-size independent runtime of the optimizer is amortized per example used for the gradient calculation. Even in performing experiments for this work, we would frequently find that faster accelerators were unavailable, but many previous-generation ones were, encouraging us to leverage data-parallel training. For datasets such as Imagenet, we notice the advantage of second order methods in dealing with large batches even at relatively modest sizes, such as 1024; many works on explore several larger multiples of this \citep{keskar2016large}.

Potential for future work includes numerical methods outlined in the previous section as well optimizing the rank $\ell$ across the many tensors in a network, as the spread in Fig.~\ref{fig:intrinsic} highlights the large variance in covariance intrinsic dimension. Furthermore, the inductive biases conferred by the minima which different-rank representations of curvature reach may have problem-dependent generalization implications, a question which we leave for future work. For a comparison of full rank preconditioning's effect versus first-order minima, see \citet{amari2020does}.

\newpage
\bibliographystyle{unsrtnat}
\bibliography{main.bib}

\begin{thebibliography}{65}
\providecommand{\natexlab}[1]{#1}
\providecommand{\url}[1]{\texttt{#1}}
\expandafter\ifx\csname urlstyle\endcsname\relax
  \providecommand{\doi}[1]{doi: #1}\else
  \providecommand{\doi}{doi: \begingroup \urlstyle{rm}\Url}\fi

\bibitem[Kingma and Ba(2015)]{kingma2015adam}
Diederik~P Kingma and Jimmy Ba.
\newblock Adam: A method for stochastic optimization.
\newblock In \emph{ICLR (Poster)}, 2015.

\bibitem[Duchi et~al.(2011)Duchi, Hazan, and Singer]{duchi2011adaptive}
John Duchi, Elad Hazan, and Yoram Singer.
\newblock Adaptive subgradient methods for online learning and stochastic
  optimization.
\newblock \emph{Journal of machine learning research}, 12\penalty0 (7), 2011.

\bibitem[He et~al.(2016)He, Zhang, Ren, and Sun]{he2016deep}
Kaiming He, Xiangyu Zhang, Shaoqing Ren, and Jian Sun.
\newblock Deep residual learning for image recognition.
\newblock In \emph{Proceedings of the IEEE conference on computer vision and
  pattern recognition}, pages 770--778, 2016.

\bibitem[Martens and Grosse(2015)]{martens2015optimizing}
James Martens and Roger Grosse.
\newblock Optimizing neural networks with kronecker-factored approximate
  curvature.
\newblock In \emph{International conference on machine learning}, pages
  2408--2417. PMLR, 2015.

\bibitem[Gupta et~al.(2018)Gupta, Koren, and Singer]{gupta2018shampoo}
Vineet Gupta, Tomer Koren, and Yoram Singer.
\newblock Shampoo: Preconditioned stochastic tensor optimization.
\newblock In \emph{International Conference on Machine Learning}, pages
  1842--1850. PMLR, 2018.

\bibitem[Agarwal et~al.(2019)Agarwal, Bullins, Chen, Hazan, Singh, Zhang, and
  Zhang]{ggt}
Naman Agarwal, Brian Bullins, Xinyi Chen, Elad Hazan, Karan Singh, Cyril Zhang,
  and Yi~Zhang.
\newblock Efficient full-matrix adaptive regularization.
\newblock In Kamalika Chaudhuri and Ruslan Salakhutdinov, editors,
  \emph{Proceedings of the 36th International Conference on Machine Learning},
  volume~97 of \emph{Proceedings of Machine Learning Research}, pages 102--110.
  PMLR, 09--15 Jun 2019.

\bibitem[Chen et~al.(2019)Chen, Agarwal, Hazan, Zhang, and
  Zhang]{chen2019extreme}
Xinyi Chen, Naman Agarwal, Elad Hazan, Cyril Zhang, and Yi~Zhang.
\newblock Extreme tensoring for low-memory preconditioning.
\newblock In \emph{International Conference on Learning Representations}, 2019.

\bibitem[Anil et~al.(2019)Anil, Gupta, Koren, and Singer]{anil2019memory}
Rohan Anil, Vineet Gupta, Tomer Koren, and Yoram Singer.
\newblock Memory efficient adaptive optimization.
\newblock \emph{Advances in Neural Information Processing Systems}, 32, 2019.

\bibitem[Anil et~al.(2020)Anil, Gupta, Koren, Regan, and
  Singer]{anil2020scalable}
Rohan Anil, Vineet Gupta, Tomer Koren, Kevin Regan, and Yoram Singer.
\newblock Scalable second order optimization for deep learning.
\newblock \emph{arXiv preprint arXiv:2002.09018}, 2020.

\bibitem[Anil et~al.(2022)Anil, Gadanho, Huang, Jacob, Li, Lin, Phillips, Pop,
  Regan, Shamir, Shivanna, and Yan]{anil2022factory}
Rohan Anil, Sandra Gadanho, Da~Huang, Nijith Jacob, Zhuoshu Li, Dong Lin, Todd
  Phillips, Cristina Pop, Kevin Regan, Gil~I. Shamir, Rakesh Shivanna, and Qiqi
  Yan.
\newblock On the factory floor: Ml engineering for industrial-scale ads
  recommendation models, 2022.

\bibitem[Jouppi et~al.(2021)Jouppi, Yoon, Ashcraft, Gottscho, Jablin, Kurian,
  Laudon, Li, Ma, Ma, et~al.]{jouppi2021ten}
Norman~P Jouppi, Doe~Hyun Yoon, Matthew Ashcraft, Mark Gottscho, Thomas~B
  Jablin, George Kurian, James Laudon, Sheng Li, Peter Ma, Xiaoyu Ma, et~al.
\newblock Ten lessons from three generations shaped google’s tpuv4i:
  Industrial product.
\newblock In \emph{2021 ACM/IEEE 48th Annual International Symposium on
  Computer Architecture (ISCA)}, pages 1--14. IEEE, 2021.

\bibitem[Dally et~al.(2021)Dally, Keckler, and Kirk]{dally2021evolution}
William~J Dally, Stephen~W Keckler, and David~B Kirk.
\newblock Evolution of the graphics processing unit (gpu).
\newblock \emph{IEEE Micro}, 41\penalty0 (6):\penalty0 42--51, 2021.

\bibitem[Ghashami et~al.(2016)Ghashami, Liberty, Phillips, and
  Woodruff]{ghashami2016frequent}
Mina Ghashami, Edo Liberty, Jeff~M Phillips, and David~P Woodruff.
\newblock Frequent directions: Simple and deterministic matrix sketching.
\newblock \emph{SIAM Journal on Computing}, 45\penalty0 (5):\penalty0
  1762--1792, 2016.

\bibitem[Shazeer and Stern(2018)]{shazeer2018adafactor}
Noam Shazeer and Mitchell Stern.
\newblock Adafactor: Adaptive learning rates with sublinear memory cost.
\newblock In \emph{International Conference on Machine Learning}, pages
  4596--4604. PMLR, 2018.

\bibitem[Hazan et~al.(2016)]{hazan2016introduction}
Elad Hazan et~al.
\newblock Introduction to online convex optimization.
\newblock \emph{Foundations and Trends{\textregistered} in Optimization},
  2\penalty0 (3-4):\penalty0 157--325, 2016.

\bibitem[Liberty(2022)]{liberty2022even}
Edo Liberty.
\newblock Even simpler deterministic matrix sketching.
\newblock \emph{arXiv preprint arXiv:2202.01780}, 2022.

\bibitem[Sagun et~al.(2016)Sagun, Bottou, and LeCun]{sagun2016eigenvalues}
Levent Sagun, Leon Bottou, and Yann LeCun.
\newblock Eigenvalues of the hessian in deep learning: Singularity and beyond.
\newblock \emph{arXiv preprint arXiv:1611.07476}, 2016.

\bibitem[Sagun et~al.(2017)Sagun, Evci, Guney, Dauphin, and
  Bottou]{sagun2017empirical}
Levent Sagun, Utku Evci, V~Ugur Guney, Yann Dauphin, and Leon Bottou.
\newblock Empirical analysis of the hessian of over-parametrized neural
  networks.
\newblock \emph{arXiv preprint arXiv:1706.04454}, 2017.

\bibitem[Ghorbani et~al.(2019)Ghorbani, Krishnan, and
  Xiao]{ghorbani2019investigation}
Behrooz Ghorbani, Shankar Krishnan, and Ying Xiao.
\newblock An investigation into neural net optimization via hessian eigenvalue
  density.
\newblock In \emph{International Conference on Machine Learning}, pages
  2232--2241. PMLR, 2019.

\bibitem[Sankar et~al.(2021)Sankar, Khasbage, Vigneswaran, and
  Balasubramanian]{sankar2021deeper}
Adepu~Ravi Sankar, Yash Khasbage, Rahul Vigneswaran, and Vineeth~N
  Balasubramanian.
\newblock A deeper look at the hessian eigenspectrum of deep neural networks
  and its applications to regularization.
\newblock In \emph{Proceedings of the AAAI Conference on Artificial
  Intelligence}, volume~35, pages 9481--9488, 2021.

\bibitem[Agarwal et~al.(2017)Agarwal, Allen-Zhu, Bullins, Hazan, and
  Ma]{agarwal2017finding}
Naman Agarwal, Zeyuan Allen-Zhu, Brian Bullins, Elad Hazan, and Tengyu Ma.
\newblock Finding approximate local minima faster than gradient descent.
\newblock In \emph{Proceedings of the 49th Annual ACM SIGACT Symposium on
  Theory of Computing}, pages 1195--1199, 2017.

\bibitem[Gur-Ari et~al.(2018)Gur-Ari, Roberts, and Dyer]{gur2018gradient}
Guy Gur-Ari, Daniel~A Roberts, and Ethan Dyer.
\newblock Gradient descent happens in a tiny subspace.
\newblock \emph{arXiv preprint arXiv:1812.04754}, 2018.

\bibitem[Bakker et~al.(2018)Bakker, Henry, and Hodas]{bakker2018understanding}
Craig Bakker, Michael~J Henry, and Nathan~O Hodas.
\newblock Understanding and exploiting the low-rank structure of deep networks.
\newblock 2018.

\bibitem[Xie et~al.(2022)Xie, Tang, He, Sun, and Li]{xie2022rethinking}
Zeke Xie, Qian-Yuan Tang, Zheng He, Mingming Sun, and Ping Li.
\newblock Rethinking the structure of stochastic gradients: Empirical and
  statistical evidence.
\newblock \emph{arXiv preprint arXiv:2212.02083}, 2022.

\bibitem[Krummenacher et~al.(2016)Krummenacher, McWilliams, Kilcher, Buhmann,
  and Meinshausen]{krummenacher2016scalable}
Gabriel Krummenacher, Brian McWilliams, Yannic Kilcher, Joachim~M Buhmann, and
  Nicolai Meinshausen.
\newblock Scalable adaptive stochastic optimization using random projections.
\newblock \emph{Advances in Neural Information Processing Systems}, 29, 2016.

\bibitem[Wan and Zhang(2021)]{wan2021efficient}
Yuanyu Wan and Lijun Zhang.
\newblock Efficient adaptive online learning via frequent directions.
\newblock \emph{IEEE Transactions on Pattern Analysis and Machine
  Intelligence}, 2021.

\bibitem[Luo et~al.(2016)Luo, Agarwal, Cesa-Bianchi, and
  Langford]{luo2016efficient}
Haipeng Luo, Alekh Agarwal, Nicolo Cesa-Bianchi, and John Langford.
\newblock Efficient second order online learning by sketching.
\newblock \emph{Advances in Neural Information Processing Systems}, 29, 2016.

\bibitem[Chen et~al.(2021)Chen, Luo, Zhang, Yu, and Lian]{chen2021efficient}
Cheng Chen, Luo Luo, Weinan Zhang, Yong Yu, and Yijiang Lian.
\newblock Efficient and robust high-dimensional linear contextual bandits.
\newblock In \emph{Proceedings of the Twenty-Ninth International Conference on
  International Joint Conferences on Artificial Intelligence}, pages
  4259--4265, 2021.

\bibitem[Devlin et~al.(2019)Devlin, Chang, Lee, and Toutanova]{devlin2019bert}
Jacob Devlin, Ming-Wei Chang, Kenton Lee, and Kristina Toutanova.
\newblock Bert: Pre-training of deep bidirectional transformers for language
  understanding.
\newblock In \emph{Proceedings of the 2019 Conference of the North American
  Chapter of the Association for Computational Linguistics: Human Language
  Technologies, Volume 1 (Long and Short Papers)}, pages 4171--4186, 2019.

\bibitem[Ba et~al.(2016)Ba, Kiros, and Hinton]{ba2016layer}
Jimmy~Lei Ba, Jamie~Ryan Kiros, and Geoffrey~E Hinton.
\newblock Layer normalization.
\newblock \emph{arXiv preprint arXiv:1607.06450}, 2016.

\bibitem[Shazeer et~al.(2017)Shazeer, Mirhoseini, Maziarz, Davis, Le, Hinton,
  and Dean]{shazeer2017outrageously}
Noam Shazeer, Azalia Mirhoseini, Krzysztof Maziarz, Andy Davis, Quoc Le,
  Geoffrey Hinton, and Jeff Dean.
\newblock Outrageously large neural networks: The sparsely-gated
  mixture-of-experts layer.
\newblock \emph{arXiv preprint arXiv:1701.06538}, 2017.

\bibitem[Cutkosky(2020)]{param_free_AdaGrad}
Ashok Cutkosky.
\newblock Better full-matrix regret via parameter-free online learning.
\newblock In H.~Larochelle, M.~Ranzato, R.~Hadsell, M.F. Balcan, and H.~Lin,
  editors, \emph{Advances in Neural Information Processing Systems}, volume~33,
  pages 8836--8846. Curran Associates, Inc., 2020.

\bibitem[Russakovsky et~al.(2015)Russakovsky, Deng, Su, Krause, Satheesh, Ma,
  Huang, Karpathy, Khosla, Bernstein, Berg, and Fei-Fei]{ILSVRC15}
Olga Russakovsky, Jia Deng, Hao Su, Jonathan Krause, Sanjeev Satheesh, Sean Ma,
  Zhiheng Huang, Andrej Karpathy, Aditya Khosla, Michael Bernstein,
  Alexander~C. Berg, and Li~Fei-Fei.
\newblock {ImageNet Large Scale Visual Recognition Challenge}.
\newblock \emph{International Journal of Computer Vision (IJCV)}, 115\penalty0
  (3):\penalty0 211--252, 2015.
\newblock \doi{10.1007/s11263-015-0816-y}.

\bibitem[Gulati et~al.(2020)Gulati, Qin, Chiu, Parmar, Zhang, Yu, Han, Wang,
  Zhang, Wu, et~al.]{gulati2020conformer}
Anmol Gulati, James Qin, Chung-Cheng Chiu, Niki Parmar, Yu~Zhang, Jiahui Yu,
  Wei Han, Shibo Wang, Zhengdong Zhang, Yonghui Wu, et~al.
\newblock Conformer: Convolution-augmented transformer for speech recognition.
\newblock \emph{arXiv preprint arXiv:2005.08100}, 2020.

\bibitem[Panayotov et~al.(2015)Panayotov, Chen, Povey, and
  Khudanpur]{panayotov2015librispeech}
Vassil Panayotov, Guoguo Chen, Daniel Povey, and Sanjeev Khudanpur.
\newblock Librispeech: an asr corpus based on public domain audio books.
\newblock In \emph{Acoustics, Speech and Signal Processing (ICASSP), 2015 IEEE
  International Conference on}, pages 5206--5210. IEEE, 2015.

\bibitem[Battaglia et~al.(2018)Battaglia, Hamrick, Bapst, Sanchez-Gonzalez,
  Zambaldi, Malinowski, Tacchetti, Raposo, Santoro, Faulkner,
  et~al.]{battaglia2018relational}
Peter~W Battaglia, Jessica~B Hamrick, Victor Bapst, Alvaro Sanchez-Gonzalez,
  Vinicius Zambaldi, Mateusz Malinowski, Andrea Tacchetti, David Raposo, Adam
  Santoro, Ryan Faulkner, et~al.
\newblock Relational inductive biases, deep learning, and graph networks.
\newblock \emph{arXiv preprint arXiv:1806.01261}, 2018.

\bibitem[Hu et~al.(2020)Hu, Fey, Zitnik, Dong, Ren, Liu, Catasta, and
  Leskovec]{DBLP:conf/nips/HuFZDRLCL20}
Weihua Hu, Matthias Fey, Marinka Zitnik, Yuxiao Dong, Hongyu Ren, Bowen Liu,
  Michele Catasta, and Jure Leskovec.
\newblock Open graph benchmark: Datasets for machine learning on graphs.
\newblock In Hugo Larochelle, Marc~Aurelio Ranzato, Raia Hadsell,
  Maria{-}Florina Balcan, and Hsuan{-}Tien Lin, editors, \emph{Advances in
  Neural Information Processing Systems 33: Annual Conference on Neural
  Information Processing Systems 2020, NeurIPS 2020, December 6-12, 2020,
  virtual}, 2020.

\bibitem[Recht et~al.(2019)Recht, Roelofs, Schmidt, and
  Shankar]{recht2019imagenet}
Benjamin Recht, Rebecca Roelofs, Ludwig Schmidt, and Vaishaal Shankar.
\newblock Do imagenet classifiers generalize to imagenet?
\newblock In \emph{International Conference on Machine Learning}, pages
  5389--5400, 2019.

\bibitem[Vershynin(2018)]{vershynin2018high}
Roman Vershynin.
\newblock \emph{High-dimensional probability: An introduction with applications
  in data science}, volume~47.
\newblock Cambridge university press, 2018.

\bibitem[Knyazev(2001)]{knyazev2001toward}
Andrew~V Knyazev.
\newblock Toward the optimal preconditioned eigensolver: Locally optimal block
  preconditioned conjugate gradient method.
\newblock \emph{SIAM journal on scientific computing}, 23\penalty0
  (2):\penalty0 517--541, 2001.

\bibitem[Keskar et~al.(2016)Keskar, Mudigere, Nocedal, Smelyanskiy, and
  Tang]{keskar2016large}
Nitish~Shirish Keskar, Dheevatsa Mudigere, Jorge Nocedal, Mikhail Smelyanskiy,
  and Ping Tak~Peter Tang.
\newblock On large-batch training for deep learning: Generalization gap and
  sharp minima.
\newblock \emph{arXiv preprint arXiv:1609.04836}, 2016.

\bibitem[Amari et~al.(2020)Amari, Ba, Grosse, Li, Nitanda, Suzuki, Wu, and
  Xu]{amari2020does}
Shun-ichi Amari, Jimmy Ba, Roger~Baker Grosse, Xuechen Li, Atsushi Nitanda,
  Taiji Suzuki, Denny Wu, and Ji~Xu.
\newblock When does preconditioning help or hurt generalization?
\newblock In \emph{International Conference on Learning Representations}, 2020.

\bibitem[Luo et~al.(2019)Luo, Chen, Zhang, Li, and Zhang]{luo2019robust}
Luo Luo, Cheng Chen, Zhihua Zhang, Wu-Jun Li, and Tong Zhang.
\newblock Robust frequent directions with application in online learning.
\newblock \emph{The Journal of Machine Learning Research}, 20\penalty0
  (1):\penalty0 1697--1737, 2019.

\bibitem[Chang and Lin(2011)]{chang2011libsvm}
Chih-Chung Chang and Chih-Jen Lin.
\newblock Libsvm: a library for support vector machines.
\newblock \emph{ACM transactions on intelligent systems and technology (TIST)},
  2\penalty0 (3):\penalty0 1--27, 2011.

\bibitem[Hazan et~al.(2014)Hazan, Koren, and Levy]{hazan2014logistic}
Elad Hazan, Tomer Koren, and Kfir~Y Levy.
\newblock Logistic regression: Tight bounds for stochastic and online
  optimization.
\newblock In \emph{Conference on Learning Theory}, pages 197--209. PMLR, 2014.

\bibitem[Kalai and Vempala(2005)]{kalai2005efficient}
Adam Kalai and Santosh Vempala.
\newblock Efficient algorithms for online decision problems.
\newblock \emph{Journal of Computer and System Sciences}, 71\penalty0
  (3):\penalty0 291--307, 2005.

\bibitem[Audenaert(2014)]{audenaert2014generalisation}
Koenraad~MR Audenaert.
\newblock A generalisation of mirsky's singular value inequalities.
\newblock \emph{arXiv preprint arXiv:1410.4941}, 2014.

\bibitem[Gilmer et~al.(2021)Gilmer, Dahl, and Nado]{init2winit2021github}
Justin~M. Gilmer, George~E. Dahl, and Zachary Nado.
\newblock {init2winit}: a jax codebase for initialization, optimization, and
  tuning research, 2021.
\newblock URL \url{http://github.com/google/init2winit}.

\bibitem[Bradbury et~al.(2018)Bradbury, Frostig, Hawkins, Johnson, Leary,
  Maclaurin, Necula, Paszke, Vander{P}las, Wanderman-{M}ilne, and
  Zhang]{jax2018github}
James Bradbury, Roy Frostig, Peter Hawkins, Matthew~James Johnson, Chris Leary,
  Dougal Maclaurin, George Necula, Adam Paszke, Jake Vander{P}las, Skye
  Wanderman-{M}ilne, and Qiao Zhang.
\newblock {JAX}: composable transformations of {P}ython+{N}um{P}y programs,
  2018.

\bibitem[Heek et~al.(2020)Heek, Levskaya, Oliver, Ritter, Rondepierre, Steiner,
  and van {Z}ee]{flax2020github}
Jonathan Heek, Anselm Levskaya, Avital Oliver, Marvin Ritter, Bertrand
  Rondepierre, Andreas Steiner, and Marc van {Z}ee.
\newblock {F}lax: A neural network library and ecosystem for {JAX}, 2020.

\bibitem[TFD(2023)]{TFDS}
Tensorflow datasets, a collection of ready-to-use datasets.
\newblock \url{https://www.tensorflow.org/datasets}, 2023.

\bibitem[Waskom(2021)]{Waskom2021}
Michael~L. Waskom.
\newblock seaborn: statistical data visualization.
\newblock \emph{Journal of Open Source Software}, 6\penalty0 (60):\penalty0
  3021, 2021.
\newblock \doi{10.21105/joss.03021}.

\bibitem[Hunter(2007)]{Hunter:2007}
J.~D. Hunter.
\newblock Matplotlib: A 2d graphics environment.
\newblock \emph{Computing in Science \& Engineering}, 9\penalty0 (3):\penalty0
  90--95, 2007.
\newblock \doi{10.1109/MCSE.2007.55}.

\bibitem[Harris et~al.(2020)Harris, Millman, van~der Walt, Gommers, Virtanen,
  Cournapeau, Wieser, Taylor, Berg, Smith, Kern, Picus, Hoyer, van Kerkwijk,
  Brett, Haldane, del Rio, Wiebe, Peterson, G{\'{e}}rard-Marchant, Sheppard,
  Reddy, Weckesser, Abbasi, Gohlke, and Oliphant]{harris2020array}
Charles~R. Harris, K.~Jarrod Millman, St{\'{e}}fan~J. van~der Walt, Ralf
  Gommers, Pauli Virtanen, David Cournapeau, Eric Wieser, Julian Taylor,
  Sebastian Berg, Nathaniel~J. Smith, Robert Kern, Matti Picus, Stephan Hoyer,
  Marten~H. van Kerkwijk, Matthew Brett, Allan Haldane, Jaime~Fern{\'{a}}ndez
  del Rio, Mark Wiebe, Pearu Peterson, Pierre G{\'{e}}rard-Marchant, Kevin
  Sheppard, Tyler Reddy, Warren Weckesser, Hameer Abbasi, Christoph Gohlke, and
  Travis~E. Oliphant.
\newblock Array programming with {NumPy}.
\newblock \emph{Nature}, 585\penalty0 (7825):\penalty0 357--362, September
  2020.
\newblock \doi{10.1038/s41586-020-2649-2}.

\bibitem[Virtanen et~al.(2020)Virtanen, Gommers, Oliphant, Haberland, Reddy,
  Cournapeau, Burovski, Peterson, Weckesser, Bright, {van der Walt}, Brett,
  Wilson, Millman, Mayorov, Nelson, Jones, Kern, Larson, Carey, Polat, Feng,
  Moore, {VanderPlas}, Laxalde, Perktold, Cimrman, Henriksen, Quintero, Harris,
  Archibald, Ribeiro, Pedregosa, {van Mulbregt}, and {SciPy 1.0
  Contributors}]{2020SciPy-NMeth}
Pauli Virtanen, Ralf Gommers, Travis~E. Oliphant, Matt Haberland, Tyler Reddy,
  David Cournapeau, Evgeni Burovski, Pearu Peterson, Warren Weckesser, Jonathan
  Bright, St{\'e}fan~J. {van der Walt}, Matthew Brett, Joshua Wilson, K.~Jarrod
  Millman, Nikolay Mayorov, Andrew R.~J. Nelson, Eric Jones, Robert Kern, Eric
  Larson, C~J Carey, {\.I}lhan Polat, Yu~Feng, Eric~W. Moore, Jake
  {VanderPlas}, Denis Laxalde, Josef Perktold, Robert Cimrman, Ian Henriksen,
  E.~A. Quintero, Charles~R. Harris, Anne~M. Archibald, Ant{\^o}nio~H. Ribeiro,
  Fabian Pedregosa, Paul {van Mulbregt}, and {SciPy 1.0 Contributors}.
\newblock {{SciPy} 1.0: Fundamental Algorithms for Scientific Computing in
  Python}.
\newblock \emph{Nature Methods}, 17:\penalty0 261--272, 2020.
\newblock \doi{10.1038/s41592-019-0686-2}.

\bibitem[{MLCommons® open engineering consortium}(2023)]{mlcommons}
{MLCommons® open engineering consortium}.
\newblock {MLC}ommons {A}lgorithmic {E}fficiency.
\newblock \url{https://github.com/mlcommons/algorithmic-efficiency}, 2023.

\bibitem[Reddi et~al.(2019)Reddi, Cheng, Kanter, Mattson, Schmuelling, Wu,
  Anderson, Breughe, Charlebois, Chou, Chukka, Coleman, Davis, Deng, Diamos,
  Duke, Fick, Gardner, Hubara, Idgunji, Jablin, Jiao, John, Kanwar, Lee, Liao,
  Lokhmotov, Massa, Meng, Micikevicius, Osborne, Pekhimenko, Rajan, Sequeira,
  Sirasao, Sun, Tang, Thomson, Wei, Wu, Xu, Yamada, Yu, Yuan, Zhong, Zhang, and
  Zhou]{reddi2019mlperf}
Vijay~Janapa Reddi, Christine Cheng, David Kanter, Peter Mattson, Guenther
  Schmuelling, Carole-Jean Wu, Brian Anderson, Maximilien Breughe, Mark
  Charlebois, William Chou, Ramesh Chukka, Cody Coleman, Sam Davis, Pan Deng,
  Greg Diamos, Jared Duke, Dave Fick, J.~Scott Gardner, Itay Hubara, Sachin
  Idgunji, Thomas~B. Jablin, Jeff Jiao, Tom~St. John, Pankaj Kanwar, David Lee,
  Jeffery Liao, Anton Lokhmotov, Francisco Massa, Peng Meng, Paulius
  Micikevicius, Colin Osborne, Gennady Pekhimenko, Arun Tejusve~Raghunath
  Rajan, Dilip Sequeira, Ashish Sirasao, Fei Sun, Hanlin Tang, Michael Thomson,
  Frank Wei, Ephrem Wu, Lingjie Xu, Koichi Yamada, Bing Yu, George Yuan, Aaron
  Zhong, Peizhao Zhang, and Yuchen Zhou.
\newblock Mlperf inference benchmark, 2019.

\bibitem[Loshchilov and Hutter(2017)]{loshchilov2017decoupled}
Ilya Loshchilov and Frank Hutter.
\newblock Decoupled weight decay regularization.
\newblock \emph{arXiv preprint arXiv:1711.05101}, 2017.

\bibitem[Dayma and Anil(2022)]{dayma2022}
Boris Dayma and Rohan Anil.
\newblock {Evaluation of Distributed Shampoo: Comparison of optimizers:
  Distributed Shampoo, Adam \& Adafactor}.
\newblock
  \href{https://wandb.ai/dalle-mini/dalle-mini/reports/Evaluation-of-Distributed-Shampoo--VmlldzoxNDIyNTUy}{Weights
  \& Biases Report}, 2022.

\bibitem[Hinton et~al.(2012)Hinton, Srivastava, and Swersky]{hinton2012neural}
Geoffrey Hinton, Nitish Srivastava, and Kevin Swersky.
\newblock Neural networks for machine learning lecture 6a overview of
  mini-batch gradient descent.
\newblock \emph{Cited on}, 14\penalty0 (8):\penalty0 2, 2012.

\bibitem[Agarwal et~al.(2020)Agarwal, Anil, Hazan, Koren, and
  Zhang]{agarwal2020disentangling}
Naman Agarwal, Rohan Anil, Elad Hazan, Tomer Koren, and Cyril Zhang.
\newblock Disentangling adaptive gradient methods from learning rates.
\newblock \emph{arXiv preprint arXiv:2002.11803}, 2020.

\bibitem[Ando(1979)]{ando1979concavity}
Tsuyoshi Ando.
\newblock Concavity of certain maps on positive definite matrices and
  applications to hadamard products.
\newblock \emph{Linear algebra and its applications}, 26:\penalty0 203--241,
  1979.

\bibitem[Bhatia(1997)]{bhatia2005}
Rajendra Bhatia.
\newblock \emph{Matrix analysis}.
\newblock Springer, 1997.

\bibitem[Petersen et~al.(2008)Petersen, Pedersen, et~al.]{petersen2008matrix}
Kaare~Brandt Petersen, Michael~Syskind Pedersen, et~al.
\newblock The matrix cookbook.
\newblock \emph{Technical University of Denmark}, 7\penalty0 (15):\penalty0
  510, 2008.

\bibitem[Brockett(2015)]{brockett2015finite}
Roger~W Brockett.
\newblock \emph{Finite dimensional linear systems}.
\newblock SIAM, 2015.

\end{thebibliography}
\newpage
\appendix

\section{Online Convex Examples}\label{sec:online-cvx-ex}

In this section, we evaluate the performance of \texttt{S-Adagrad} in the classical online convex optimization setting.

We consider three different datasets, summarized in Tbl.~\ref{tbl:oco-example-counts}. For each dataset, we evaluate several works based on the frequent directions sketch, Ada-FD \cite{wan2021efficient}, FD-SON \cite{luo2016efficient}, RFD-SON \cite{luo2019robust}. As baselines we also add online gradient descent (OGD) and diagonal Adagrad \cite{duchi2011adaptive}. For each dataset, we consider the online convex loss of a logistic binary classification loss over a linear learner on the features in each dataset augmented with a constant feature for the intercept.

\begin{table}
\centering
\caption{Dataset statistics for simple online convex examples. We stream through each example, providing a logistic loss over the binary label and linear predictor generated by an OCO learner. The feature count includes an all-constant intercept column as well. Datasets were retrieved from \citet{chang2011libsvm}.}\label{tbl:oco-example-counts}
\begin{tabular}{lcc}
\hline
Dataset Name & Number of Examples & Number of Features \\
\hline
\texttt{gisette\_scale} & 6000 & 5001 \\
\texttt{a9a} & 32561 & 124 \\
\texttt{cifar10} & 50000 & 3073 \\
\hline
\end{tabular}
\end{table}

For methods which require a nonzero diagonal initial regularizer $\delta I$, namely FD-SON and Ada-FD, we tune $\delta$ in $10^{-6},10^{-5},\cdots 1$ and the learning rate $\eta$ over the same range as well, for 49 hyperparameters total. For methods which have $\delta=0$ (Adagrad, OGD, S-Adagrad, RFD-SON), we instead tune $\eta$ on 49 points spaced evenly on the same logarithmic scale, $[10^{-6}, 1]$ to fairly allocate hyperparameter training budget. Note that the variant of RFD-SON, $\mathrm{RFD}_0$, which sets $\delta=0$ is the main variant evaluated by \citet{luo2019robust}. The sketch size was fixed to be 10 throughout.

\begin{table}
\centering
\caption{Average cumulative online loss across datasets and algorithms, ranked from lowest (1st place) to highest (6th place). Our proposal, \texttt{S-Adagrad}, is in bold.}\label{tbl:oco-placements}
\begin{tabular}{llllllll}
\toprule
    & Place &          1 &          2 &         3 &         4 &         5 &            6 \\
\midrule
\texttt{cifar10} & Alg. &    RFD-SON & \textbf{ S-Adagrad }&   Adagrad &       OGD &    Ada-FD &       FD-SON \\
    & Loss &   0.297 &   0.297 &  0.303 &  0.308 &   2.999 &  $6 \times 10^6$ \\
\texttt{gisette} & Alg. & \textbf{S-Adagrad} &    RFD-SON &    Ada-FD &   Adagrad &       OGD &       FD-SON \\
    & Loss &   0.158 &   0.167 &  0.196 &  0.209 &     0.224 &      2.432 \\
\texttt{a9a} & Alg. &    Adagrad &  \textbf{S-Adagrad} &       OGD &   RFD-SON &    Ada-FD &       FD-SON \\
    & Loss &    0.332 &    0.333 &  0.335 &  0.335 &  0.354 &     0.539 \\
\bottomrule
\end{tabular}
\end{table}

We sort and display cumulative average regret in Tbl.~\ref{tbl:oco-placements} and Fig.~\ref{fig:ococurves}. S-Adagrad is the only method to consistently place among the top three across all datasets. We suspect that a combination of allowing $\delta=0$ (removing inductive bias about regularization; notice in Tbl.~\ref{tbl:oco-placements} that Ada-FD and FD-SON, the two methods with $\delta>0$, routinely place last) and the ability to deal with the effectively zero exp-concavity constant of the logistic loss \citep{hazan2014logistic} explain S-Adagrad's performance.

\begin{figure}
\centering
\includegraphics[width=0.8\columnwidth]{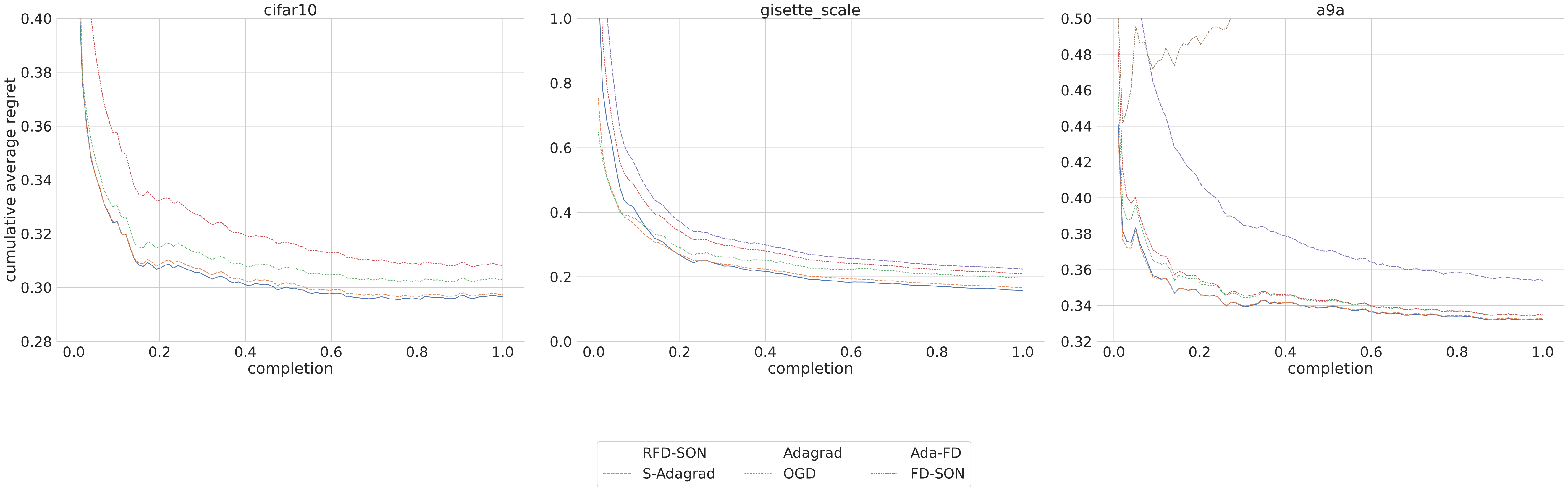}
\caption{Cumulative average regret curves for logistic loss as a function of percent of dataset completion in a single online pass, resummarizing Tbl.~\ref{tbl:oco-placements} visually.}\label{fig:ococurves}
\end{figure}

\section{Proof details}\label{sec:appendix-proofs}
\paragraph{Notation.} Let $\|\cdot\|$ denote the $\ell_2$ norm for a vector. Let $\|\cdot\|_{op}$, $\|\cdot\|_F$ denote the operator norm and the Frobenius norm of a matrix, respectively. For a positive definite matrix $A$, we use $\|x\|_A = \sqrt{x^\top A x}$ to denote the matrix norm induced by $A$, and $\|x\|_{A}^* = \sqrt{x^\top A^{-1}x}$ to denote the dual norm of the induced matrix norm. For a matrix $A$, $A^{-1}$ is the inverse of $A$ if $A$ is full rank; otherwise, $A^{-1}$ is taken to be the Moore-Penrose pseudoinverse. Finally, $\bvec{\cdot}$ denotes the row-major vectorization of a given matrix, and $\otimes$ denotes the Kronecker product between two matrices 
\subsection{Reduction from Non-convex Optimization to Online Convex Optimization}\label{sec:reduction}
In this section, we give more details for the reduction of non-convex optimization to online convex optimization for completeness. We use the framework of \cite{ggt}, though many related results exist in the optimization literature. The algorithm is stated in Algorithm~\ref{alg:reduction}, where we optimize the non-convex function $f$ by iteratively optimizing subproblems $f_t$ that are strongly convex. Given any OCO algorithm $\A$, in each episode, we first construct $f_t$ and then use $\A$ to optimize it for $N$ time steps. 
\begin{algorithm}
\begin{algorithmic}[1]
\caption{Reduction from Non-convex optimization to Online Convex Optimization}
\label{alg:reduction}
\STATE Input: initial $x_1$, time horizon $T$, episode length $N$, smoothness parameter $L$, online convex optimization algorithm $\mathcal{A}$.
\FOR {$t=1,\ldots,T$}
\STATE Construct $f_t(x) = f(x) + L\|x - x_t\|^2$.
\STATE Initialize $\mathcal{A}$ to start at $x_t$, and set $x_t^1= x_t$.
\FOR {$n=1, \ldots, N$}
\STATE Play $x_t^n$, receive stochastic gradient $\tilde{\nabla}f_t(x_t^n)$, construct $g_t^n(x) = \tilde{\nabla}f_t(x_t^n)^\top(x)$.
\STATE Update $x_t^{n+1} = \mathcal{A}(g_t^1, \ldots, g_t^n)$. 
\ENDFOR
\STATE Update $x_{t+1} = \frac{1}{N}\sum_{n=1}^N x_t^n$.
\ENDFOR
\STATE Output iterate $x_{t^*} = \argmin_{t\in [T+1]} \|\nabla f(x_t)\|$.
\end{algorithmic}
\end{algorithm}
We state the convergence guarantees in terms of the adaptive ratio, defined below.
\begin{definition}(Adaptive ratio.) Let $\A$ be an algorithm, and consider a convex function $f$. Given a stochastic gradient oracle with variance bounded by $\sigma^2$, let $x_\mathcal{A}$ be the output of $\A$ with at most $T$ oracle calls, and let $x^*\in \argmin_x f(x)$. Define the adaptive ratio of $\A$ as
$$
\mu_\mathcal{A}(f) = \frac{f(x_\mathcal{A}) - f(x^*)}{\|x_1 - x^*\|\frac{\sigma}{T}}.
$$
\end{definition}
The adaptive ratio captures the performance of $\A$ relative to SGD. For certain algorithms, such as AdaGrad \cite{duchi2011adaptive}, the adaptive ratio can be as small as $\frac{1}{\sqrt{d}}$. For more discussions on this notion see \cite{ggt}.
\begin{theorem}(Theorem A.2 in \cite{ggt}) \label{cor:nonconvex-to-convex}
Consider a non-convex function $f$, and suppose $f$ is $L$-smooth and bounded: $\|\nabla^2 f(x)\|_2 \le L$ and $\max_{x, y} f(x) - f(y) \le M$. Additionally suppose we have access to a stochastic gradient oracle with variance bounded by $\sigma^2$. Let $\mu = \max_t \mu_\mathcal{A}(f_t)$. Then Algorithm \ref{alg:reduction} run with $T = \frac{12ML}{\epsilon^2}$ and $N = \frac{48\mu^2\sigma^2}{\epsilon^2}$ returns a point $x_{t^*}$ such that 
$$
\E[\|\nabla f(x_{t^*})\|] \le \eps.$$
The total number of calls to the stochastic gradient oracle is bounded by $T\cdot N = O(\mu^2\sigma^2/\epsilon^4)$.
\end{theorem}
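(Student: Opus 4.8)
The plan is to track how $f$ decreases along the outer iterates $x_1,\dots,x_{T+1}$ produced by Algorithm~\ref{alg:reduction} and to show that $\|\nabla f(x_t)\|$ is $O(\eps)$ on average over $t$; since $x_{t^*}$ is selected to minimize $\|\nabla f(x_t)\|$ over $t\in[T+1]$, that is exactly what is needed. First I would record the structure of the subproblems. Since $f$ is $L$-smooth, $f_t(x)=f(x)+L\|x-x_t\|^2$ has Hessian $\nabla^2 f(x)+2LI$ and is therefore $L$-strongly convex and $3L$-smooth; it has a unique minimizer $\hat x_t$, and $f(\hat x_t)\le f_t(\hat x_t)\le f_t(x_t)=f(x_t)$. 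First-order optimality gives $\nabla f(\hat x_t)=2L(x_t-\hat x_t)$, and combining it with $\|\nabla f(x_t)-\nabla f(\hat x_t)\|\le L\|x_t-\hat x_t\|$ yields the two-sided estimate
\[
\tfrac{1}{3L}\|\nabla f(x_t)\|\;\le\;\|x_t-\hat x_t\|\;\le\;\tfrac{1}{L}\|\nabla f(x_t)\| .
\]
The upper bound here is what will later let the error terms be reabsorbed.

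Next I would set up the telescoping descent. Let $\delta_t:=f_t(x_{t+1})-f_t(\hat x_t)\ge 0$ be the accuracy to which the inner loop solves subproblem $t$. From $f(x_{t+1})=f_t(x_{t+1})-L\|x_{t+1}-x_t\|^2$ and $f_t(\hat x_t)\le f(x_t)$ one gets $L\|x_{t+1}-x_t\|^2\le f(x_t)-f(x_{t+1})+\delta_t$, which telescopes to $L\sum_{t\le T}\|x_{t+1}-x_t\|^2\le f(x_1)-f(x_{T+1})+\sum_t\delta_t\le M+\sum_t\delta_t$. Strong convexity also gives $\|x_{t+1}-\hat x_t\|\le\sqrt{2\delta_t/L}$, so by the triangle inequality and the two-sided estimate above, $\|\nabla f(x_t)\|\le 3L\|x_{t+1}-x_t\|+O(\sqrt{L\delta_t})$, hence $\|\nabla f(x_t)\|^2=O(L^2\|x_{t+1}-x_t\|^2+L\delta_t)$. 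Summing and substituting the telescoped bound gives $\sum_{t\le T}\|\nabla f(x_t)\|^2=O\!\big(LM+L\sum_t\delta_t\big)$.

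Then I would control the residual $\sum_t\delta_t$ and close the loop. Conditioning on the history $\mathcal{F}_t$ up to $x_t$, the subproblem $f_t$ is deterministic, and the definition of the adaptive ratio together with $\mu\ge\mu_{\A}(f_t)$ and the $N=48\mu^2\sigma^2/\eps^2$ oracle calls allotted to it give $\E[\delta_t\mid\mathcal{F}_t]\le\mu\|x_t-\hat x_t\|\,\sigma/N=O\!\big(\eps^2\|x_t-\hat x_t\|/(\mu\sigma)\big)$, which by the upper bound derived at the outset is $O\!\big(\eps^2\|\nabla f(x_t)\|/(\mu\sigma L)\big)$. Taking total expectations in the bound above, with $a_t:=\|\nabla f(x_t)\|$, yields $\E\sum_{t\le T}a_t^2=O\!\big(LM+\tfrac{\eps^2}{\mu\sigma}\E\sum_{t\le T}a_t\big)$; AM--GM (or Cauchy--Schwarz followed by Jensen) then absorbs the linear term via $\tfrac{\eps^2}{\mu\sigma}\sum a_t\le\tfrac12\sum a_t^2+O\!\big(T\eps^4/(\mu^2\sigma^2)\big)$, leaving $\E\sum_{t\le T}a_t^2=O\!\big(LM+T\eps^4/(\mu^2\sigma^2)\big)$. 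Dividing by $T=12ML/\eps^2$ makes both terms $O(\eps^2)$, so since $x_{t^*}$ minimizes the gradient norm over $t\in[T+1]$,
\[
\E\|\nabla f(x_{t^*})\|^2\;\le\;\frac1T\,\E\sum_{t\le T}\|\nabla f(x_t)\|^2\;=\;O(\eps^2),
\]
and Jensen's inequality gives $\E\|\nabla f(x_{t^*})\|\le\eps$ once the hidden constants are matched against those in $T$ and $N$; the number of oracle calls is $T\cdot N=O(\mu^2\sigma^2/\eps^4)$.

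The hard part will be the self-bounding in the last step: the adaptive-ratio guarantee controls each $\delta_t$ only \emph{linearly} in the random, a priori uncontrolled distance $\|x_t-\hat x_t\|$ (not quadratically), so $\sum_t\delta_t$ cannot be bounded on its own --- it has to be fed back through the gradient-norm inequality it is itself used to prove, which is exactly why the \emph{upper} bound on $\|x_t-\hat x_t\|$ and a fixed-point argument are essential. Carrying this out in expectation rather than pathwise needs care with the filtration $\mathcal{F}_t$ and with the order of the Jensen / AM--GM manipulations, and with lining up the absolute constants with the stated $T$ and $N$. A minor bookkeeping point is that the adaptive-ratio definition should be applied with the averaged iterate $x_{t+1}=\tfrac1N\sum_n x_t^n$ playing the role of ``the output of $\A$'' (the usual online-to-batch conversion), and that each $f_t$ attains its minimum, which is immediate from $L$-strong convexity.
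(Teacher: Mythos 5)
The paper does not prove this theorem: it is imported verbatim as Theorem A.2 of \cite{ggt}, with the restated definition of adaptive ratio and Algorithm~\ref{alg:reduction} serving only to make the citation self-contained. So there is no in-paper proof against which to line your argument up. That said, your reconstruction is the standard proximal-point/online-to-batch reduction that \cite{ggt} uses: $L$-strong convexity and $3L$-smoothness of $f_t$, the two-sided bound $\tfrac{1}{3L}\|\nabla f(x_t)\|\le\|x_t-\hat x_t\|\le\tfrac{1}{L}\|\nabla f(x_t)\|$ via $\nabla f(\hat x_t)=2L(x_t-\hat x_t)$, telescoping the descent $f(x_{t+1})\le f(x_t)-\Omega(\tfrac{1}{L})\|\nabla f(x_t)\|^2+\delta_t$, and then closing the self-referential dependence on $\sum_t\delta_t$ through the adaptive-ratio guarantee applied to the averaged inner iterate. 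This is the right shape of argument.

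One step as written does not go through, and it stems from a likely typo in the paper's restated definition. You take $\E\ha{\delta_t\mid\mathcal{F}_t}\le\mu\|x_t-\hat x_t\|\,\sigma/N$ literally, which with $N=48\mu^2\sigma^2/\eps^2$ leaves, after AM--GM and dividing by $T$, the residual term $\eps^4/(\mu^2\sigma^2)$. You assert this is $O(\eps^2)$, but it is only so when $\eps\lesssim\mu\sigma$, which is not an assumption of the theorem. The denominator in the adaptive-ratio definition should almost certainly be $\sigma/\sqrt{T}$ (the SGD rate it is meant to normalize against); with $\sigma/\sqrt{N}$ you would instead get $\delta_t\le\eps\|x_t-\hat x_t\|/\sqrt{48}$, the AM--GM leaves $T\eps^2$, and dividing by $T$ gives a genuinely parameter-free $O(\eps^2)$ as claimed. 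If you reproduce this proof you should use the $\sqrt{N}$ scaling and, separately, be prepared to spend more care on constants than ``$O(\cdot)$'': with $T=12ML/\eps^2$ and $N=48\mu^2\sigma^2/\eps^2$ the descent inequality $f(x_{t+1})\le f(x_t)-\tfrac{1}{18L}\|\nabla f(x_t)\|^2+\delta_t$ alone gives $\tfrac{1}{T}\sum_t\|\nabla f(x_t)\|^2\ge\tfrac{3}{2}\eps^2$ in the worst case before the $\delta_t$ contribution, so matching the stated $\E\|\nabla f(x_{t^*})\|\le\eps$ exactly requires a slightly sharper chain of inequalities than the one you have sketched (or adjusted constants in $T$ and $N$).
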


\subsection{Proof of Lem.~\ref{lem:rho}}\label{sec:rho-proof}
\begin{proof}
Let $H_T\in \R^{T\times d}$ denote the matrix of stacked gradients, where the $t$-th row of $H_T$ is $g_t$. Then $H_T^\top H_T = G_T$, and FD iteratively sketches $H_T$. Let $H_T = U\Sigma V^\top$ be the SVD of $H_T$, and let $H_{T, k} = U_k\Sigma_k V_k^\top$ denote the best rank-$k$ approximation of $H_T$, where $U_k, V_k$ are the first $k$ columns of the matrices, and $\Sigma_k$ is the upper left $k\times k$ submatrix of $\Sigma$. By the proof of Theorem 1.1 in \cite{ghashami2016frequent}, we have
\begin{align*}
    \rho_{1:T} \le \min_{k < \ell} \frac{\|H_T - H_{T, k}\|_F^2}{\ell-k}= \min_{k < \ell} \frac{\sum_{i=k+1}^d \lambda_i(H_T^\top H_T)}{\ell-k}= \min_{k < \ell} \frac{\sum_{i=k+1}^d \lambda_i(G_T)}{\ell-k}\le \sum_{i=\ell}^d \lambda_i(G_T)\,\, ,
\end{align*}
where the last inequality follows by choosing $k=\ell-1$.
\end{proof}

\subsection{Proof of Observation \ref{observation:counterexample}}\label{sec:adafd-cex}
\begin{proof}
Let $\Sigma  = \E\ha{g_tg_t^\top}$ denote the covariance of the gradients, and $\lambda_i = \lambda_i(\Sigma)$ denote its $i$-th eigenvalue. By definition, $g_t$ has the following distribution: $g_t = w_i$ with probability $\lambda_i$. At iteration $t$, we have the current sketch $\bar{G}_{t-1}\in \R^{\ell\times d}$, and we receive the new gradient $g_t$. Ada-FD uses $\bar{G}_{t-1}+\delta I $ as their preconditioner.

We first show that under the distribution of the cost functions, if $\bar{G}_{t-1}$ has rank $\ell-1$, then $\E\ha{\rho_t|\bar{G}_{t-1}} \ge \sum_{i=\ell}^r \lambda_i$. Let $\bar{G}_{t-1} = U\Sigma V^\top$ be the SVD of $\bar{G}_{t-1}$, and $v_i$ be the $i$-th row of $V\in \R^{\ell-1\times d}$. Let $N_{t-1} = W\setminus \{v_1, \ldots, v_{\ell-1}\}$ be the set of basis vectors not in the row space of  $\bar{G}_{t-1}$, then $|N_{t-1}| = r-\ell+1$. If $g_t \in \text{span}(v_1, \ldots, v_\ell)$, then $\rho_t = 0$; otherwise $\rho_t = 1$, with probability $\sum_{i: w_i\in N_{t-1}} \lambda_i \ge \sum_{i=\ell}^r \lambda_i$. 

We proceed to bound the probability that $\bar{G}_{t-1}$ has rank $\ell'\le \ell-2$. Note that this event is equivalent to having fewer than $\ell-1$ distinct vectors drawn from $W$. Let $I_i$ be the indicator variable for drawing $w_i$ in the first $t-1$ rounds, then we can obtain the expected number of distinct vectors as follows
\begin{align*}
    \E\ha{\sum_{i=1}^r I_i} = \sum_{i=1}^r \E\ha{I_i} = \sum_{i=1}^r 1 - (1-\lambda_i)^{t-1}.
\end{align*}
We consider the random variable $r-\sum_{i=1}^r I_i$, and by Markov's inequality,
\begin{align*}
    \Pr\left[r - \sum_{i=1}^r I_i \ge r-\ell+2\right] \le \frac{\sum_{i=1}^r (1-\lambda_i)^{t-1}}{r-\ell+2}\le \frac{r(1-\lambda_r)^{t-1}}{2} \,\, .
\end{align*}
Note that this is exactly the probability of having fewer than $\ell-1$ distinct vectors in the first $t-1$ draws. We conclude that for $t \ge \log r/\lambda_r + 1$, $\Pr\ha{\text{rank}(\bar{G}_{t-1}) = \ell-1 } \ge \frac{1}{2}$. This implies that $\E\ha{\rho_t} \ge \sum_{i=\ell}^r \lambda_i /2$ after an initial log number of rounds, and assuming $T\ge 2\log r/\lambda_r$,
\begin{align*}\E\ha{\sum_{t=1}^T \rho_t} &\ge \E\ha{\sum_{t=T/2+1}^T \rho_t} \ge  \frac{T}{4}\sum_{i=\ell}^r \lambda_i.\,\, \\
\end{align*}
Similarly,
\begin{align*}
    \E\ha{\sum_{t=1}^T \sqrt{\rho_t}} &\ge \E\ha{\sum_{t=T/2+1}^T \sqrt{\rho_t}} \ge  \frac{T}{4}\sum_{i=\ell}^r \lambda_i,
\end{align*}
where the second inequality holds because $\rho_t = 0$ or $1$, so $\sqrt{\rho_t} = \rho_t$ for all $t$.
\ignore{
By using a reverse Markov inequality on the random variable $\sum_{t=1}^T\rho_t \le T$, we obtain
\begin{align*}
    \Pr\ha{\sum_{t=1}^T\rho_t \ge \frac{T}{8}\sum_{i=\ell}^r \lambda_i} &\ge \frac{\frac{T}{8}\sum_{i=\ell}^r \lambda_i}{T - \frac{T}{8}\sum_{i=\ell}^r \lambda_i}\\
    &\ge \frac{\sum_{i=\ell}^r \lambda_i}{8},
\end{align*}
and hence
\begin{align*}
    \E\ha{\sqrt{\sum_{t=1}^T}\rho_t} &\ge \Pr\ha{\sum_{t=1}^T\rho_t \ge \frac{T}{8}\sum_{i=\ell}^r \lambda_i}\sqrt{\frac{T}{8}\sum_{i=\ell}^r \lambda_i}\\
    &\ge \frac{\sqrt{T}}{16\sqrt{2}}\left(\sum_{i=\ell}^r \lambda_i\right)^{3/2}.
\end{align*}
}
The quantities $\sum_{t=1}^T \rho_t$ and $\sum_{t=1}^T \sqrt{\rho_t}$ correspond to $\Delta_T$ and $\sum_{t=1}^T\sqrt{\sigma_t}$ in Theorem 1 of \cite{wan2021efficient}, respectively. Therefore, under our setting, the expectation of the upper bound in Theorem 1 is at least 
\begin{equation}
\label{eq:upper_bound}
 \eta \E\ha{\max\left\{1, \frac{1+\sqrt{\sum_{t=1}^T \rho_t}}{\delta}\right\}\tr(G_T^{1/2})} + \frac{D^2}{2\eta}\E\ha{\sum_{t=1}^T \sqrt{\rho_t}}.
\end{equation}
If we can tune $\delta$, then the $\max$ function evaluates to at least $1$, and 
\begin{align*}
(\ref{eq:upper_bound}) \ge \eta\E\ha{\tr(G_T^{1/2})} + \frac{D^2}{2\eta}\frac{T}{4}\sum_{i=\ell}^r \lambda_i\ge \eta\E\ha{\sqrt{\tr(G_T)}} + \frac{D^2T}{8\eta}\sum_{i=\ell}^r \lambda_i= \eta\sqrt{T} + \frac{D^2T}{8\eta}\sum_{i=\ell}^r \lambda_i,
\end{align*}
where the last equality holds because $\tr(G_T) = \sum_{t=1}^T \|g_t\|_2^2 = T$. Optimizing $\eta$, we conclude that the regret upper bound for Ada-FD is $\Omega(T^{3/4})$ in expectation. 

\ignore{
Now, suppose we cannot tune $\delta$, then 
\begin{align*}
    (\ref{eq:upper_bound}) &\ge \frac{\eta}{\delta}\E\ha{\frac{\sqrt{\sum_{t=1}^T \rho_t}}{\delta}\tr(G_T^{1/2})} + \frac{D^2T}{8\eta}\sum_{i=\ell}^r \lambda_i\\
&\ge \frac{\eta\sqrt{T}}{\delta}\frac{\sqrt{T}}{16\sqrt{2}}\left(\sum_{i=\ell}^r \lambda_i\right)^{3/2} + \frac{D^2T}{8\eta}\sum_{i=\ell}^r \lambda_i\\
&= \frac{\eta T}{\delta16\sqrt{2}}\left(\sum_{i=\ell}^r \lambda_i\right)^{3/2} + \frac{D^2T}{8\eta}\sum_{i=\ell}^r \lambda_i\\
&= \Omega(T).
\end{align*}
}
\end{proof}

\ignore{
\subsection{Generic Theorem}
\begin{theorem}
Let $G_t = \sum_{s=1}^t g_sg_s^\top$. Suppose the sequences of matrices $\{H_t^+\}_{t=1}^T$ and $\{H_t^-\}_{t=1}^T$ satisfy  $H_t^+ \succeq G_t^{1/2}$ and $c_rH_t^+ \succeq H_t^-$ for every $t$. In addition, the sequences are monotonic: $H_t^+ \succeq H_{t-1}^+$ and $H_t^- \succeq H_{t-1}^-$, then using $H_t^+$ as the preconditioner,
the regret of Algorithm satisfies
\begin{align*}
    \text{Regret}_T &\le 
\end{align*}
\end{theorem}
}

\subsection{Proof details for Section \ref{sec:fd-AdaGrad}, \texttt{S-AdaGrad}} 
\subsubsection{Proof of Theorem \ref{thm:add}}\label{sec:proof-fd-ada}

The following observations are made of Algorithm \ref{alg:dfd}:

\begin{observation}\label{observation:onestep}
Denote by $\bar{U}_t\defeq\ha{U_t;U_t^{\bot}}$. If each $M_t$ is of rank $1$, then
\[
\bar{G}_t+\lambda_{\ell}^{(t)} I = \bar{G}_{t-1}+M_t +\lambda_{\ell}^{(t)} N_t\,\,,
\]
where $N_t=U_t^{\bot}\pa{U_t^{\bot}}^\top$. \ignore{By adding $\rho_{1:t-1}I$ to both sides we may replace $\tilde{C}$ with $F$ in the above equation as well.}
\end{observation}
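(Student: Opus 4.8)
The plan is to unfold the two definitions in Alg.~\ref{alg:dfd}---the eigendecomposition step and the update rule for $B_t$---and reduce the claimed identity to a one-line spectral computation. Write $\bar{U}_t=[u_1,\dots,u_d]$ for the orthonormal eigenvectors produced by the eigendecomposition, so that $\bar{G}_{t-1}+M_t=\sum_{i=1}^d\lambda_i^{(t)}u_iu_i^\top$ with $\lambda_1^{(t)}\ge\cdots\ge\lambda_d^{(t)}$, and accordingly $U_t=[u_1,\dots,u_\ell]$ and $U_t^{\bot}=[u_{\ell+1},\dots,u_d]$. The update rule gives $\bar{G}_t=B_tB_t^\top=\sum_{i=1}^\ell(\lambda_i^{(t)}-\lambda_\ell^{(t)})u_iu_i^\top$ (and, incidentally, the $\ell$-th summand vanishes, which is exactly why the last column of $B_t$ stays $0$). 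First I would add $\lambda_\ell^{(t)}I$ and split the identity via orthogonality of $\bar{U}_t$ as $I=\sum_{i=1}^d u_iu_i^\top=U_tU_t^\top+N_t$: the piece $\lambda_\ell^{(t)}U_tU_t^\top=\sum_{i=1}^\ell\lambda_\ell^{(t)}u_iu_i^\top$ cancels the $-\lambda_\ell^{(t)}$ shift carried by $\bar{G}_t$, while $\lambda_\ell^{(t)}N_t$ survives on the orthogonal complement, leaving
\[
\bar{G}_t+\lambda_\ell^{(t)}I=\sum_{i=1}^\ell\lambda_i^{(t)}u_iu_i^\top+\lambda_\ell^{(t)}N_t.
\]

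The remaining task is to identify $\sum_{i=1}^\ell\lambda_i^{(t)}u_iu_i^\top$ with $\bar{G}_{t-1}+M_t$, and this is the only place the rank-$1$ hypothesis on $M_t$ is needed. The invariant maintained by Alg.~\ref{alg:dfd} (with $B_0=0$) is that the last column of $B_{t-1}\in\R^{d\times\ell}$ is $0$, so $\rank(\bar{G}_{t-1})=\rank(B_{t-1}B_{t-1}^\top)\le\ell-1$; adding the rank-$1$ matrix $M_t$ then gives $\rank(\bar{G}_{t-1}+M_t)\le\ell$, i.e.\ $\lambda_{\ell+1}^{(t)}=\cdots=\lambda_d^{(t)}=0$. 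Hence truncating the spectral sum at index $\ell$ throws away only zero terms, $\sum_{i=1}^\ell\lambda_i^{(t)}u_iu_i^\top=\sum_{i=1}^d\lambda_i^{(t)}u_iu_i^\top=\bar{G}_{t-1}+M_t$, and substituting this into the display above gives the assertion.

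Since the computation itself is short, the one genuinely load-bearing step is the rank count in the second paragraph: I would check carefully that the invariant supplies $\rank(\bar{G}_{t-1})\le\ell-1$ and not merely $\le\ell$, because if $\bar{G}_{t-1}$ had rank $\ell$ then $\bar{G}_{t-1}+M_t$ could have rank $\ell+1$ and a nonzero $\lambda_{\ell+1}^{(t)}$ would break the identity. Having one column of $B_t$ permanently absorbed into the zero eigenvalue after each shift is precisely what keeps the running rank at $\le\ell-1$, so the reasoning is consistent across iterations by the inductive \texttt{REQUIRE}/\texttt{ENSURE} structure of Alg.~\ref{alg:dfd}.
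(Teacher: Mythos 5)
Your proof is correct and follows essentially the same route as the paper's: expand $\bar{G}_t+\lambda_\ell^{(t)}I$ via the eigendecomposition produced by Alg.~\ref{alg:dfd}, split $I=U_tU_t^\top+N_t$, and use that $\rank(\bar{G}_{t-1})\le\ell-1$ (from the zero last column of $B_{t-1}$) plus $\rank M_t=1$ forces $\lambda_{\ell+1}^{(t)}=\cdots=\lambda_d^{(t)}=0$, so the truncated spectral sum equals $\bar{G}_{t-1}+M_t$. Your closing remark correctly pinpoints the rank count as the load-bearing step and your explanation of why the invariant maintains rank $\le\ell-1$ (the $\ell$-th eigenvalue is zeroed at each deflation) matches the paper's reasoning.
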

\begin{proof}
By definition of Algorithm \ref{alg:dfd}, $\bar{G}_{t-1}=B_{t-1}B_{t-1}^{\top}$ is of rank at most $\ell-1$. Under the assumption that $M_t$ is of rank $1$, $\bar{G}_{t-1}+M_t$ is of rank at most $\ell$. Therefore, $\lambda_{\ell+1:d}=0$. Then, we have the following:
\begin{align*}
&\bar{G}_t+\lambda_{\ell}^{(t)}I=U_t\diag \left(\lambda_{[1:\ell]}^{(t)}-\lambda_{\ell}^{(t)}\right)U_t^{\top}+\lambda_{\ell}^{(t)}I\\
&=U_t\diag \left(\lambda_{[1:\ell]}^{(t)}-\lambda_{\ell}^{(t)}\right)U_t^{\top}+\lambda_{\ell}^{(t)}(U_tU_t^{\top}+N_t)\\
&=U_t\diag \lambda_{[1:\ell]}^{(t)}U_t^{\top}+ \lambda_{\ell}^{(t)}N_t\\
&=\bar{U}_t \diag \lambda^{(t)}\bar{U}_t^{\top} +\lambda_{\ell}^{(t)}N_t\\
&=\bar{G}_{t-1}+M_t+\lambda_{\ell}^{(t)}N_t\,\, .
\end{align*}
\end{proof}

\begin{lemma}
\label{lem:fd-update}
Let $\lambda_\ell^{(s:t)}$ denote $\sum_{j=s}^t\lambda_\ell^{(j)}$. Let $\tilde{G}_t\defeq \bar{G}_t+\lambda_{\ell}^{(1:t)}I$, $G_t=\sum_{s=1}^t M_s$, where each $M_t$ is of rank $1$. Let the initial $G_0=\bar{G}_0=0$,  then the following relation between $\tilde{G}_t$ and $G_t$ holds for all $t$:
\begin{align*}
\tilde{G}_t=G_t+\sum_{s=1}^t \lambda_{\ell}^{(s)} N_s\,\, .
\end{align*}
\end{lemma}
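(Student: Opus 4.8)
The plan is to prove Lemma~\ref{lem:fd-update} by induction on $t$, using Observation~\ref{observation:onestep} as the one-step engine. The base case $t=0$ is immediate: $\tilde{G}_0 = \bar{G}_0 + \lambda_\ell^{(1:0)} I = 0 = G_0 + 0$, since the empty sum vanishes. For the inductive step, assume $\tilde{G}_{t-1} = G_{t-1} + \sum_{s=1}^{t-1}\lambda_\ell^{(s)} N_s$, and I want to establish the same identity at time $t$.

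First I would expand the definition $\tilde{G}_t = \bar{G}_t + \lambda_\ell^{(1:t)} I$ and split off the last term: $\tilde{G}_t = \bigl(\bar{G}_t + \lambda_\ell^{(t)} I\bigr) + \lambda_\ell^{(1:t-1)} I$. Now apply Observation~\ref{observation:onestep} to the parenthesized quantity, which gives $\bar{G}_t + \lambda_\ell^{(t)} I = \bar{G}_{t-1} + M_t + \lambda_\ell^{(t)} N_t$ — this is exactly where the rank-$1$ hypothesis on $M_t$ is used, since the observation requires it (so that $\bar{G}_{t-1}+M_t$ has rank at most $\ell$ and the $\ell+1,\dots,d$ eigenvalues vanish). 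Substituting, $\tilde{G}_t = \bar{G}_{t-1} + \lambda_\ell^{(1:t-1)} I + M_t + \lambda_\ell^{(t)} N_t = \tilde{G}_{t-1} + M_t + \lambda_\ell^{(t)} N_t$. Then invoke the inductive hypothesis to replace $\tilde{G}_{t-1}$ by $G_{t-1} + \sum_{s=1}^{t-1}\lambda_\ell^{(s)} N_s$, and use $G_{t-1} + M_t = G_t$ together with folding $\lambda_\ell^{(t)} N_t$ into the sum to conclude $\tilde{G}_t = G_t + \sum_{s=1}^{t}\lambda_\ell^{(s)} N_s$, completing the induction.

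The argument is essentially bookkeeping once Observation~\ref{observation:onestep} is in hand, so there is no serious obstacle; the only point requiring a little care is making sure the telescoping of the $\lambda_\ell^{(1:t)} I$ term is handled cleanly (peeling off $\lambda_\ell^{(t)} I$ to match the form of the observation, then recognizing $\bar{G}_{t-1} + \lambda_\ell^{(1:t-1)} I = \tilde{G}_{t-1}$), and confirming that the rank-$1$ assumption on every $M_s$ is genuinely needed so that Observation~\ref{observation:onestep} applies at each step of the induction. One should also note in passing that $\bar G_{t-1}$ having rank at most $\ell-1$ — guaranteed by the invariant that the last column of $B_{t-1}$ is zero in Alg.~\ref{alg:dfd} — is what legitimizes applying the observation; this is inherited inductively from the algorithm's structure rather than needing separate proof here.
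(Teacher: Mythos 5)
Your proof is correct and follows the same route as the paper's: induction on $t$ with Observation~\ref{observation:onestep} supplying the one-step identity, then folding $\lambda_\ell^{(t)} N_t$ into the running sum via the inductive hypothesis. Your added remarks about where the rank-$1$ hypothesis and the rank-$(\ell-1)$ invariant on $\bar G_{t-1}$ enter are accurate but not part of the paper's proof, which leaves them implicit.
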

\begin{proof}
The lemma follows from induction on $t$. Base case $\tilde{G}_0=G_0$ holds by definition. Suppose the above equation holds for $t-1$. Then,
\begin{align*}
\tilde{G}_t=\bar{G}_t+\lambda_{\ell}^{(1:t)}I&=_1\bar{G}_{t-1}+M_t+\lambda_{\ell}^{(t)}N_t+\lambda_{\ell}^{(1:t-1)}I\\
&=\tilde{G}_{t-1}+M_t+\lambda_{\ell}^{(t)}N_t\\
&=_2G_{t-1}+\sum_{s=1}^{t-1}\lambda_{\ell}^{(s)}N_s +M_t+\lambda_{\ell}^{(t)}N_t\\
&=G_t+\sum_{s=1}^t \lambda_{\ell}^{(s)} N_s\,\, ,
\end{align*}
where $=_1$ follows from Observation \ref{observation:onestep} and $=_2$ follows from induction hypothesis.
\end{proof}

\begin{remark}
\label{rmk:fd-inequality}
Note that the above lemma immediately provides an approximate isometry $\bar{G}_t\preceq G_t\preceq \tilde{G}_t$.
\end{remark}

Now, we return to the proof of Thm.~\ref{thm:add}.

\begin{proof}
First, we make the following observation of Algorithm \ref{alg:AdaGrad}:
\begin{observation}
\label{obs:span}
By specification of Algorithm \ref{alg:dfd},\ref{alg:AdaGrad}, $g_t\in \text{span}(\tilde{G}_t)$, $\forall t$. 
\end{observation}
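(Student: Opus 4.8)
The plan is to read the claim off the positive semidefinite ordering already in hand, rather than unwinding the algorithm step by step. Recall from Lemma~\ref{lem:fd-update} that $\tilde{G}_t = G_t + \sum_{s=1}^t \lambda_\ell^{(s)} N_s$ with each $N_s = U_s^\bot (U_s^\bot)^\top \succeq 0$ and each $\lambda_\ell^{(s)} = \rho_s \ge 0$ (it is the $\ell$-th eigenvalue of a PSD matrix, hence nonnegative), so the perturbation $\tilde{G}_t - G_t$ is PSD and $\tilde{G}_t \succeq G_t$; this is exactly Remark~\ref{rmk:fd-inequality}. Since $G_t = \sum_{s=1}^t g_s g_s^\top$ is a sum of PSD rank-one terms, we also have $G_t \succeq g_t g_t^\top \succeq 0$. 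Chaining, $\tilde{G}_t \succeq g_t g_t^\top \succeq 0$.

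Next I would record the elementary fact that for symmetric matrices $A \succeq B \succeq 0$ one has $\ker(A) \subseteq \ker(B)$: if $Ay = 0$ then $y^\top A y = 0$, so $0 \le y^\top B y \le y^\top A y = 0$, and writing $B = C^\top C$ this forces $Cy = 0$, i.e.\ $By = 0$. Taking orthogonal complements (both matrices symmetric) gives $\text{range}(B) = (\ker B)^\perp \subseteq (\ker A)^\perp = \text{range}(A)$. Applying this with $A = \tilde{G}_t$ and $B = g_t g_t^\top$ yields $\text{span}(g_t) = \text{range}(g_t g_t^\top) \subseteq \text{range}(\tilde{G}_t) = \text{span}(\tilde{G}_t)$, and in particular $g_t \in \text{span}(\tilde{G}_t)$; the degenerate case $g_t = 0$ is trivial. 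This argument silently absorbs the two regimes one would otherwise split by hand: when $\lambda_\ell^{(1:t)} > 0$ the matrix $\tilde{G}_t$ is positive definite so $\text{span}(\tilde{G}_t) = \R^d$, and when $\lambda_\ell^{(1:t)} = 0$ every escaped mass up to time $t$ vanishes, so $\tilde{G}_t = \bar{G}_t = \bar{G}_{t-1} + g_t g_t^\top$ and one uses that the range of a sum of PSD matrices is the sum of their ranges.

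I do not expect a genuine obstacle; the only points requiring care are that the $\lambda_\ell^{(s)}$ are really nonnegative (so the ordering $\tilde{G}_t \succeq G_t$ points the right way) and that ``$\text{span}$'' throughout denotes the column space/range, which is what makes the later pseudoinverse convention harmless: $g_t \in \text{range}(\tilde{G}_t)$ is precisely what is needed so that $\tilde{G}_t^{1/2}\tilde{G}_t^{-1/2} g_t = g_t$ when $\tilde{G}_t$ is rank-deficient. I would keep the written proof to a few lines, citing Remark~\ref{rmk:fd-inequality} for $\tilde{G}_t \succeq G_t$ and spelling out only the $\ker(A)\subseteq\ker(B)$ step.
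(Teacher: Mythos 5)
The paper never spells out an argument for this observation; it simply asserts it ``by specification'' of the algorithms, so there is no textual proof to compare against line by line. Your argument is correct and self-contained, and it draws only on material that already precedes the observation in the appendix (Lemma~\ref{lem:fd-update} and Remark~\ref{rmk:fd-inequality}), so there is no circularity. The route is a little different from what ``by specification'' most likely gestures at, which is the direct two-case inspection you mention at the end: if $\rho_{1:t}>0$ then $\tilde G_t = \bar G_t + \rho_{1:t} I \succ 0$ has full rank and the claim is trivial; if $\rho_{1:t}=0$ then no deflation ever occurred, $\tilde G_t = \bar G_t = G_t = \sum_{s\le t} g_s g_s^\top$, and $g_t$ lies in its range by construction. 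Your version instead chains $\tilde G_t \succeq G_t \succeq g_t g_t^\top$ and then invokes the standard fact that $A \succeq B \succeq 0$ forces $\ker A \subseteq \ker B$ and hence $\mathrm{range}(B)\subseteq\mathrm{range}(A)$; this avoids the case split, at the small cost of leaning on the PSD-ordering lemma. Both are sound; yours is arguably the cleaner thing to write down, and you correctly flag the one place the conclusion is actually used, namely to ensure $\tilde G_t^{1/2}\tilde G_t^{-1/2} g_t = g_t$ under the Moore--Penrose convention when $\tilde G_t$ is rank-deficient.
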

We follow the standard AdaGrad \cite{duchi2011adaptive, hazan2016introduction} analysis. By algorithm specification,
\begin{align*}
y_{t+1}-x^*&=x_t-x^*-\eta \tilde{G}_t^{-1/2}g_t\,\, ,\\
\tilde{G}_t^{1/2}(y_{t+1}-x^*)&=\tilde{G}_t^{1/2}(x_t-x^*)-\eta \tilde{G}_t^{1/2}\tilde{G}_t^{-1/2}g_t=_1\tilde{G}_t^{1/2}(x_t-x^*)-\eta g_t\,\, ,
\end{align*}
where $=_1$ follows from Observation \ref{obs:span}. 

With standard AdaGrad analysis, we can bound regret $\regret_T$ above by the sum of the diameter bound and the gradient bound:
\begin{align*}
\underbrace{\frac{1}{2\eta} \sum_{t=1}^T \|x_t-x_*\|^2_{\tilde{G}_t^{1/2}-\tilde{G}_{t-1}^{1/2}}}_{R_D}+\underbrace{\frac{\eta}{2}\sum_{t=1}^T \|g_t\|_{F_t^{-1/2}}}_{R_G}\,\,.
\end{align*}
Note that by algorithm specification, we have $\forall t$,
\begin{align*}
\tilde{G}_t=\bar{G}_t+\rho_{1:t}I=_1\tilde{G}_{t-1}+g_tg_t^{\top}+\rho_tN_t\succeq \tilde{G}_{t-1}+g_tg_t^{\top},
\end{align*}
where $=_1$ follows from the proof of Lemma \ref{lem:fd-update}. In particular, $\tilde{G}_t\succeq \tilde{G}_{t-1}$. 

Using Remark \ref{rmk:fd-inequality}, the gradient norm term in the regret bound can be further bounded by 
\begin{align*}
R_G=\frac{\eta}{2}\sum_{t=1}^T g_t^{\top}\tilde{G}_t^{-1/2}g_t\le \frac{\eta}{2}\sum_{t=1}^T g_t^{\top}G_t^{-1/2}g_t\le \eta\tr\left(G_T^{1/2}\right)\,\, ,
\end{align*}
where the last inequality follows from Lemma 10 of \citet{duchi2011adaptive}.\footnote{The FTL-BTL lemma \citep{kalai2005efficient} alone is not sufficient to justify this inequality, at least interpreting $G^{-1/2}$ as $\pa{G^{1/2}}^+$. However, \citet{duchi2011adaptive} rely on concavity of $X\mapsto \tr X^{1/2}$ to show a semidefinite version of the statement.} The diameter norm term in the regret bound can be bounded by
\begin{align*}
R_D&=\frac{1}{2\eta} \sum_{t=1}^T \|x_t-x_*\|^2_{\tilde{G}_t^{1/2}-\tilde{G}_{t-1}^{1/2}}\le_1 \frac{D^2}{2\eta}\tr\left(\tilde{G}_T^{1/2}\right)\\
&\le \frac{D^2}{2\eta}\tr\left(\left(G_T+\rho_{1:T}I\right)^{1/2}\right) \ignore{\xc{Is this an inequality?}} \\
&\le_2 \frac{D^2}{2\eta}\left(\tr G_T^{1/2}+\tr(\rho_{1:T}I)^{1/2}\right)\,\, ,
\end{align*}
where $\le_1$ follows from monotonicity of $\tilde{G}_t$'s, $\|\cdot\|_{op}\le \tr(\cdot)$ for positive semidefinite matrices, and linearity of $\tr(\cdot)$, and $\le_2$ follows from that for $X\in\R^d, X\succeq 0$, $\tr(X+\sigma I_d)^{1/2}\le \tr(X^{1/2})+d\sqrt{\sigma}$.
\ignore{\vlad{re: XC, yes. I'd be explicit in comments that $G_T+\rho_{1:T}I\succeq F_T$}}
Combining, we have
\begin{align*}
\regret_T\le \frac{d\sqrt{\rho_{1:T}}+\tr G_T^{1/2}}{2\eta}D^2+\eta\tr\left(G_T^{1/2}\right)= D\left(\sqrt{2}\tr G_T^{1/2}+d\sqrt{\frac{\rho_{1:T}}{2}}\right)\,\, ,
\end{align*}
where the last equality is established by choosing $\eta=\frac{D}{\sqrt{2}}$. 
\end{proof}

\subsubsection{Proof of Corollary \ref{thm:improved}}\label{sec:proof-thm-improved}
\begin{proof}Following the proof of Theorem \ref{thm:add}, we have
\begin{align*}
\regret_T\le \frac{D^2\tr \tilde{G}_t^{1/2}}{2\eta}+\eta \tr G_T^{1/2}\,\, ,
\end{align*}
Denote the accumulated error term
\begin{align*}
E=\sum_{t=1}^T \rho_t N_t\,\, .      
\end{align*}
Then, by Lemma \ref{lem:fd-update} and sub-additivity of $\tr\left(\left(\cdot\right)^{1/2}\right)$ \cite{audenaert2014generalisation},
\begin{align*}
\regret_T\le \left(\frac{D^2}{2\eta} + \eta\right)\tr G_T^{1/2}+\frac{D^2}{2\eta}\tr E^{1/2},
\end{align*}
where it remains to bound the last term. Let $Q$ be a matrix with column vectors $q_i$ that forms an eigenbasis of $E^{1/2}$; this diagonalizes $E$ as well. Notice that
\[
\lambda_i\pa{E}=\sum_{t=1}^T\rho_t q_i^\top N_t q_i\,\,,
\]
and since
\[
\lambda_i^{1/2}\pa{E}=\lambda_i\pa{E^{1/2}}\,\,,
\]
that we can characterize
\[
\tr E^{1/2}=\sum_{i=1}^d \lambda_i^{1/2}(E)=\sum_{i=1}^d\pa{\sum_{t=1}^T\rho_tq_i^\top N_tq_i}^{1/2}\,\,.
\]
Denote $u_{t, i}=q_i^\top N_t q_i$, since $N_t$ is a rank-$(d-\ell)$ projection, $\norm{u_t}_1= d-\ell$. Then $\tr E^{1/2}$ is upper bounded by the value of the program
\begin{align*}
    \max_{u_{t,i}}&\quad\sum_{i=1}^d\pa{\sum_{t=1}^T \rho_t u_{t, i}}^{1/2}\\
    \mathrm{s.t.}&\quad \sum_{i=1}^d u_{t,i}= d-\ell \quad\quad\forall t\in[T]\,\,.
\end{align*}
Note that
\begin{align*}
\sum_{i=1}^d \left(\sum_{t=1}^T \rho_tu_{t,i}\right)^{1/2}\le_1 \sqrt{d} \sqrt{\sum_{i=1}^d \sum_{t=1}^T \rho_tu_{t,i}}=\sqrt{d}\sqrt{\sum_{t=1}^T\rho_t\sum_{i=1}^d u_{t,i}}=_2 \sqrt{d\rho_{1:T}(d-\ell)}\,\, ,
\end{align*}
where $\le_1$ follows from Cauchy-Schwarz, and $=_2$ follows from the  constraint on $\sum_{i=1}^d u_{t,i}$.

Combining, we have
\begin{align*}
\regret_T\le \frac{\sqrt{d(d-\ell)\rho_{1:T}}+\tr G_T^{1/2}}{2\eta} D^2+\eta\tr G_T^{1/2}=D\left(\sqrt{2}\tr G_T^{1/2}+\sqrt{\frac{d(d-\ell)\rho_{1:T}}{2}}\right)\,\, ,
\end{align*}
where the last equality follows from the choice of step size $\eta=\frac{D}{\sqrt{2}}$. 

\ignore{
By symmetry and concavity of the square root this is optimal when all the terms of the outer sum in the objective are equal. 
Since the total mass is $\rho_{1:T}(d-\ell)$, across each of the $d$ outer terms we divide by $d$, square root, and then add $d$ copies. This yields the final objective value of $\sqrt{d(d-\ell)\rho_{1:T}}$.
}
\end{proof}

\subsection{Proof details for Section \ref{sec:fd-shampoo}, \texttt{S-Shampoo}}
\subsubsection{Proof of Theorem \ref{thm:add_shampoo}}\label{sec:proof-shampoo}
\begin{proof} 
First, we establish the following observation and lemma analogous to Observation \ref{observation:onestep} and Lemma \ref{lem:fd-update}:

\begin{observation} [Analogous to Observation \ref{observation:onestep}]
\label{observation:onestep-shampoo}
Let $V_t \Sigma_t^LV_t^\top = \bar{L}_{t-1} + G_tG_t^\top$ be the eigendecomposition of the un-deflated sketch, where $V_t\in \R^{m\times m}$. Suppose $\rank(\Sigma_t^L) = k$, where $k\in [\ell-1, \ell-1+r]$. Write $V_t = [V_t^\parallel \ V_t^\perp]$, where $V_t^\parallel$ contain the first $k$ columns of $V_t$. Then by definition 
\[
\bar{L}_t + \rho_t^L I \succeq \bar{L}_{t-1} + G_tG_t^\top  + \rho_t^L V_t^\perp \pa{V_t^\perp}^\top.
\]
Analogously for the right conditioner, let $W_t \Sigma_t^R W_t^\top = \bar{R}_{t-1} + G_t^\top G_t$, and write $W_t = [W_t^\parallel \ W_t^\perp]$, then 
\[
\bar{R}_t + \rho_t^R I \succeq \bar{R}_{t-1} + G_t^\top G_t  + \rho_t^R W_t^\perp \pa{W_t^\perp}^\top.
\]
\end{observation}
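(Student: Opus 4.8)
The plan is to mirror the proof of Observation~\ref{observation:onestep}: express both sides of the claimed inequality in the eigenbasis of the un-deflated sketch and read off their difference. The only new feature relative to Observation~\ref{observation:onestep} is that the incoming update $G_tG_t^\top$ now has rank up to $r$ rather than $1$, so the working rank $k=\rank\pa{\Sigma_t^L}$ of $\bar{L}_{t-1}+G_tG_t^\top$ can exceed $\ell$ (always $k\le\ell-1+r$, since $\bar{L}_{t-1}$ has rank at most $\ell-1$ by the \texttt{FD-update} invariant); this is exactly what downgrades the identity of Observation~\ref{observation:onestep} to a $\succeq$ relation. I would write $V_t=[v_1\ \cdots\ v_m]$ for the eigenvectors, with eigenvalues $\lambda_1^{(t)}\ge\cdots\ge\lambda_k^{(t)}>0=\lambda_{k+1}^{(t)}=\cdots=\lambda_m^{(t)}$, so that $\bar{L}_{t-1}+G_tG_t^\top=\sum_{i\le k}\lambda_i^{(t)}v_iv_i^\top$ and $V_t^\perp\pa{V_t^\perp}^\top=\sum_{i>k}v_iv_i^\top$.

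By definition \texttt{FD-update} outputs $\rho_t^L=\lambda_\ell^{(t)}$ and $\bar{L}_t=\sum_{i=1}^{\ell}\pa{\lambda_i^{(t)}-\lambda_\ell^{(t)}}v_iv_i^\top$ (the $i=\ell$ summand vanishing), hence
\[
\bar{L}_t+\rho_t^L I_m=\sum_{i=1}^{\ell}\lambda_i^{(t)}v_iv_i^\top+\lambda_\ell^{(t)}\sum_{i>\ell}v_iv_i^\top .
\]
Writing the right-hand side of the claim as $\bar{L}_{t-1}+G_tG_t^\top+\rho_t^L V_t^\perp\pa{V_t^\perp}^\top=\sum_{i\le k}\lambda_i^{(t)}v_iv_i^\top+\lambda_\ell^{(t)}\sum_{i>k}v_iv_i^\top$ and subtracting, all terms with $i\le\ell$ and all terms with $i>k$ cancel, leaving
\[
\bar{L}_t+\rho_t^L I_m-\pa{\bar{L}_{t-1}+G_tG_t^\top+\rho_t^L V_t^\perp\pa{V_t^\perp}^\top}=\sum_{i=\ell+1}^{k}\pa{\lambda_\ell^{(t)}-\lambda_i^{(t)}}v_iv_i^\top\succeq 0 ,
\]
where each coefficient is nonnegative because the $\lambda^{(t)}$ are nonincreasing, and the sum is empty (so the relation is an equality) when $k\le\ell$. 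The right-preconditioner statement follows by the identical argument, replacing $\bar{L}_{t-1}$, $G_tG_t^\top$, $V_t$, $\rho_t^L$ with $\bar{R}_{t-1}$, $G_t^\top G_t$, $W_t$, $\rho_t^R$.

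I do not anticipate a genuine obstacle; the one delicate point is the bookkeeping of which rank-one pieces $v_iv_i^\top$ sit on which side --- that $\bar{L}_t$ discards the $v_\ell v_\ell^\top$ direction, and, crucially, that the escaped term $\rho_t^L V_t^\perp\pa{V_t^\perp}^\top$ lives on the kernel $\mathrm{span}\pa{v_{k+1},\dots,v_m}$ rather than on the ``surplus'' directions $v_{\ell+1},\dots,v_k$. It is exactly those surplus directions, present only when a full rank-$r$ update lifts the sketch's rank above $\ell$, that go unaccounted for on the right-hand side and hence force an inequality in place of the equality of the rank-one case. This per-step relation is the Shampoo analogue of Observation~\ref{observation:onestep} and is meant to be iterated, as in Lemma~\ref{lem:fd-update}, to produce the operator sandwich between $\tilde{L}_t$ and $L_t$ underlying the regret analysis.
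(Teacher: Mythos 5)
Your computation is correct and is exactly the "by definition" unpacking that the paper intends (the paper itself offers no explicit proof, only the assertion that the inequality follows from the specification of \texttt{FD-update}). In particular the cancellation you identify and the resulting residue $\sum_{i=\ell+1}^{k}(\lambda_\ell^{(t)}-\lambda_i^{(t)})v_iv_i^\top\succeq 0$ correctly pinpoints why the rank-$1$ equality of Observation~\ref{observation:onestep} becomes a semidefinite inequality when the update $G_tG_t^\top$ can raise the working rank above $\ell$.
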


\begin{lemma}\label{lem:fd-update-shampoo}
(Analogous to Lemma \ref{lem:fd-update}) Define $N^L_t = V_t^\perp \pa{V_t^\perp}^\top, N^R_t =W_t^\perp \pa{W_t^\perp}^\top$, then \begin{align*}
\tilde{L}_t \succeq \sum_{s=1}^t G_sG_s^\top + \sum_{s=1}^t \rho_s^L N_s^L + \eps I_m\,\, , \ \ \ \tilde{R}_t \succeq \sum_{s=1}^t G_s^\top G_s + \sum_{s=1}^t \rho_s^R N_s^R + \eps I_n\,\, .
\end{align*}
\end{lemma}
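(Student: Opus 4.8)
The plan is to follow the structure of the proof of Lemma~\ref{lem:fd-update}: first establish the one-step Loewner relation of Observation~\ref{observation:onestep-shampoo}, then lift it to the claimed inequality by induction on $t$, and finally observe that the bound for $\tilde R_t$ follows verbatim from the argument for $\tilde L_t$ after replacing $G_sG_s^\top,\bar L_s,V_s,m$ by $G_s^\top G_s,\bar R_s,W_s,n$.

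For Observation~\ref{observation:onestep-shampoo}, I would work in the eigenbasis $V_t=[v_1,\dots,v_m]$ of $\bar L_{t-1}+G_tG_t^\top=\sum_{i=1}^k\lambda_i^{(t)}v_iv_i^\top$, where $k=\rank(\bar L_{t-1}+G_tG_t^\top)\le\ell-1+r$ and $\rho_t^L=\lambda_\ell^{(t)}$. By the definition of \texttt{FD-update} (Alg.~\ref{alg:dfd}) we have $\bar L_t=\sum_{i=1}^\ell(\lambda_i^{(t)}-\rho_t^L)v_iv_i^\top$, so splitting $I_m=\sum_{i=1}^m v_iv_i^\top$ and using $N_t^L=V_t^\perp(V_t^\perp)^\top=\sum_{i=k+1}^m v_iv_i^\top$ gives
\[
\bar L_t+\rho_t^L I_m-\bigl(\bar L_{t-1}+G_tG_t^\top+\rho_t^L N_t^L\bigr)=\sum_{i=\ell+1}^{k}\bigl(\rho_t^L-\lambda_i^{(t)}\bigr)v_iv_i^\top\succeq 0,
\]
since the eigenvalues are in descending order, so $\lambda_i^{(t)}\le\lambda_\ell^{(t)}=\rho_t^L$ for every $i\ge\ell+1$. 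This is exactly Observation~\ref{observation:onestep-shampoo} (note the range $\ell+1,\dots,k$ is nonempty precisely when $G_t$ contributes extra rank).

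Given the observation, the lemma follows by induction on $t$. The base case $t=0$ holds with equality since $\tilde L_0=\eps I_m$ is the empty-sum right-hand side. For the inductive step, note from the update rule of Alg.~\ref{alg:shampoo} (recalling that the sketch carries the regularizer, i.e. $\tilde L_t=\bar L_t+\rho^L_{1:t}I_m+\eps I_m$) that $\tilde L_t-\tilde L_{t-1}=(\bar L_t+\rho_t^L I_m)-\bar L_{t-1}$, which by Observation~\ref{observation:onestep-shampoo} is $\succeq G_tG_t^\top+\rho_t^L N_t^L$; hence $\tilde L_t\succeq\tilde L_{t-1}+G_tG_t^\top+\rho_t^L N_t^L$. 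Adding the PSD quantity $G_tG_t^\top+\rho_t^L N_t^L$ to both sides of the inductive hypothesis for $\tilde L_{t-1}$ and chaining the two inequalities yields $\tilde L_t\succeq\sum_{s=1}^t G_sG_s^\top+\sum_{s=1}^t\rho_s^L N_s^L+\eps I_m$, and the same computation applied to the right stream gives the bound on $\tilde R_t$.

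The one place that needs care --- and the reason the statement is an inequality rather than the equality of Observation~\ref{observation:onestep} --- is that each $G_t$ may have rank up to $r>1$, so $\bar L_{t-1}+G_tG_t^\top$ can have rank as large as $\ell-1+r$ and the deflation step may discard several eigendirections at once. Re-inflating by $\rho_t^L I_m$ then over-counts all of the discarded eigenvalues $\lambda_{\ell+1}^{(t)},\dots,\lambda_k^{(t)}$ (each at most $\rho_t^L$), and isolating that over-count as the PSD term $\sum_{i=\ell+1}^k(\rho_t^L-\lambda_i^{(t)})v_iv_i^\top$ above is the crux; everything else is bookkeeping identical to the proof of Lemma~\ref{lem:fd-update}.
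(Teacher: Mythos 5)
Your proof is correct and follows the same induction-plus-one-step-Loewner-bound structure as the paper's proof, with the bonus that you also supply a proof of Observation~\ref{observation:onestep-shampoo}, which the paper asserts without justification. You also correctly flag and fix a bookkeeping inconsistency in Alg.~\ref{alg:shampoo}: as literally stated, the update $\tilde L_t = \bar L_t + \rho^L_{1:t}I_m$ (without $\eps I_m$) conflicts with the initialization $\tilde L_0 = \eps I_m$, $\bar L_0=0$ and would break the induction at $t=1$; your reading $\tilde L_t = \bar L_t + \rho^L_{1:t}I_m + \eps I_m$ is the one under which the lemma (and its downstream use via $M_t^L\succeq \eps I_m+\sum_s G_sG_s^\top$) actually holds.
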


We follow the shampoo proof in \cite{gupta2018shampoo}. Let $x_t = \bvec{X_t}$, $g_t = \bvec{G_t}$, where $\bvec{\cdot}$ denote the row-major vectorization of a given matrix. 

Kronecker product $\otimes$ obeys the following properties as shown in \cite{gupta2018shampoo}:
\begin{lemma}[Lemma 3,4 in \citet{gupta2018shampoo}]
\label{lem:kronecker}
For matrices $A,A',B,B'$ of appropriate dimensions and vectors $u,v$, $L\in\R^{m\times m}, R\in\R^{n\times n}, G\in\R^{m\times n}$, the following properties hold:
\begin{enumerate}
\item $(A\otimes B)(A'\otimes B')=(AA')\otimes(BB')$.
\item $(A\otimes B)^\top = A^\top\otimes B^\top$.
\item $A,B\succeq 0$, $(A\otimes B)^{-1}=A^{-1}\otimes B^{-1}$.
\item $A\succeq A', B\succeq B'$, then $A\otimes B\succeq A'\otimes B'$. 
\item $\trace(A\otimes B)=\trace(A)+\trace(B)$. 
\item $\bvec{uv^\top}=u\otimes v$.
\item $(L\otimes R^\top)\bvec{G}=\bvec{LGR}$.
\end{enumerate}
\end{lemma}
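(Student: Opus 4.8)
The plan is to verify each of the seven properties by a direct computation in terms of matrix blocks or scalar entries; every item is classical, so the proof is essentially bookkeeping. The one point requiring attention is that the vectorization $\bvec{\cdot}$ used throughout the paper is \emph{row-major}, so the textbook column-major identities must be applied in transposed form; I would flag the convention once and then use it in items 6 and 7.

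For items 1 and 2 I would index $A\otimes B$ by block: its $(i,i')$ block is $a_{ii'}B$. Then the $(i,j)$ block of $(A\otimes B)(A'\otimes B')$ is $\sum_k (a_{ik}B)(a'_{kj}B') = \big(\sum_k a_{ik}a'_{kj}\big)BB' = (AA')_{ij}\,BB'$, which is exactly the $(i,j)$ block of $(AA')\otimes(BB')$; this is item 1. Item 2 is immediate, since the $(i,j)$ block of $(A\otimes B)^\top$ is the transpose of the $(j,i)$ block of $A\otimes B$, i.e. $(a_{ji}B)^\top = a_{ji}B^\top$, the $(i,j)$ block of $A^\top\otimes B^\top$. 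Item 3 follows from item 1 alone: when $A,B$ are invertible, $(A^{-1}\otimes B^{-1})(A\otimes B) = (A^{-1}A)\otimes(B^{-1}B) = I_m\otimes I_n = I_{mn}$, so $A^{-1}\otimes B^{-1}$ is the inverse of $A\otimes B$. Item 5 follows by summing the traces of the diagonal blocks $a_{ii}B$ of $A\otimes B$, giving $\sum_i a_{ii}\tr(B) = \tr(A)\tr(B)$.

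For item 4 I would first record the auxiliary fact that the Kronecker product of two PSD matrices is PSD: writing spectral decompositions $A = \sum_i \lambda_i u_iu_i^\top$, $B = \sum_j \mu_j v_jv_j^\top$ with $\lambda_i,\mu_j \ge 0$, item 1 gives $A\otimes B = \sum_{i,j}\lambda_i\mu_j\,(u_iu_i^\top)\otimes(v_jv_j^\top) = \sum_{i,j}\lambda_i\mu_j\,(u_i\otimes v_j)(u_i\otimes v_j)^\top \succeq 0$, where $(u_iu_i^\top)\otimes(v_jv_j^\top) = (u_i\otimes v_j)(u_i\otimes v_j)^\top$ is the rank-one case of item 1 combined with item 6. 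Given $A\succeq A'\succeq 0$ and $B\succeq B'\succeq 0$, the telescoping identity $A\otimes B - A'\otimes B' = (A-A')\otimes B + A'\otimes(B-B')$ writes the difference as a sum of Kronecker products of PSD matrices, hence it is PSD.

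Items 6 and 7 are the convention-dependent ones. For item 6, the $(i,j)$ entry of $uv^\top$ is $u_iv_j$, and row-major vectorization sends it to coordinate $(i-1)n+j$ of $\bvec{uv^\top}$, which is exactly the $(i-1)n+j$ coordinate of $u\otimes v$; hence $\bvec{uv^\top} = u\otimes v$. For item 7, expand $G = \sum_{i,j} g_{ij}\,e_ie_j^\top$ in the standard basis, so $LGR = \sum_{i,j} g_{ij}\,(Le_i)(R^\top e_j)^\top$; applying $\bvec{\cdot}$ with item 6 and linearity, then item 1 in the form $(Le_i)\otimes(R^\top e_j) = (L\otimes R^\top)(e_i\otimes e_j)$, gives $\bvec{LGR} = \sum_{i,j} g_{ij}\,(Le_i)\otimes(R^\top e_j) = (L\otimes R^\top)\sum_{i,j} g_{ij}\,(e_i\otimes e_j) = (L\otimes R^\top)\bvec{G}$, the last step again by item 6. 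There is no real obstacle here: each property reduces to a one-line block or coordinate identity, and items 3, 4, 5, 7 are obtained by chaining the earlier ones; the only thing to be careful about is keeping the row-major ordering straight in items 6 and 7.
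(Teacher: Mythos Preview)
The paper does not prove this lemma; it simply quotes it from \citet{gupta2018shampoo}, so there is no in-paper argument to compare against. Your direct block- and coordinate-wise verification is correct and is the standard way to establish these identities; in particular your care with the row-major convention in items 6 and 7 is exactly what is needed, and your telescoping argument for item 4 (with the implicit positivity $A'\succeq 0$, $B\succeq 0$ that the paper's context supplies) is the right one.

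One point worth making explicit: your computation for item 5 yields $\tr(A\otimes B)=\tr(A)\tr(B)$, not the additive $\tr(A)+\tr(B)$ printed in the statement. You are right and the statement contains a typo; the paper itself later invokes ``trace multiplicative equality in Lemma~\ref{lem:kronecker}'' to obtain $\tr(\tilde H_T)=\tr(\tilde L_T^{1/4})\tr(\tilde R_T^{1/4})$, confirming the intended identity is the product. A second minor remark: in item 3 the hypothesis is $A,B\succeq 0$ and the paper's convention is that $A^{-1}$ denotes the Moore--Penrose pseudoinverse when $A$ is singular; your argument covers only the invertible case, though the extension $(A\otimes B)^+ = A^+\otimes B^+$ follows by the same spectral-decomposition trick you used for item 4.
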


Then the shampoo update is
\[x_{t+1} = x_t - \eta (\tilde{L}_t^{1/4}\otimes \tilde{R}_t^{1/4})^{-1}g_t.
\]

Let $\tilde{H}_t \defeq \tilde{L}_t^{1/4}\otimes \tilde{R}_t^{1/4}$, then by Lemma \ref{lem:kronecker}, $\tilde{H}_t$ is monotone increasing with $t$, since $\tilde{L}_t$ and $\tilde{R}_t$ are monotone by Observation \ref{observation:onestep-shampoo}. Thus, by standard analysis \cite{hazan2016introduction} for Online Mirror Descent (OMD), we can break down the regret into the diameter bound and the gradient bound:
\begin{align*}
\regret_T&\le R_D + R_G,\ \ \ \text{where}
\end{align*}f
\begin{align*}
    R_D = \frac{1}{2\eta} \sum_{t=1}^T \left(\|x_t-x^*\|_{\tilde{H}_t}^2-\|x_{t+1}-x^*\|_{\tilde{H}_t}^2\right), \ \ \ 
    R_G = \frac{\eta}{2}\sum_{t=1}^T \left(\|g_t\|_{\tilde{H}_t}^*\right)^2.
\end{align*}

We proceed to bound $R_D$ and $R_G$ separately. For $R_D$, 
\begin{align*}
R_D&\leq \frac{1}{2\eta}\sum_{t=1}^T \|x_t-x^*\|_{\tilde{H}_t-\tilde{H}_{t-1}}^2+\|x_1-x^*\|_{\tilde{H}_0}^2\\
&\le \frac{1}{2\eta}\sum_{t=1}^T \|\tilde{H}_t-\tilde{H}_{t-1}\|_{op} \|x_t-x^*\|_2^2+\|x_1-x^*\|_{\tilde{H}_0}^2\\
&\le_1 \frac{D^2}{2\eta}\sum_{t=1}^T \tr(\tilde{H}_t-\tilde{H}_{t-1})+\|x_1-x^*\|_{\tilde{H}_0}^2\\
&\le \frac{D^2}{2\eta} \tr(\tilde{H}_T)\,\, ,
\end{align*}
where $\le_1$ holds since $\tilde{H}_t$'s are increasing in $t$, and we have for positive semidefinite matrices $\trace(\cdot)\ge \|\cdot\|_{op}$. 

Now we try to bound $R_G$. First, we have that 
\begin{lemma} [Lemma 8 in \citet{gupta2018shampoo}] 
If $G\in\R^{m\times n}$ with rank at most $r$, and $g=\bvec{G}$, then $\forall \eps\ge 0$, $\forall t$,
\begin{align*}
\ep I_{mn}+\frac{1}{r}\sum_{s=1}^t g_sg_s^{\top}\preceq \left(\eps I_m+\sum_{s=1}^t G_sG_s^{\top}\right)^{1/2}\otimes\left(\eps I_n+\sum_{s=1}^t G_s^{\top}G_s\right)^{1/2}.
\end{align*}
\end{lemma}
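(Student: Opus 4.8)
The plan is to reduce the claim to two ingredients: a single‑matrix bound obtained from the SVD, and a super‑additivity property of Kronecker products of matrix square roots. For the first ingredient, fix $G\in\R^{m\times n}$ of rank at most $r$ and write its SVD $G=\sum_{i=1}^r\sigma_iu_iv_i^\top$ with $\{u_i\}\subset\R^m$, $\{v_i\}\subset\R^n$ orthonormal and $\sigma_i\ge0$. By the vectorization identity of Lemma~\ref{lem:kronecker}, $g=\bvec G=\sum_i\sigma_i(u_i\otimes v_i)$, while $(GG^\top)^{1/2}=\sum_i\sigma_iu_iu_i^\top$ and $(G^\top G)^{1/2}=\sum_i\sigma_iv_iv_i^\top$, so $(GG^\top)^{1/2}\otimes(G^\top G)^{1/2}=\sum_{i,j}\sigma_i\sigma_j(u_i\otimes v_j)(u_i\otimes v_j)^\top$. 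The vectors $\{u_i\otimes v_j\}$ are orthonormal; expressing an arbitrary $x\in\R^{mn}$ in this system with coefficients $c_{ij}$, the quadratic form $x^\top gg^\top x$ equals $\pa{\sum_i\sigma_ic_{ii}}^2$ while $x^\top[(GG^\top)^{1/2}\otimes(G^\top G)^{1/2}]x=\sum_{i,j}\sigma_i\sigma_jc_{ij}^2\ge\sum_i\sigma_i^2c_{ii}^2$. Cauchy--Schwarz on the $r$‑term sum, $\pa{\sum_i\sigma_ic_{ii}}^2\le r\sum_i\sigma_i^2c_{ii}^2$, then gives the single‑matrix bound $g_sg_s^\top\preceq r\,(G_sG_s^\top)^{1/2}\otimes(G_s^\top G_s)^{1/2}$ for each $s$.

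The second ingredient is the inequality
\[
\sum_{s=1}^k A_s^{1/2}\otimes B_s^{1/2}\ \preceq\ \Bigl(\sum_{s=1}^k A_s\Bigr)^{1/2}\otimes\Bigl(\sum_{s=1}^k B_s\Bigr)^{1/2}
\]
for positive semidefinite $A_s\in\R^{m\times m}$, $B_s\in\R^{n\times n}$. The point is that $A_s^{1/2}\otimes I$ and $I\otimes B_s^{1/2}$ commute, so their product $A_s^{1/2}\otimes B_s^{1/2}$ appears as the off‑diagonal block of the manifestly PSD rank‑factored matrix $\left(\begin{smallmatrix}A_s^{1/2}\otimes I\\ I\otimes B_s^{1/2}\end{smallmatrix}\right)\left(\begin{smallmatrix}A_s^{1/2}\otimes I & I\otimes B_s^{1/2}\end{smallmatrix}\right)=\left(\begin{smallmatrix}A_s\otimes I & A_s^{1/2}\otimes B_s^{1/2}\\ A_s^{1/2}\otimes B_s^{1/2} & I\otimes B_s\end{smallmatrix}\right)$. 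Summing these PSD block matrices over $s$ and writing $\hat A=\sum_sA_s$, $\hat B=\sum_sB_s$, $Z=\sum_sA_s^{1/2}\otimes B_s^{1/2}$, we get $\left(\begin{smallmatrix}\hat A\otimes I & Z\\ Z & I\otimes\hat B\end{smallmatrix}\right)\succeq0$.

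To extract $Z\preceq\hat A^{1/2}\otimes\hat B^{1/2}$ from this, assume first $\hat A,\hat B\succ0$ (otherwise replace them by $\hat A+\delta I$, $\hat B+\delta I$, which only enlarges the block matrix, and let $\delta\downarrow0$ at the end using continuity of the square root and of $\otimes$). Conjugating the block matrix by $\diag\bigl((\hat A\otimes I)^{-1/2},\,I_{mn}\bigr)$ preserves positive semidefiniteness, and a Schur‑complement computation with respect to the now‑identity $(1,1)$ block yields $\bigl[(\hat A^{-1/2}\otimes I)\,Z\,(\hat A^{-1/2}\otimes I)\bigr]^2\preceq\hat A^{-1}\otimes\hat B$. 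Since the bracketed matrix $Y$ is symmetric we have $Y\preceq(Y^2)^{1/2}$, and operator monotonicity of $t\mapsto t^{1/2}$ gives $(Y^2)^{1/2}\preceq(\hat A^{-1}\otimes\hat B)^{1/2}=\hat A^{-1/2}\otimes\hat B^{1/2}$; conjugating back by $\hat A^{1/2}\otimes I$ gives $Z\preceq\hat A^{1/2}\otimes\hat B^{1/2}$. With both ingredients in hand, summing the single‑matrix bound over $s\le t$ and applying the super‑additivity inequality with $A_s=G_sG_s^\top$, $B_s=G_s^\top G_s$ shows $\tfrac1r\sum_{s\le t}g_sg_s^\top\preceq L^{1/2}\otimes R^{1/2}$ with $L=\sum_{s\le t}G_sG_s^\top$, $R=\sum_{s\le t}G_s^\top G_s$; applying super‑additivity once more to the two pairs $(\eps I_m,L)$ and $(\eps I_n,R)$, and noting $(\eps I_m)^{1/2}\otimes(\eps I_n)^{1/2}=\eps I_{mn}$, yields $\eps I_{mn}+L^{1/2}\otimes R^{1/2}\preceq(\eps I_m+L)^{1/2}\otimes(\eps I_n+R)^{1/2}$. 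Chaining the two bounds gives the claim.

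I expect the main obstacle to be the middle step: deducing the Loewner inequality $Z\preceq\hat A^{1/2}\otimes\hat B^{1/2}$ from the block‑PSD condition is exactly the maximality characterization of the matrix geometric mean, and it is the one place where a genuinely operator‑theoretic input is needed (monotonicity of the square root, i.e.\ L\"owner--Heinz) rather than elementary linear algebra; the $\delta\downarrow0$ limiting argument used to drop the strict‑positivity assumption also requires a line of care. Everything else is bookkeeping with the Kronecker identities collected in Lemma~\ref{lem:kronecker}.
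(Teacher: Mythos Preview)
The paper does not supply its own proof of this lemma; it simply cites Lemma~8 of \citet{gupta2018shampoo} and uses the statement as a black box inside the proof of Theorem~\ref{thm:add_shampoo}. Your proposal therefore cannot be compared to anything in \emph{this} paper, but it is a correct and complete argument, and it is essentially the same two-ingredient proof that appears in the original Shampoo paper: (i) the single-matrix bound $g_sg_s^\top\preceq r\,(G_sG_s^\top)^{1/2}\otimes(G_s^\top G_s)^{1/2}$ via the SVD and Cauchy--Schwarz, and (ii) the super-additivity $\sum_s A_s^{1/2}\otimes B_s^{1/2}\preceq(\sum_s A_s)^{1/2}\otimes(\sum_s B_s)^{1/2}$, which is Ando's extremal/concavity characterization of the matrix geometric mean specialized to the commuting pair $(A\otimes I,\,I\otimes B)$. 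Your block-matrix/Schur-complement derivation of (ii), together with the L\"owner--Heinz step and the $\delta\downarrow0$ limiting argument, is exactly the standard route; the final application to the pair $(\eps I,\eps I)$ and $(L,R)$ to absorb the $\eps I_{mn}$ term is also how the original handles the regularizer. Nothing is missing.
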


Define $M_t^L\in\R^{m\times m}, M_t^R\in\R^{n\times n}$ by 
\begin{align*}
M^L_t \defeq \sum_{s=1}^t G_sG_s^\top + \sum_{s=1}^t \rho_s^L N_s^L+\eps I_m\,\, , \ \ \ M^R_t \defeq \sum_{s=1}^t G_s^\top G_s + \sum_{s=1}^t \rho_s^R N_s^R+\eps I_n\,\, ,
\end{align*}
then by Lemma \ref{lem:fd-update-shampoo}, \[\tilde{L}_t \succeq M_t^L\,\, ,\ \ \tilde{R}_t \succeq M_t^R\,\, .\] Observe that in addition, 
\[
M_t^L \succeq \eps I_m + \sum_{s=1}^t G_sG_s^\top\,\, ,\ \ M_t^R \succeq \eps I_n + \sum_{s=1}^t G_s^\top G_s\,\, .
\]

Again by Lemma \ref{lem:kronecker},
\begin{align*}
I_m \otimes \left(\eps I_n + \sum_{s=1}^t G_s^\top G_s\right)\preceq I_m \otimes M_t^R\,\, ,\ \ \
\pa{\eps I_m + \sum_{s=1}^t G_sG_s^\top} \otimes I_n\preceq M_t^L \otimes I_n\,\, .
\end{align*}

Combining, we have 
\begin{align*}
    \eps I_{mn} + \frac{1}{r}\sum_{s=1}^t g_sg_s^\top \preceq \pa{M_t^L}^{1/2} \otimes \pa{M_t^R}^{1/2}\preceq \tilde{L}_t^{1/2} \otimes \tilde{R}_t^{1/2}\,\, .
\end{align*}

Define $\hat{H}_t\succ 0$ $\forall t\in[T]$ by 

\[\hat{H}_t \defeq\left(r\eps I_{mn} + \sum_{s=1}^t g_sg_s^\top\right)^{1/2}\preceq \sqrt{r}\tilde{H}_t\,\, .\]

The bound on $R_G$ depends on the following lemma:
\begin{lemma}[Lemma 2 in \citet{gupta2018shampoo}]
Consider a sequence of vectors $\{g_t\}_{t=1}^T$. Given a function $\Phi(\cdot)$ over positive semidefinite matrices, 
\begin{align*}
\sum_{t=1}^T \left(\|g_t\|_{H_t}^*\right)^2\le\sum_{t=1}^T\left(\|g_t\|_{H_T}^*\right)^2+\Phi(H_T)-\Phi(H_0)\,\, ,
\end{align*}
where 
\begin{align*}
H_t=\argmin_{H\succ 0} \left\{\left(\sum_{s=1}^t g_sg_s^{\top}\right)\cdot H^{-1}+\Phi(H)\right\}\,\, .
\end{align*}
\end{lemma}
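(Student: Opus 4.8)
The plan is to recognize this as the classical ``follow-the-leader/be-the-leader'' (FTL--BTL) lemma of \citet{kalai2005efficient}, in its regularized form, and prove it by a short induction. First I would rewrite the summand in trace-inner-product notation: since $\left(\|g_t\|_H^*\right)^2 = g_t^\top H^{-1} g_t = g_tg_t^\top\cdot H^{-1}$, the inequality to be established is
\begin{align*}
\sum_{t=1}^T g_tg_t^\top\cdot H_t^{-1} \le \sum_{t=1}^T g_tg_t^\top\cdot H_T^{-1} + \Phi(H_T)-\Phi(H_0).
\end{align*}
I would set $f_t(H) := g_tg_t^\top\cdot H^{-1}$ for $t\ge 1$ and $f_0 := \Phi$, all viewed as functions on $\{H\succ 0\}$, and write $F_t := \sum_{s=0}^t f_s$. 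Then $H_t = \argmin_{H\succ 0} F_t(H)$ for every $t\ge 0$; in particular $H_0 = \argmin_{H\succ 0}\Phi(H)$, which is consistent with reading the empty sum $\sum_{s=1}^0 g_sg_s^\top = 0$ in the lemma's definition of $H_t$. (Here I assume, as is implicit in the statement, that each of these minima is attained.)

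The core step is the be-the-leader inequality $\sum_{t=0}^T f_t(H_t) \le \sum_{t=0}^T f_t(H_T)$, which I would prove by induction on $T$. The base case $T=0$ is the equality $f_0(H_0)=f_0(H_0)$. For the inductive step, I assume $\sum_{t=0}^{T-1} f_t(H_t) \le \sum_{t=0}^{T-1} f_t(H_{T-1})$; adding $f_T(H_T)$ to both sides gives $\sum_{t=0}^{T} f_t(H_t) \le \sum_{t=0}^{T-1} f_t(H_{T-1}) + f_T(H_T)$, and since $H_{T-1}$ minimizes $F_{T-1}=\sum_{t=0}^{T-1} f_t$ we have $\sum_{t=0}^{T-1} f_t(H_{T-1}) \le \sum_{t=0}^{T-1} f_t(H_T)$, whence $\sum_{t=0}^{T} f_t(H_t) \le \sum_{t=0}^{T} f_t(H_T)$.

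Finally I would unpack the conclusion: separating the $t=0$ term on each side gives $\Phi(H_0) + \sum_{t=1}^T f_t(H_t) \le \Phi(H_T) + \sum_{t=1}^T f_t(H_T)$, and rearranging together with $f_t(H_t) = g_tg_t^\top\cdot H_t^{-1} = \left(\|g_t\|_{H_t}^*\right)^2$ yields exactly the stated bound. I do not expect a genuine obstacle here: the argument is purely order-theoretic and uses nothing about the $f_t$ beyond minimality of the $H_t$; the only points needing a word of care are the bookkeeping that lets $\Phi$ serve as the ``$0$-th loss'' (so that $H_0=\argmin\Phi$ corresponds to empty history) and the standing assumption that the $\argmin$ defining each $H_t$ is attained over $\{H\succ 0\}$.
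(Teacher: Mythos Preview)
Your argument is correct and is exactly the standard FTL--BTL induction of \citet{kalai2005efficient}: cast $\Phi$ as the zeroth loss, prove $\sum_{t=0}^T f_t(H_t)\le\sum_{t=0}^T f_t(H_T)$ by induction using that $H_{T-1}$ minimizes $F_{T-1}$, and peel off the $t=0$ terms. The paper does not give its own proof of this lemma (it is imported verbatim from \citet{gupta2018shampoo}); the closest thing in the paper is Lemma~\ref{lem:ftlbtl-error} (``FTL--BTL with errors''), whose proof is the same induction with additive slack terms $\delta_k$, so your approach matches both the original source and the paper's own variant.
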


Let $\Phi(H)\defeq \trace(H)+r\eps\trace(H^{-1})$ and since
\begin{align*}
\argmin_{H\succ 0} \left\{\left(\sum_{s=1}^t g_sg_s^{\top}\right)\cdot H^{-1}+\Phi(H)\right\}
=\argmin_{H\succ 0} \left\{\trace{\left(\hat{H}_t^2H^{-1}+H\right)}\right\}=\hat{H}_t\,\, ,
\end{align*}
the above lemma gives
\begin{align*}
\sum_{t=1}^T \left(\|g_t\|_{\hat{H}_t}^*\right)^2\le\sum_{t=1}^T\left(\|g_t\|_{\hat{H}_T}^*\right)^2+\Phi(\hat{H}_T)-\Phi(\hat{H}_0)\le 2\tr(\hat{H}_T)\,\, ,
\end{align*}
which by inequality of $\hat{H}_t$ and $\tilde{H}_t$ established above, gives
\begin{align*}
R_G\defeq\frac{\eta}{2}\sum_{t=1}^T \left(\|g_t\|_{\tilde{H}_t}^*\right)^2\le \frac{\eta\sqrt{r}}{2}\sum_{t=1}^T \left(\|g_t\|_{\hat{H}_t}^*\right)^2\le \eta\sqrt{r}\tr(\hat{H}_T)\le \eta r\tr(\tilde{H}_T)\,\, .
\end{align*}

Combining the bound on $R_D$ and $R_G$, the overall regret is
\begin{align*}
\regret_T \le R_D+R_G\le \pa{\frac{D^2}{2\eta} + \eta r}\tr(\tilde{H}_T) = \sqrt{2r}D\tr(\tilde{H}_T)= \sqrt{2r}D\tr(\tilde{L}_T^{1/4})\tr(\tilde{R}_T^{1/4})\,\, .
\end{align*}
by the choice of $\eta=\frac{D}{\sqrt{2r}}$ and trace multplicative equality in Lemma \ref{lem:kronecker}. Finally, we  have \ignore{\vlad{below is true, but doesn't follow from any of our existing observations, need to upper bound sketch in terms of ggt plus error}}
\begin{align*}
\tr\pa{\tilde{L}_T^{1/4}} 
&\le_1\tr\left(\bar{L}_T^{1/4}\right)+\tr\left(\left(\rho_{1:T}^L I_m\right)^{1/4}\right)\\
&\le_2 \tr\pa{\pa{\sum_{t=1}^TG_tG_t^\top + \eps I}^{1/4}} + m\pa{\rho_{1:T}^L}^{1/4}\\
&=\tr\pa{L_T^{1/4}}+m\pa{\rho_{1:T}^L}^{1/4},
\end{align*}
where $\le_1$ follows from definition of $\tilde{L}_T$ in Algorithm \ref{alg:shampoo} and subadditivity of $\tr\left(\left(\cdot\right)^{1/4}\right)$ \cite{audenaert2014generalisation}, $\le_2$ follows from Remark \ref{rmk:fd-inequality}. Similarly,
\begin{align*}
\tr\pa{\tilde{R}_T^{1/4}}\le \tr\pa{R_T^{1/4}}+n\pa{\rho_{1:T}^R}^{1/4}.
\end{align*}
\end{proof}

\subsubsection{Proof of Lemma \ref{lem:fd-update-shampoo}}
\begin{proof}
We will show the first inequality as the second inequality holds analogously. For $t=0$, $\tilde{L}_0=\eps I_m$ by definition of algorithm. Suppose the first inequality holds for $t$. Consider $t+1$:
\begin{align*}
\tilde{L}_{t+1}&=\bar{L}_{t+1}+\rho_{1:t+1}^L I_m \\
&\succeq_1 \bar{L}_t +G_{t+1}G_{t+1}^\top +\rho_{t+1}^L N_{t+1}+\rho_{1:t}^L I_m\\
&=\tilde{L}_t+G_{t+1}G_{t+1}^\top +\rho_{t+1}^L N_{t+1} \\
&\succeq_2 \sum_{s=1}^{t+1} G_sG_s^\top +\sum_{s=1}^{t+1}\rho_s^LN_s^L+\eps I_m\,\, , 
\end{align*}
where $\succeq_1$ follows from Observation \ref{observation:onestep-shampoo} and $\succeq_2$ follows from induction hypothesis. 
\end{proof}

\section{Training Settings}\label{sec:appendix-training-general}

For repeatable, standard evaluation on modern, competitive tasks we use \texttt{init2winit} \citep{init2winit2021github} for Jax \citep{jax2018github} implementations of architectures in Flax \citep{flax2020github} and standard dataset preprocessing built on top of TFDS \citep{TFDS}. We rely on standard scientific packages for conducting our work \citep{Waskom2021,Hunter:2007,harris2020array,2020SciPy-NMeth}. Our source code will be released after publication. % TODO link it here later

Neural net architecture settings are taken from the default settings of the \texttt{init2winit} library at hash \texttt{e337ffe} \citep{init2winit2021github}, which reference the MLCommons specifications provided at \citet{mlcommons},
including the MLPerf ResNet-50 variant \citep{reddi2019mlperf}, Conformer, and GNN. The Distributed Shampoo implementation was run at hash \texttt{83e6e62} in the repository referenced by \citet{anil2020scalable}.

Throughout, weight decay is applied using its decopled variant \citep{loshchilov2017decoupled}.

We requested a Shampoo tuning script from \citet{dayma2022, anil2020scalable}, which fixed several parameters for Shampoo outside the usual defaults. We tuned on a cluster of TPUv4s, with 16 TPUv4s per trial in data-parallel mode.

\begin{itemize}
    \item Block size was already set to 1024. As mentioned in Sec.~\ref{sec:spectral-analysis}, we kept this change for consistency in covariance factor size across architectures.
    \item Preconditioning was set to start 101 steps into training (\texttt{start\_preconditioning\_step}).
    \item Preconditioners were updated every 10 steps instead of every step for speed (\texttt{preconditioning\_compute\_steps} is 10).
    \item The grafting type, which controls the per-tensor learning rate schedule, was set to \texttt{RMSPROP\_NORMALIZED}, which applies RMSProp \citep{hinton2012neural} over unit-normalized gradients.
    \item \texttt{moving\_average\_for\_momentum} was activated (so the final updates are computed as $\beta_1 \mu_t + (1-\beta_1) g_t$, where $\mu_t$ is the momentum term and $g_t$ is the preconditioned update.
    \item The virtual batch size, used to compute batch norm statistics, was set to $32$ (the full per-step minibatch size was $1024$, but this enables data-parallel training).
\end{itemize}

Also from the provided script, we used a linear warmup rampup starting from 0 to the nominal learning rate hyperparameter setting, followed by a cosine decay schedule, with the transition happening 5\% of the way into training (the learning rate monotonically increases, then montonically decreases, as the cosine schedule has a quarter-period set to the number of training steps).

Then, we performed tuning using random hyperparameter search over the space defined in Tbl.~\ref{table:spectral-analysis-hparams}. We ran the batch sizes and number of steps provided in the scripts, which were 256, 512, 1024 for Conformer, GNN, and ResNet-50, respectively, for about 162, 117, 199 epochs, respectively.

Shampoo is automatically configured with grafting parameters, which we search over \citep{agarwal2020disentangling}.

\begin{table}
\caption{The search space for hyperparameters for tuning Shampoo on our NN architectures for the Kronecker-factored covariance optimization. Note that the search space explores one less momentum, not momentum directly. Label smoothing was only applied to ImageNet. We sample uniformly either from linear or logscale among the ranges specified with 100 trials, and select the best one according to validation accuracy.
}
\label{table:spectral-analysis-hparams}
\begin{center}
\begin{tabular}{ccc}
		\hline
		  Hyperparameter    & Range & Log scale?
\\
\hline
Learning rate $\eta$ & $[10^{-4},10^{-2}]$ & \checkmark\\
Weight decay $\gamma$ & $[10^{-2}, 1]$ & \checkmark\\
Momentum $1-\beta_1$ & $[10^{-2}, 10^{-1}]$ & \checkmark\\
Label smoothing & $[0, 0.2]$ &\\
\hline
\end{tabular}
\end{center}
%\vskip -0.3in
\end{table}

\section{ResNet-50 Settings}\label{sec:resnet-settings}

For training ImageNet, we mostly inherited the settings of Appendix~\ref{sec:appendix-training-general} for Shampoo tuning, but made some minor modifications, namely adding of the second moment decay ($\beta_2$), widening the search space, and, for computational reasons, performing a shortened run of only 66 epochs for tuning trials. The architecture details remain the same. The learning rate schedule was stretched to this interval, so warmup was still 5\% of the duration, and cosine decay ended learning rate at 0 by the end of the 66 epochs of training. The full search space is elaborated on in Tbl.~\ref{table:generalization-hparams}.

\begin{table}
\caption{The search space for hyperparameters for tuning Shampoo on our architectures for ImageNet hyperparameters. The same space was applied to \texttt{S-Shampoo} with a fixed sketch rank $\ell=256$ for all tensors. Note that we search $1-\beta_1,1-\beta_2$, and not the original hyperparameter. We sample uniformly either from linear or logscale among the ranges specified with 256 trials, and select the best one according to validation accuracy. $(*)$ stands for a discrete uniform choice over four different grafting rates, based on AdaGrad, RMSProp, and normalized versions of the two. The gradient clipping norm is similarly discrete.
}
\label{table:generalization-hparams}
\begin{center}
\begin{tabular}{ccc}
		\hline
		  Hyperparameter    & Range & Log?
\\
\hline
Learning rate $\eta$ & $[10^{-4},10^{-2}]$ & \checkmark\\
Weight decay $\gamma$ & $[10^{-3}, 0.1]$ & \checkmark\\
Momentum $1-\beta_1$ & $[10^{-4}, 10^{-1}]$ & \checkmark\\
2\textsuperscript{nd} moment $1-\beta_2$ & $[10^{-4}, 10^{-1}]$ & \checkmark\\
Label smoothing & $[0, 0.2]$ &\\
Dropout Rate & $[0, 0.2]$ &\\
Grafting Type & $(*)$ &\\
Gradient Clip $L_2$ & $\ca{1, 10, 10^2, 10^3}$ &\\
\hline
\end{tabular}
\end{center}
%\vskip -0.3in
\end{table}

To tune Adam, a first order method, we considered mostly the same nominaly hyperparameters (where $\beta_2$ refers to second moment momentum now), except grafting, which instead we replaced with a search over the warmup duration, summarized in Tbl.~\ref{table:adam-gen-hparams}.

\begin{table}
\caption{The search space for hyperparameters for tuning Adam on our architectures for ImageNet hyperparameters. The same caveats as in Tbl.~\ref{table:generalization-hparams} apply. Also tuned with 256 trials.
}
\label{table:adam-gen-hparams}
\begin{center}
\begin{tabular}{ccc}
		\hline
		  Hyperparameter    & Range & Log?
\\
\hline
Learning rate $\eta$ & $[10^{-4},10^{-2}]$ & \checkmark\\
Weight decay $\gamma$ & $[10^{-3}, 0.1]$ & \checkmark\\
Momentum $1-\beta_1$ & $[10^{-4}, 10^{-1}]$ & \checkmark\\
2\textsuperscript{nd} moment $1-\beta_2$ & $[10^{-4}, 10^{-1}]$ & \checkmark\\
Label smoothing & $[0, 0.2]$ &\\
Dropout Rate & $[0, 0.2]$ &\\
Warmup Duration & $[2\%, 10\%]$ of training &\\
Gradient Clip $L_2$ & $\ca{1, 10, 10^2, 10^3}$ &\\
\hline
\end{tabular}
\end{center}
%\vskip -0.3in
\end{table}

Full evaluation of the selected best hyperparameters for each setting was performed with the classical 90-epoch setting, with the learning rate schedule correspondingly stretched.

We provide the full training curves in Fig.~\ref{fig:imagenet-long}.

\begin{figure}
\centering
\includegraphics[width=\columnwidth]{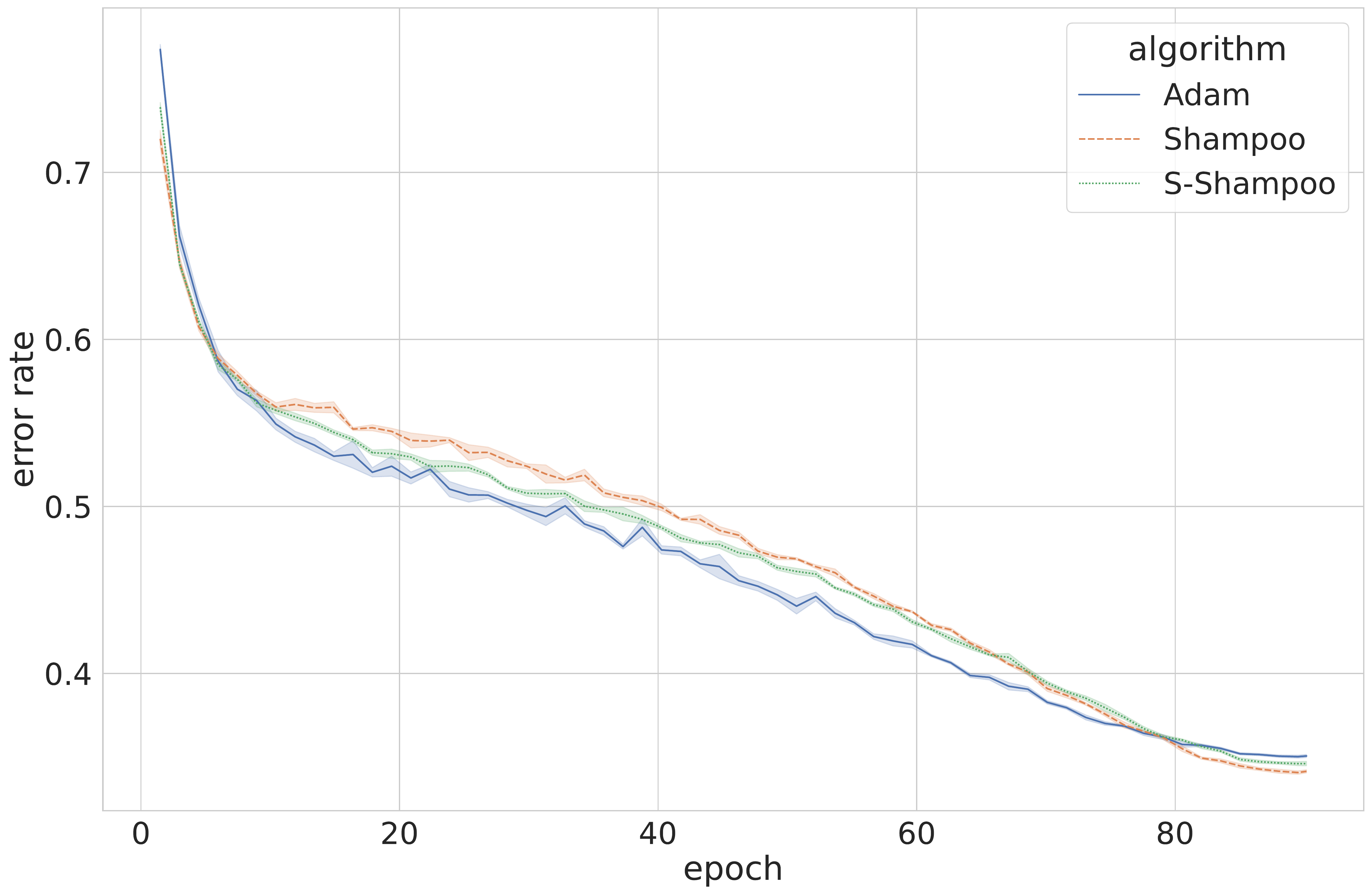}
\caption{Full test set curves for imagenet.}\label{fig:imagenet-long}
\end{figure}

\section{Conformer Settings}\label{sec:libri-hparams}

The Conformer architecture was used from the MLCommons specification as described in Appendix~\ref{sec:appendix-training-general}, with the following fixed additional settings: 1024 batch size, 100 epochs of training, 5\% of training used for linear warmup with cosine decay of learning rate. We fixed gradient clipping at a value of $10$, without which we noticed Adam curves were very volatile. We set the \texttt{eigh} parameter for Shampoo to true (we found that it did not make a difference in a few sample runs' loss curves, as an alternative to the iterative $p$-th inverse root routine in Shampoo, but used it instead since we believe it has better numerical stability). The hyperparameters we searched over for all optimizers are described in Tbl.~\ref{tbl:conformer-hparams}.

\begin{table}
\caption{The search space for hyperparameters for tuning Shampoo, Adam, and \texttt{S-Shampoo} for Conformer with fixed $256$ rank. Here we fixed the grafting type to be RMSProp. During initial runs of the baselines, we noticed that Adam preferred larger learning rates, so we changed and reran its space for $\eta$ to be $10\times$ that of Shampoo, namely $[10^{-4},10^{-2}$, still searching over logspace. We also stopped any hyperparameter trial which did not go below 0.875 WER after 5000 training steps.
}\label{tbl:conformer-hparams}
\begin{center}
\begin{tabular}{ccc}
		\hline
		  Hyperparameter    & Range & Log?
\\
\hline
Learning rate $\eta$ & $[10^{-5},10^{-3}]$ & \checkmark\\
Momentum $1-\beta_1$ & $[10^{-3}, 10^{-1}]$ & \checkmark\\
2\textsuperscript{nd} moment $1-\beta_2$ & $[10^{-3}, 10^{-1}]$ & \checkmark\\
Weight decay $\gamma$ & $[10^{-4}, 10^{-2}]$ & \checkmark\\
Dropout Rate & $[0, 0.2]$ & \checkmark\\
\hline
\end{tabular}
\end{center}
%\vskip -0.3in
\end{table}

\section{GNN Settings}\label{sec:gnn-hparams-settings}

The GNN architecture was used from the MLCommons specification as described in Appendix~\ref{sec:appendix-training-general}, with the following fixed additional settings: 1024 batch size, 30 epochs of training, 5\% of training used for linear warmup with cosine decay of learning rate, 0.05 dropout, and we set the \texttt{eigh} parameter for Shampoo to true as in Appendix~\ref{sec:libri-hparams}.

Then we searched a hyperparameter space for Shampoo, Adam, and \texttt{S-Shampoo} as described in Tbl.~\ref{tbl:gnn-hparams}.

\begin{table}
\caption{The search space for hyperparameters for tuning Shampoo and \texttt{S-Shampoo} for GNN with fixed regularization settings (due to resource constraints, we only ran 128 samples from the grid here). Here we fixed the grafting type to be RMSProp and did not use gradient clipping, unlike Tbl.~\ref{table:generalization-hparams}, based on a few trial runs of the Shampoo baseline from which we determined we could reduce the hyperparameter space.
}\label{tbl:gnn-hparams}
\begin{center}
\begin{tabular}{ccc}
		\hline
		  Hyperparameter    & Range & Log?
\\
\hline
Learning rate $\eta$ & $[10^{-4},10^{-2}]$ & \checkmark\\
Momentum $1-\beta_1$ & $[10^{-3}, 0.5]$ & \checkmark\\
2\textsuperscript{nd} moment $1-\beta_2$ & $[10^{-3}, 0.5]$ & \checkmark\\
Weight decay $\gamma$ & $[10^{-3}, 0.5]$ & \checkmark\\
\hline
\end{tabular}
\end{center}
%\vskip -0.3in
\end{table}

\section{Step-skipping}\label{sec:step-skipping}
In this section, we provide some theoretical justification for step-skipping. We first derive the regret bound of AdaGrad with step skipping, named Generic Epoch AdaGrad. The additional regret incurred is expressed as an error term. Then, we describe a setting where the error term admits a simple bound, showing that step-skipping incurs at most an extra $\log T$ time dependence on the regret.  
\subsection{Adversarial losses}

Consider a generalized epoching AdaGrad with $K$ fixed update points $t_k$, such that $t_1=0$ and $t_K=T$.

\begin{algorithm}[h!]
\caption{Generic Epoch AdaGrad}
\label{alg:epoch-AdaGrad}
\begin{algorithmic}[1]
\STATE Input: $\eta, T, \ca{t_k}_{k=1}^K, G_0\succ 0$, convex closed set $\K$.
\STATE Initialize: $x_1$.
\FOR{$k=1,\dots,K-1$}
\FOR{$t=t_{k}+1,\cdots,t_{k+1}$}
\STATE Play $x_t$, receive $f_t$ loss with gradient $g_t$.
\STATE Update $G_{t}= G_{t-1} + g_tg_t^\top$.
\STATE Update $x_{t+1} =\Pi_{\K} [x_t - \eta G_{t_k}^{-1/2}g_t]$.
\ENDFOR
\ENDFOR
\end{algorithmic}
\end{algorithm}

\begin{theorem}\label{thm:generic-epoch-AdaGrad}
Generic Epoch AdaGrad (Alg.~\ref{alg:epoch-AdaGrad}) with fixed update points $\ca{t_k}_{k=1}^K$ satisfies
\begin{align*}
R_T\le \frac{D^2}{\eta}\tr G_T^{1/2}+\frac{\eta}{2}\pa{2\tr G_T^{1/2}-2\tr G_0^{1/2}+\sum_{k=1}^K\epsilon_k}\,\,,
\end{align*}
where the error terms $\epsilon_k$ are given by
\begin{align*}
\epsilon_k&=\tr\pa{ G_{t_k}^{-1/2}S_k G_{t_k}^{-1/2} A_k}\,\,,\\
S_k&=\int_0^\infty \exp\pa{-\tau G_{t_k}^{1/2}}A_k\exp\pa{-\tau G_{t_k}^{1/2}} \d{\tau}\,\,,\\
A_k&=G_{t_{k+1}}-G_{t_k}\,\,. 
\end{align*}
\end{theorem}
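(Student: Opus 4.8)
The plan is to run the usual online mirror descent / AdaGrad analysis with the piecewise-constant preconditioner sequence $\{G_{t_k}^{-1/2}\}$ playing the role of the time-varying regularizer, and to account for the cost of using a stale preconditioner inside each epoch via a second-order expansion of the concave trace function $X\mapsto\tr X^{1/2}$, whose curvature along a segment is exactly the Lyapunov quantity appearing in $S_k$. Concretely, write $H_t:=G_{t_k}^{1/2}$ for $t$ in the $k$-th epoch. The standard argument (as in the proof of Theorem~\ref{thm:add} and \cite{hazan2016introduction}) splits $R_T\le R_D+R_G$ with $R_D=\tfrac{1}{2\eta}\sum_t\pa{\norm{x_t-x^*}_{H_t}^2-\norm{x_t-x^*}_{H_{t-1}}^2}$ and $R_G=\tfrac{\eta}{2}\sum_t\norm{g_t}_{H_t^{-1}}^2$. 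Since $G_{t_1}\preceq G_{t_2}\preceq\cdots$, the sequence $H_t$ is Loewner-monotone and changes only at the $K$ update points, so $R_D$ telescopes; using $\norm{\cdot}_{op}\le\tr(\cdot)$ on PSD matrices exactly as in the proof of Theorem~\ref{thm:add}, $R_D$ is bounded by a constant times $D^2\tr G_T^{1/2}/\eta$, matching the first term of the claimed bound.

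The heart of the argument is $R_G$. Inside epoch $k$ the preconditioner is frozen at $G_{t_k}^{-1/2}$, while $\sum_{t=t_k+1}^{t_{k+1}}g_tg_t^\top=G_{t_{k+1}}-G_{t_k}=A_k$, so the epoch contributes exactly $\tr\pa{G_{t_k}^{-1/2}A_k}$ to $\sum_t\norm{g_t}_{H_t^{-1}}^2$. I would compare this with the ideal (non-skipped) contribution $2\pa{\tr G_{t_{k+1}}^{1/2}-\tr G_{t_k}^{1/2}}$, the quantity the usual AdaGrad telescoping (concavity of $X\mapsto\tr X^{1/2}$; Lemma~10 of \citet{duchi2011adaptive}) would have produced. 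Set $h(s)=\tr\pa{G_{t_k}+sA_k}^{1/2}$ and $X(s)=\pa{G_{t_k}+sA_k}^{1/2}$ for $s\in[0,1]$. Then $h'(s)=\tfrac12\tr\pa{X(s)^{-1}A_k}$, and differentiating once more, using the Lyapunov identity $X(s)\dot X(s)+\dot X(s)X(s)=A_k$ together with $\dot X(s)=\int_0^\infty e^{-\tau X(s)}A_ke^{-\tau X(s)}\d\tau$, gives $h''(s)=-\tfrac12\,\phi\pa{X(s)}$ where $\phi(X):=\tr\pa{X^{-1}\pa{\textstyle\int_0^\infty e^{-\tau X}A_ke^{-\tau X}\d\tau}X^{-1}A_k}$; in particular $\phi\pa{G_{t_k}^{1/2}}=\tr\pa{G_{t_k}^{-1/2}S_kG_{t_k}^{-1/2}A_k}=\epsilon_k$. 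Taylor's theorem with integral remainder then yields
\[
\tr\pa{G_{t_k}^{-1/2}A_k}=2\pa{\tr G_{t_{k+1}}^{1/2}-\tr G_{t_k}^{1/2}}+\int_0^1(1-s)\,\phi\pa{X(s)}\,\d s .
\]

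To finish, I would bound the remainder by $\epsilon_k$, sum over $k=1,\dots,K-1$ (the telescoping sum of $\tr G_{t_{k+1}}^{1/2}-\tr G_{t_k}^{1/2}$ collapses to $\tr G_T^{1/2}-\tr G_0^{1/2}$ since $t_1=0$ and $t_K=T$), and multiply by $\tfrac\eta2$ to get $R_G\le\tfrac\eta2\pa{2\tr G_T^{1/2}-2\tr G_0^{1/2}+\sum_{k}\epsilon_k}$; combining with the bound on $R_D$ proves the theorem.

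The main obstacle is the single inequality $\int_0^1(1-s)\phi(X(s))\,\d s\le\epsilon_k=\phi\pa{X(0)}$. Because $\int_0^1(1-s)\,\d s=\tfrac12$, it suffices (in fact with room to spare) to show $\phi$ is non-increasing along the \emph{increasing} segment $s\mapsto X(s)$, i.e.\ that $X\mapsto\tr\pa{X^{-1}L_X^{-1}(A)X^{-1}A}$ is antitone in the Loewner order for fixed $A\succeq 0$, where $L_X(Y)=XY+YX$ — morally, the curvature of the concave map $X\mapsto\tr X^{1/2}$ shrinks as $X$ grows. Using the integral representations of $X^{-1}$ and $L_X^{-1}$, this reduces to antitonicity of $X\mapsto\tr\pa{Ae^{-aX}Ae^{-bX}}=\norm{e^{-aX/2}Ae^{-bX/2}}_F^2$ for all $a,b\ge0$, a double-operator-integral / operator-monotone-function statement; carefully establishing this (or substituting a quantitatively weaker monotonicity that still yields a bound of the form $C\,\epsilon_k$) is where I expect the real effort to lie, everything else being bookkeeping.
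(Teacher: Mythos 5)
Your overall architecture is sound and takes a genuinely different route from the paper. Where the paper works purely at first order — matrix convexity of $F(X)=X^{-1/2}$ (Lemma~\ref{claim:matcvx}) gives the Loewner inequality $G_{t_k}^{-1/2}\preceq G_{t_{k+1}}^{-1/2}-\partial F(G_{t_k})(A_k)$, which after testing against $A_k$ yields the per-epoch bound $\tr(A_kG_{t_k}^{-1/2})\le\tr(A_kG_{t_{k+1}}^{-1/2})+\epsilon_k$, telescoped via the FTL-BTL-with-errors lemma — you instead write an \emph{exact} scalar Taylor expansion of $s\mapsto\tr(G_{t_k}+sA_k)^{1/2}$ to second order with integral remainder, telescope the first-order terms directly, and aim to bound the remainder by $\epsilon_k$. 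Your identification of $\phi(X(0))=\tr\pa{G_{t_k}^{-1/2}S_kG_{t_k}^{-1/2}A_k}=\epsilon_k$ as the curvature at the left endpoint is correct and matches the paper's quantity. The paper's route avoids any third-order consideration and needs only the one-line convexity inequality; yours is conceptually tidier (no auxiliary FTL-BTL lemma) but shifts the burden to the remainder estimate, which is exactly where your proposal is incomplete.

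The specific reduction you sketch for closing the gap — Loewner antitonicity of $X\mapsto\tr\pa{e^{-aX}Ae^{-bX}A}=\norm{e^{-aX/2}Ae^{-bX/2}}_F^2$ — is stronger than what you need and in fact \emph{fails}. Taking $A=uu^\top$, the directional derivative in a PSD direction $H$ reduces to $\tr\pa{H\cdot e^{-\alpha X}Ae^{-\beta X}Ae^{-\gamma X}}$, and with $v=e^{-\alpha X}u$, $w=e^{-\gamma X}u$ the relevant matrix is proportional to $vw^\top$, whose symmetrization $vw^\top+wv^\top$ is indefinite whenever $v\not\parallel w$ (generic when $X$ is not a multiple of $I$). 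So $\psi_{a,b}$ is not antitone in the Loewner order, and a fortiori neither reduction step goes through. What you actually need is the much weaker \emph{path} monotonicity: $\phi\pa{X(s)}$ nonincreasing in $s$ along $X(s)=(G_{t_k}+sA_k)^{1/2}$, which is equivalent to $h'''(s)\ge0$ for $h(s)=\tr(G_{t_k}+sA_k)^{1/2}$. This does hold, and is best seen from the integral representation $\sqrt x=\tfrac{1}{\pi}\int_0^\infty\pa{\tfrac1\lambda-\tfrac{1}{\lambda+x}}\sqrt\lambda\,\d\lambda$: with $M=M(s)=(\lambda I+G_{t_k}+sA_k)^{-1}$ and $C=M^{1/2}A_kM^{1/2}\succeq0$ one computes
\begin{align*}
\frac{\d{}^3}{\d{s}^3}\tr M\eq-6\tr\pa{M^2A_kMA_kMA_k}\eq-6\tr\pa{C^3M}\leq0\,\,,
\end{align*}
so $h'''(s)=-\tfrac1\pi\int_0^\infty\tfrac{\d{}^3}{\d{s}^3}\tr M\,\sqrt\lambda\,\d\lambda\ge0$. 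With this, $\int_0^1(1-s)\phi\pa{X(s)}\d s\le\tfrac12\phi\pa{X(0)}=\tfrac12\epsilon_k$, which is even a factor-of-two sharper than what you asked for, and the rest of your bookkeeping then delivers the theorem. So the gap is real (the stated reduction would not work), but the architecture is salvageable with this path-specific argument; the paper simply sidesteps the entire issue by never going past first order.
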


\begin{proof}[Proof of Theorem \ref{thm:generic-epoch-AdaGrad}]
First we start with the usual decomposition.

\begin{lemma}\label{lem:basic-decomposition}
Consider arbitrary adversarial convex losses $f_t$. Without projection, the regret $R_T$ relative to a comparator $x_*$ with $D=\max_t\norm{x_t-x_*}_2$, for generic epoch AdaGrad with fixed update points $t_k$ is given by
\[
R_T\le \frac{D^2}{ \eta}\tr G_T^{1/2}+\frac{\eta}{2}\sum_{k=1}^K\sum_{t=t_k+1}^{t_{k+1}}g_t^\top G_{t_k}^{-1/2}g_t\,\,.
\]
\end{lemma}
\begin{proof} [Proof of Lemma \ref{lem:basic-decomposition}]
This follows from the usual AdaGrad analysis since our preconditioners are monotone $G_{t_k}\succeq G_{t_{k+1}}$.
\end{proof}

So we must turn our attention to the gradient bound. We start by noting the following lemmas established in matrix analysis.

\begin{lemma}[Corollary 4.1 in \citep{ando1979concavity}]
The map $f(X)=X^{-1/2}$ is matrix convex over the positive definite domain; i.e., for any two matrices $A, B\succ 0$ and any $\theta\in[0,1]$, we have
\[
\theta f(A)+(1-\theta) f(B)\succeq f(\theta A + (1-\theta)B)\,\,.
\]
\end{lemma}

\begin{lemma}[Theorem V.3.3, Exercise V.3.15 in \citep{bhatia2005}]
\label{claim:matcvx}
Suppose a matrix convex function $F(X)$ is induced by applying $f$ pointwise to its spectrum $F(X)=U\diag [f(\Lambda_{ii})]U^{\dagger}$ with $f\in\mathcal{C}^1(I)$ for some $I\subset\R_+$. Then
\[
F(X)+\partial F(X)(\Delta)\preceq F(X+\Delta)\,\,,
\]
if and only if $F(X)$ is matrix convex, and the linear transformation $\partial F(X)$ is the derivative of $F$ at $X$. 
\end{lemma}

Matrix derivative computation \citep{petersen2008matrix, brockett2015finite} shows that if $F(X)=X^{-1/2}$ then
\[
\partial F(X)(\Delta)= - X^{-1/2}\left[(X^{1/2}\oplus X^{1/2})^{-1}\Delta\right]X^{-1/2}\,\,,
\]
where $(X^{1/2}\oplus X^{1/2})^{-1}\Delta$ is the solution $S$ to the continuous Lyapunov equation $\sqrt{X}S+S\sqrt{X}=\Delta$ as $X\oplus X = I\otimes X + X\otimes I$. For $X\succ 0$, it is known from generic results about Sylvester's equation that the solution $S$ is unique. Since $-X$ is asymptotically stable in the Lyapunov sense, 
\[
S(X, \Delta)=\int_0^{\infty}\exp(-\tau \sqrt{X})\Delta\exp(-\tau \sqrt{X})\d{\tau}\,\,.
\]

With these results from matrix analysis and linear systems, we are ready to bound the gradient term in Lemma~\ref{lem:basic-decomposition}. Consider a single term from the gradient bound in Lemma~\ref{lem:basic-decomposition}, $\sum_{t=t_k+1}^{t_{k+1}}g_t^\top G_{t_k}^{-1/2}g_t$ for fixed $k$.

With $X=G_{t_k}$, $\Delta =A_k= G_{t_{k+1}}-G_{t_k}$, and $f(X)=X^{-1/2}$ consider applying Lemma~\ref{claim:matcvx}. $F(X)\preceq F(X+A_k)-\partial F(X)(A_k)$, so overall
\begin{align*}
&\sum_{t=t_k+1}^{t_{k+1}}g_t^\top G_{t_k}^{-1/2}g_t \\
&\le \sum_{t=t_k+1}^{t_{k+1}}g_t^\top \left[G_{t_{k+1}}^{-1/2}-\partial F(G_{t_k})(A_k)\right]g_t\\
&=\sum_{t=t_k+1}^{t_{k+1}}g_t^\top G_{t_{k+1}}^{-1/2}g_t - \tr\pa{\partial F(G_{t_k})(A_k)\sum_{t=t_k+1}^{t_{k+1}}g_tg_t^\top}\\
&=\sum_{t=t_k+1}^{t_{k+1}}g_t^\top G_{t_{k+1}}^{-1/2}g_t - \tr\pa{\partial F(G_{t_k})(A_k)A_k}\\
&=\sum_{t=t_k+1}^{t_{k+1}}g_t^\top G_{t_{k+1}}^{-1/2}g_t + \tr\left(G_{t_k}^{-1/2}S_kG_{t_k}^{-1/2}A_k\right)\\
&=\sum_{t=t_k+1}^{t_{k+1}}g_t^\top G_{t_{k+1}}^{-1/2}g_t + \epsilon_k\,\,.
\end{align*}

\begin{lemma}[FTL-BTL with errors] 
\label{lem:ftlbtl-error}
Consider arbitrary $\phi_k$ for $k\in 0,1,\cdots,K$. Let $x_k\in\argmin\sum_{j=0}^k\phi_j$ and suppose $\phi_k(x_{k-1})\le \phi_k(x_k)+\delta_k$. Then $\forall K$,
\[
\sum_{k=0}^K\phi_k(x_{k-1})\le \sum_{k=0}^K\phi_k(x_K)+\delta_k\,\,,
\]
with $\delta_0=0$ and $x_{-1}=x_0$.
\end{lemma}
Assume Lemma \ref{lem:ftlbtl-error} and take $\phi_k(X)=\inner{ A_k, X}$ for $X\succeq 0$ and $\phi_0(X)=\inner{G_0, X} +\tr X^{-1}$. Note that 
\begin{align*}
\sum_{j=0}^k\phi_j(X)&=\tr X^{-1}+\sum_{j=0}^k\inner{A_j, X}\,\,.
\end{align*}
In particular, $G_{t_{k+1}}^{-1/2}=\argmin_{X\succeq 0} \sum_{j=0}^k \phi_j(X)$.

Furthermore, with $\delta_k=\epsilon_k$, the condition $\phi_k(G_{t_{k}})\le\phi_k(G_{t_{k+1}})+\delta_k$ is satisfied. 
Lemma~\ref{lem:ftlbtl-error} implies 
\begin{align*}
&\sum_{k=1}^K\sum_{t=t_k+1}^{t_{k+1}}g_t^\top G_{t_k}^{-1/2}g_t=\sum_{k=1}^{K-1}\phi_k\left(G_{t_k}^{-1/2}\right)\\
&= -\phi_0(G_0^{-1/2})+\sum_{k=0}^{K-1}\phi_k\left(G_{t_k}^{-1/2}\right)\\
&\le -\phi_0(G_0^{-1/2}) + \sum_{k=0}^{K-1}\phi_k\left(G_{T}^{-1/2}\right)+\epsilon_k\,\,,
\end{align*}
where $G_{t_0}\defeq G_0$. Lastly, since 
\begin{align*}
\sum_{k=0}^{K-1}\phi_k\left(G_T^{-1/2}\right)=\tr G_T^{1/2}+\tr \pa{G_T^{-1/2}\sum_{k=0}^{K-1}A_k}=2\tr G_T^{1/2},
\end{align*}
we conclude with the desired bound for $R_T$.
\end{proof}

\begin{proof}[Proof of Lemma~\ref{lem:ftlbtl-error}]
By induction on $K$. For $K=0$, $\phi_0(x_{-1})=\phi_0(x_0)$, holding by definition. Suppose that the hypothesis now holds for $K$; it then holds for $K+1$.
\begin{align*}
    \sum_{k=0}^{K+1}\phi_k(x_{K+1})&=\sum_{k=0}^{K}\phi_k(x_{K+1})+\phi_{K+1}(x_{K+1})\\
    &\ge \phi_{K+1}(x_{K+1})+ \sum_{k=0}^{K}\phi_k(x_{K})\\
    &\ge \phi_{K+1}(x_{K}) - \epsilon_{K+1} + \sum_{k=0}^{K}\phi_k(x_{K})\\
    &\ge \phi_{K+1}(x_{K}) - \epsilon_{K+1} + \sum_{k=0}^{K}\phi_k(x_{k-1})-\epsilon_k\\
    &\ge \sum_{k=0}^{K+1}\phi_k(x_{k-1})-\epsilon_k\,\,.
\end{align*}

\end{proof}

\subsection{Simplifying the error}

We want to simplify the term $\epsilon_k$, which is given by
\begin{align*}
\epsilon_k&=\tr\pa{ G_{t_k}^{-1/2}S_k G_{t_k}^{-1/2} A_k}\,\,,\\
S_k&=\int_0^\infty \exp\pa{-\tau G_{t_k}^{1/2}}A_k\exp\pa{-\tau G_{t_k}^{1/2}} \d{\tau}\,\,,\\
A_k&=G_{t_{k+1}}-G_{t_k}\,\,. 
\end{align*}
Next, notice that $X$ and $\exp\pa{-\alpha X^{-1}}$ commute. Then along with linearity of trace, we can established that
\begin{align*}
\epsilon_k=\int_0^\infty \tr\ha{\pa{ \exp\pa{-\tau G_{t_k}^{1/2}}G_{t_k}^{-1/2}A_k}^2}\d{\tau} \,\,.
\end{align*}

\subsection{Towards simpler error}

$\epsilon_k$ can be further simplified and bounded under additional assumptions. Namely,

\begin{assumption}
\label{assum1}
Suppose that w.p. at least $1-\delta/2K$ for some fixed, universal $\beta>0$, we have the inequality $A_k\preceq \beta G_{t_k}$ where $A_k\defeq G_{t_{k+1}}-G_{t_k}$.
\end{assumption}

\begin{assumption}\label{assum:eigs}
Suppose that w.p. at least $1-\delta/2K$, $G_{t_k}$'s are $(\sigma_{\min},\sigma_{\max})$-well-conditioned, i.e. 
\[\lambda_{d}(G_{t_k})\ge \sigma_{\min} t_k \ \ \ \text{and} \ \ \ \lambda_{1}(G_{t_k})\le \sigma_{\max} t_k.\]
\end{assumption}

\begin{remark}
As an example, consider the stochastic linear setting where at each iteration we receive a loss function $\langle g_t,x\rangle$, and $g_t$'s are independent, though not necessarily identically distributed, and satisfies that $2\sigma_{\min}I\preceq \E[g_tg_t^{\top}]\preceq \frac{\sigma_{\max}}{2}I$ and $\|g_t\|_2\le \sqrt{\frac{\sigma_{\max}}{2}}$ almost surely. Then, for $T$ sufficiently large and $t_{k+1}-t_k=O(\log T)$, by matrix Chernoff bounds Assumption~\ref{assum1} and \ref{assum:eigs} are satisfied. 
\end{remark}

First, $\forall X,Y\succeq 0$, the following inequality hold: 

\begin{lemma}\label{lem:tr2}
If $X\preceq Y$ and $A\succeq 0$, then $\tr\ha{\pa{AX}^2}\le \tr\ha{\pa{AY}^2}$.
\end{lemma}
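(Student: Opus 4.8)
The plan is to reduce the (non-symmetric) quantity $\tr[(AX)^2]$ to the Frobenius norm of a symmetrized matrix, and then invoke monotonicity of $P\mapsto\tr(P^2)$ on the positive semidefinite cone. The first step is to factor $A=A^{1/2}A^{1/2}$ with $A^{1/2}\succeq 0$ symmetric, and use the cyclic property of the trace to write
\[
\tr\bigl[(AX)^2\bigr]=\tr\bigl[A^{1/2}\,(A^{1/2}XA^{1/2})\,A^{1/2}X\bigr]=\tr\bigl[(A^{1/2}XA^{1/2})^2\bigr],
\]
and likewise $\tr[(AY)^2]=\tr[(A^{1/2}YA^{1/2})^2]$. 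Setting $P:=A^{1/2}XA^{1/2}$ and $Q:=A^{1/2}YA^{1/2}$, we have $P,Q\succeq 0$ since $X,Y\succeq 0$ and congruence preserves the Loewner order, and from $X\preceq Y$ the same congruence invariance gives $P\preceq Q$.

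It then remains to establish the scalar fact that $\tr(P^2)\le\tr(Q^2)$ whenever $0\preceq P\preceq Q$. I would do this by writing
\[
\tr(Q^2)-\tr(P^2)=\tr\bigl[(Q-P)(Q+P)\bigr],
\]
valid because the cross terms $\tr(QP)$ and $\tr(PQ)$ cancel by cyclicity of the trace. Since $Q-P\succeq 0$ and $Q+P\succeq 0$, and the trace of a product of two positive semidefinite matrices is nonnegative — e.g.\ $\tr[(Q-P)(Q+P)]=\tr[(Q-P)^{1/2}(Q+P)(Q-P)^{1/2}]\ge 0$ — the desired inequality follows, and chaining back through the identities above completes the proof.

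The one point that needs care, and the only real obstacle, is that $AX$ itself need not be symmetric (nor positive semidefinite), so one cannot directly reason about its eigenvalues or about $\|AX\|_F$ being monotone; it is precisely the symmetrization through $A^{1/2}$ together with the cyclic trace identity that legitimizes the reduction. Everything else — congruence invariance of $\preceq$, and nonnegativity of $\tr(MN)$ for $M,N\succeq0$ — is standard and can be cited or dispatched in one line.
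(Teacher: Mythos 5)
Your proof is correct and takes a genuinely different, arguably cleaner route than the paper's. The paper's argument also hinges on congruence-preservation of the Loewner order, but applies it twice in a bootstrap: first replacing the inner $X$ with $Y$ via $\tr(X^{1/2}AXAX^{1/2}) \le \tr(X^{1/2}AYAX^{1/2})$ (congruence by $AX^{1/2}$), then swapping $X\leftrightarrow Y$ through trace cyclicity plus transpose invariance to reach $\tr(Y^{1/2}AXAY^{1/2})$, and finally upgrading the remaining $X$ (congruence by $AY^{1/2}$). You instead pull the $A^{1/2}$ factors to both sides once, reducing everything to the single clean scalar fact that $P\mapsto\tr(P^2)$ is monotone on the PSD cone, which you dispatch with the difference-of-squares identity $\tr(Q^2)-\tr(P^2)=\tr\bigl[(Q-P)(Q+P)\bigr]\ge 0$. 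Your route isolates the essential monotonicity statement and handles the asymmetry of $AX$ in one step, whereas the paper's version stays closer to the raw expressions $X^{1/2}AXAX^{1/2}$ and does the replacement coordinate by coordinate. Both proofs silently strengthen the hypothesis to $X\succeq 0$ (necessary: $A=I$, $X=-I$, $Y=0$ is a counterexample otherwise); you invoke it when asserting $P\succeq 0$, the paper invokes it in its opening line, so neither is at fault given the lemma's use downstream where $X,Y$ are gradient covariance accumulations and hence PSD.
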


With Lemma~\ref{lem:tr2}, we can bound $\epsilon_k$. With probability at least $1-\delta/2K$,

\begin{align*}
\epsilon_k
&=\int_0^\infty \tr\ha{\pa{\exp\pa{-\tau G_{t_k}^{1/2}}G_{t_k}^{-1/2}A_k}^2}\d{\tau}\\
&\le\beta^2 \int_0^\infty \tr\ha{\pa{\exp\pa{-\tau G_{t_k}^{1/2}}G_{t_k}^{-1/2}G_{t_k}}^2}\d{\tau}\\
&=\beta^2 \int_0^\infty \tr\ha{\pa{\exp\pa{-\tau G_{t_k}^{1/2}}G_{t_k}^{1/2}}^2}\d{\tau}\\
&=\beta^2 \int_0^\infty \tr\pa{\exp\pa{-2\tau G_{t_k}^{1/2}}G_{t_k}}\d{\tau}\,\,,
\end{align*}
where the last step only holds since $X$ and $\exp\left(-\alpha X^{-{1/2}}\right)$ commute.

Next, let $\lambda_i$ denote the $i$-th largest eigenvalue and $\lambda_{-i}$ be the $i$-th smallest. Notice since $\exp\pa{-\tau G_{t_k}^{1/2}}$, $G_{t_k}^{1/2}$, and $G_{t_k}$ are simultaneously diagonalizable, and montonic matrix functions preserve eigenvalue ordering, we have
\begin{align*}
    \lambda_i\pa{\exp\pa{-2\tau G_{t_k}^{1/2}}}&=\exp\pa{-2\tau \lambda_{-i}\left(\pa{G_{t_k}}^{1/2}\right)}\,\,,\\
    \lambda_i\pa{\exp\pa{-2\tau G_{t_k}^{1/2}}G_{t_k}}&=\lambda_i\pa{\exp\pa{-2\tau G_{t_k}^{1/2}}}\lambda_i\pa{G_{t_k}}\,\,.
\end{align*}

Returning to our $\epsilon_k$ bound, rewriting the trace with eigenvalues, we have w.p. at least $1-\delta/2K$,
\begin{align*}
    \epsilon_k&\le_1 \beta^2 \int_0^\infty \sum_i\lambda_i\pa{\exp\pa{-2\tau G_{t_k}^{1/2}}G_{t_k}}\d{\tau}\\
    &=\beta^2\sum_i\lambda_i(G_{t_k})\int_0^\infty \exp\pa{-2\tau \lambda_{-i}(G_{t_k})^{1/2}}\d{\tau}\\
    &=\beta^2\sum_i\frac{\lambda_i(G_{t_k})}{2\lambda_{-i}(G_{t_k})^{1/2}}\,\,,
\end{align*}
where $\le_1$ follows from Tonelli's Theorem.
At this point, we apply Assumption~\ref{assum:eigs} and get that w.p. at least $1-\delta/K$,
\begin{align*}
  \epsilon_k&\le\frac{\beta^2}{\sqrt{t_k\sigma_{\min}}}\sum_{i}\lambda_i(G_{t_k})\\
&\leq\frac{\beta^2}{\sqrt{t_k\sigma_{\min}}}\sum_{i}(t_k\sigma_{\max})^{1/2}\lambda_i(G_{t_k})^{1/2}\\
&=\beta^2\sqrt{\frac{\sigma_{\max}}{\sigma_{\min}}} \sum_{i}\lambda_{i}(G_{t_k})^{1/2}\\
&=\beta^2\sqrt{\frac{\sigma_{\max}}{\sigma_{\min}}}\tr G_{t_k}^{1/2}\,\,.
\end{align*}

Across all epochs, we then have w.p. at least $1-\delta$,
\[\frac{1}{\beta^2}\sqrt{\frac{\sigma_{\min}}{\sigma_{\max}}}\sum_{k=1}^K\epsilon_k\le \sum_{k=1}^K\tr G_{t_k}^{1/2}\le \log T \tr G_T^{1/2}\,\,.\]
Altogether, since $\beta$ is a universal constant, w.p. at least $1-\delta$,
\[
R_T\lesssim \frac{D^2}{\eta}\tr G_T^{1/2}+\eta \sqrt{\frac{\sigma_{\max}}{\sigma_{\min}}}\log T\tr G_T^{1/2}\,\,.
\]
We conclude that in this case, the time dependency of Epoch AdaGrad's regret is only $\log T$ factor worse than that of the original AdaGrad regret.
\subsubsection{Proof of Lemma~\ref{lem:tr2}}
First, for $0\preceq X\preceq Y$, $BXB\preceq BYB$, $\forall B$, since $(Bx)^\top (Y-X)(Bx)\ge 0$, $\forall x$. By cyclic property of trace and taking $B=A^{1/2}X$,
\begin{align*}
    \tr\pa{AXAX}=\tr\pa{X^{1/2}AXAX^{1/2}}\le \tr\pa{X^{1/2}AYAX^{1/2}}\,\,.
\end{align*}
 Continuing,
\begin{align*}
    \tr\ha{\pa{AX}^2}&\le \tr\pa{X^{1/2}AYAX^{1/2}}\\
    &= \tr\pa{Y^{1/2}AXAY^{1/2}}\\
    &\le \tr\pa{Y^{1/2}AYAY^{1/2}}\\
    &=\tr\ha{\pa{AY}^2}\,\,.
\end{align*}

\end{document}